\definecolor{linkcolor}{HTML}{1f77b4}
\begin{document}

\title{Posterior Contraction Rates for Matérn Gaussian Processes on Riemannian Manifolds}

\author{\clap{Paul Rosa}\\\clap{University of Oxford}\And\clap{Viacheslav Borovitskiy}\\\clap{ETH Zürich}\AND\clap{Alexander Terenin}\\\clap{University of Cambridge}\\\clap{and Cornell University}\And\clap{Judith Rousseau}\\\clap{University of Oxford}}

\maketitle

\begin{abstract}
Gaussian processes are used in many machine learning applications that rely on uncertainty quantification.
Recently, computational tools for working with these models in geometric settings, such as when inputs lie on a Riemannian manifold, have been developed.
This raises the question: can these intrinsic models be shown theoretically to lead to better performance, compared to simply embedding all relevant quantities into $\mathbb{R}^d$ and using the restriction of an ordinary Euclidean Gaussian process?
To study this, we prove optimal contraction rates for intrinsic Matérn Gaussian processes defined on compact Riemannian manifolds.
We also prove analogous rates for extrinsic processes using trace and extension theorems between manifold and ambient Sobolev spaces: somewhat surprisingly, the rates obtained turn out to coincide with those of the intrinsic processes, provided that their smoothness parameters are matched appropriately.
We illustrate these rates empirically on a number of examples, which, mirroring prior work, show that intrinsic processes can achieve better performance in practice.
Therefore, our work shows that finer-grained analyses are needed to distinguish between different levels of data-efficiency of geometric Gaussian processes, particularly in settings which involve small data set sizes and non-asymptotic behavior.
\end{abstract}

\section{Introduction}

Gaussian processes provide a powerful way to quantify uncertainty about unknown regression functions via the formulation of Bayesian learning.
Motivated by applications in the physical and engineering sciences, a number of recent papers \cite{borovitskiy2020, borovitskiy2021, borovitskiy2023, Niu2018IntrinsicGP} have studied how to extend this model class to spaces with geometric structure, in particular Riemannian manifolds including important special cases such as spheres and Grassmannians \cite{azangulov2022}, hyperbolic spaces and spaces of positive definite matrices \cite{azangulov2023}, as well as general manifolds approximated numerically by a mesh \cite{borovitskiy2020}.

These Riemannian Gaussian process models are starting to be applied for statistical modeling, and decision-making settings such as Bayesian optimization.
For example, in a robotics setting, \textcite{jaquier2022} has shown that using Gaussian processes with the correct geometric structure allows one to learn quantities such as the orientation of a robotic arm with less data compared to baselines.
The same model class has also been used by \textcite{coveney2020} to perform Gaussian process regression on a manifold which models the geometry of a human heart for downstream applications in medicine.

Given these promising empirical results, it is important to understand whether these learning algorithms have good theoretical properties, as well as their limitations.
Within the Bayesian framework, a natural way to quantify data-efficiency and generalization error is to posit a data-generating mechanism model and study if---and how fast---the posterior distribution concentrates around the true regression function as the number of observations goes to infinity.

Within the Riemannian setting, it is natural to compare \emph{intrinsic} methods, which are formulated directly on the manifold of interest, with \emph{extrinsic} ones, which require one to embed the manifold within a higher-dimensional Euclidean space.
For example, the two-dimensional sphere can be embedded into the Euclidean space $\R^3$: intrinsic Gaussian processes model functions on the sphere while extrinsic ones model functions on $\R^3$, which are then restricted to the sphere.
Are the former more efficient than the latter?
Since embeddings---even isometric ones---at best only preserve distances locally, they can induce spurious dependencies, as points can be close in the ambient space but far away with respect to the intrinsic geodesic distance: this is illustrated in~\Cref{fig:matern-gp-samples}.
In cases where embeddings significantly alter distances, one can expect intrinsic models to perform better, and it is therefore interesting to quantify such differences.

In other settings, the manifold on which the data lies can be unknown, which makes using intrinsic methods directly no longer possible.
There, one would like to understand how well extrinsic methods can be expected to perform.
According to the \emph{manifold hypothesis} \cite{fefferman2013testing}, it is common for perceptual data such as text and images to concentrate on a lower-dimensional submanifold within, for instance, pixel space or sequence space.
It is therefore also interesting to investigate how Gaussian process models---which, being kernel-based, are simpler than for instance deep neural networks---perform in such scenarios, at least in the asymptotic regime.

\begin{figure}[t]
\begin{subfigure}[b]{0.245\textwidth}
\includegraphics[scale=0.25,trim=0 -30 0 0,clip]{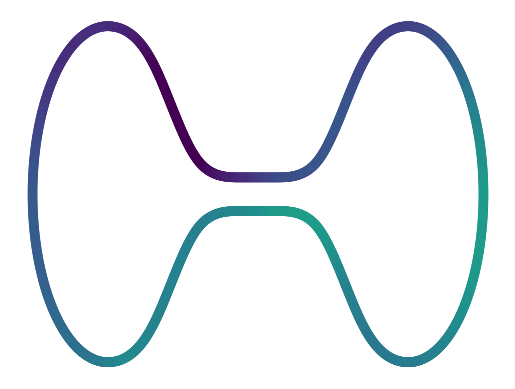}
\caption{Intrinsic}
\end{subfigure}
\hfill
\begin{subfigure}[b]{0.245\textwidth}
\includegraphics[scale=0.25,trim=0 -30 0 0,clip]{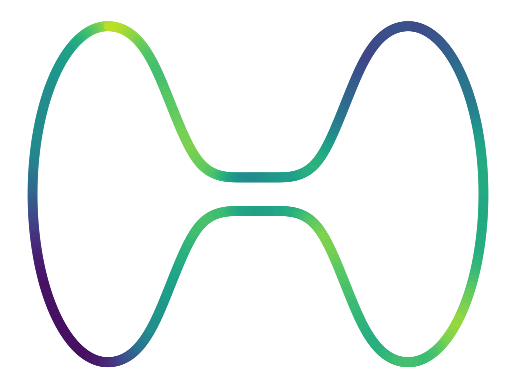}
\caption{Extrinsic}
\end{subfigure}
\hfill
\begin{subfigure}[b]{0.245\textwidth}
\hfill
\includegraphics[scale=0.25]{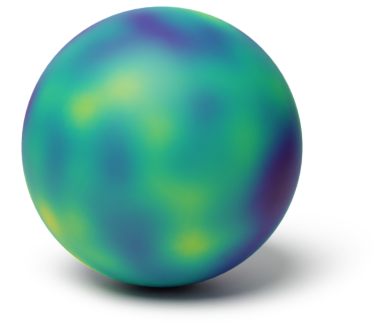}
\caption{Intrinsic}
\end{subfigure}
\hfill
\begin{subfigure}[b]{0.245\textwidth}
\hfill
\includegraphics[scale=0.25]{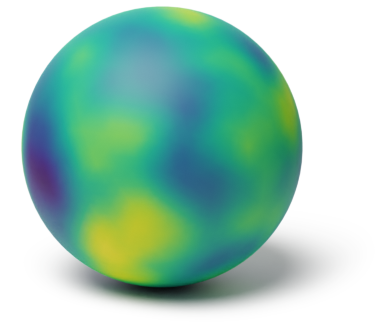}
\caption{Extrinsic}
\end{subfigure}
\caption{
Samples from different Matérn Gaussian processes on different manifolds, namely a one-dimensional dumbbell-shaped manifold and a two-dimensional sphere.
Notice that the values across the dumbbell's bottleneck can be very different for the intrinsic process in (a), despite being very close in the ambient Euclidean distance and in contrast to the situation for the extrinsic model in (b).
On the other hand, there is little qualitative difference between (c) and (d), since the embedding produces a reasonably-good global approximation to geodesic distances on the sphere.}
\label{fig:matern-gp-samples}
\end{figure}

In this work, we develop geometric analogs of the Gaussian process posterior contraction theorems of \textcite{JMLR:v12:vandervaart11a}.
More specifically, we derive posterior contraction rates for three main geometric model classes: (1)~the intrinsic Riemannian Matérn Gaussian processes, (2)~truncated versions of the intrinsic Riemannian Matérn Gaussian processes, which are used in practice to avoid infinite sums, and (3)~the extrinsic Euclidean Matérn Gaussian processes under the assumption that the data lies on a compact Riemannian manifold.
In all cases, we focus on IID randomly-sampled input points---commonly referred to as \emph{random design} in the literature---and contraction in the sense of the $L^2(p_0)$ distance, defined in \Cref{sec:background}.
We focus on \emph{compact} Riemannian manifolds: this allows one to define Matérn Gaussian processes through their Karhunen--Loève expansions, which requires a discrete spectrum for the Laplace-Beltrami operator---see for instance \textcite{borovitskiy2020} and \textcite[Chapter 1]{chavel1984}---and is a common setting in statistics \cite{porcu2023}.

\paragraph{Contributions.}
We show that all three classes of Gaussian processes lead to optimal procedures, in the minimax sense, as long as the smoothness parameter of the kernel is aligned with the regularity of the unknown function.
While this result is natural---though non trivial---in the case of intrinsic Matérn processes, it is rather remarkable that it also holds for extrinsic ones.
This means that in order to understand their differences better, finite-sample considerations are necessary.
We therefore present experiments that compute the worst case errors numerically.
These experiments highlight that intrinsic models are capable of achieving better performance in the small-data regime.
We conclude with a discussion of why these results---which might at first seem counterintuitive---are very natural when viewed from an appropriate mathematical perspective: they suggest that optimality is perhaps best seen as a basic property or an important guarantee that any sensible model should satisfy.

\section{Background}
\label{sec:background}

\emph{Gaussian process regression} is a Bayesian approach to regression where the modeling assumptions are $y_i = f(x_i) + \eps_i$, with $\eps_i \~[N](0, \sigma_{\eps}^2)$, $x_i\in X$, and $f$ is assigned a Gaussian process prior.
A \emph{Gaussian process} is a random function $f : X \-> \R$ for which all finite-dimensional marginal distributions are multivariate Gaussian.
The distribution of such a process is uniquely determined by its \emph{mean function} $m(\.) = \E(f(\.))$ and \emph{covariance kernel} $k(\.,\.') = \Cov(f(\.),f(\.'))$, hence we write $f \~[GP](m,k)$.

For Gaussian process regression, the posterior distribution given the data is also a Gaussian process with probability kernel $\Pi(\.\given \v{x}, \v{y}) = \f{GP}(m_{\Pi(\.\given \v{x}, \v{y})}, k_{\Pi(\.\given \v{x}, \v{y})})$, see \textcite{rasmussen2006},
\[
\label{eqn:posterior_mean}
m_{\Pi(\.\given \v{x}, \v{y})}(\.) = \m{K}_{(\.)\v{x}}(\m{K}_{\v{x}\v{x}} + \sigma_{\eps}^2\m{I})^{-1}\v{y},
\\
k_{\Pi(\.\given \v{x}, \v{y})}(\.,\.') = \m{K}_{(\.,\.')} - \m{K}_{(\.)\v{x}}(\m{K}_{\v{x}\v{x}} + \sigma_{\eps}^2\m{I})^{-1}\m{K}_{\v{x}(\.')}
.
\]
These quantities describe how incorporating data updates the information contained within the Gaussian process.
We will be interested studying the case where $X$ is a Riemannian manifold, but first review the existing theory on the asymptotic behaviour of the posterior when $X = [0,1]^d$.

\subsection{Posterior Contraction Rates}

Posterior contraction results describe how the posterior distribution concentrates around the true data generating process, as the number of observations increases, so that it eventually uncovers the true data-generating mechanism.
The area of \emph{posterior asymptotics} is concerned with understanding conditions under which this does or does not occur, with questions of \emph{posterior contraction rates}---how fast such convergence occurs---being of key interest.
At present, there is a well-developed literature on posterior contraction rates, see \textcite{ghosal_van_der_vaart_2017} for a review.

In the context of Gaussian process regression with \emph{random design}, which is the focus of this paper, the true data generating process is assumed to be of the form
\[
\label{eqn:random-design}
y_i \given x_i &\~[N](f_0(x_i), \sigma_{\eps}^2)
&
x_i &\~ p_0
\]
where $f_0 \in \c{F} \subseteq \R^X$, a class of real-valued functions, and $\mathrm{N}(\mu, \sigma^2)$ denotes the Gaussian with moments $\mu, \sigma^2$.
Note that, in this particular variant, these equations exactly mirror those of the Gaussian process model's likelihood, including the use of the same noise variance $\sigma_{\eps}^2$ in both cases: in this paper, we focus on the particular case where $\sigma_\varepsilon$ is known in advance. 
This setting is restrictive, one can extend to an unknown $\sigma_\varepsilon>0$ using techniques that are not specific to our geometric setting: for instance, the approach of \cite{vdv_vanzanten_rescaled} allows to handle an unknown $\sigma_\varepsilon$ if one assumes an upper and lower bound on it and keep the same contraction rates.
In practice, more general priors, including ones that do not assume an upper or lower bound on $\sigma_\varepsilon$, can be used, such as a conjugate one like in \textcite{BANERJEE2020100417}---these can also be analyzed to obtain contraction rates, albeit with additional considerations.
The generalization error for prediction in such models is strongly related to the \emph{weighted $L^2$ loss} given by
\[
\norm{f-f_0}_{L^2 \del{p_0}} = \del{\int_X \abs{f(x)-f_0(x)}^2 \d p_0(x)}^{1/2}
\]
which is arguably the natural way of measuring discrepancy between $f$ and $f_0$, given the fact that the covariates $x_i$ are sampled from $p_0$.
The posterior contraction rate is then defined as 
\[
\E_{\v{x},\v{y}} \E_{f \~ \Pi(\cdot\given \v{x},\v{y})} \norm{f - f_0}_{L^2(p_0)}^2
\]
where 
$  \E_{f \~ \Pi(\cdot\given \v{x},\v{y})}(\cdot)$ denotes expectation under the posterior distribution while $\E_{\v{x},\v{y}} (\cdot ) $ denotes expectation under the true data generating process.\footnote{Note that other notions of posterior contraction can be found in the literature, see \textcite{ghosal_van_der_vaart_2017,Rousseau16} for examples that are slightly weaker than the definition we work with.}
In the case of covariates distributed on $[0,1]^d$, posterior contraction rates have been derived under Matérn Gaussian process priors \cite{stein1999interpolation} in \textcite{JMLR:v12:vandervaart11a}, who showed the following result.

\begin{result}[Theorem 2 of \textcite{JMLR:v12:vandervaart11a}] \label{result1}
In the Bayesian regression model, let $f$ be a mean-zero Matérn Gaussian process prior on $\R^d$ with amplitude $\sigma^2_f$, length scale $\kappa$, and smoothness $\nu>d/2$.
Assume that the true data generating process is given by \eqref{eqn:random-design}, where 
 $p_0$ has a Lebesgue density on $X = [0,1]^d$ which is bounded from below and above by $0<c_{p_0} < C_{p_0}< \infty$, respectively.
Let $f_0 \in H^\beta \^ \c{CH}^{\beta}$ with $\beta > d/2$, where $H^\beta$ and $\c{CH}^{\beta}$ the Sobolev and Hölder spaces, respectively.
Then there exists a constant $C>0$, which does not depend on $n$ but does depend on $d$, $\sigma^2_f$, $\nu$, $\kappa$, $\beta$, $p_0$, $\sigma_{\eps}^2$, $\norm{f_0}_{H^\beta \del{\c{M}}}$, and $\norm{f_0}_{\c{CH}^\beta\del{\c{M}}}$, such that
\[
\label{eqn:euclidean-rate}
\E_{\v{x},\v{y}} \E_{f \~ \Pi(\cdot\given \v{x},\v{y})} \norm{f - f_0}_{L^2(p_0)}^2 \leq C n^{-\tfrac{2\min(\beta,\nu)}{2\nu+d}}
\] 
and, moreover, the posterior mean satisfies
\[
\label{eqn:euclidean-rate-error}
\E_{\v{x},\v{y}} \, \norm[0]{m_{\Pi(\.\given \v{x}, \v{y})} - f_0}_{L^2(p_0)}^2\leq C n^{-\tfrac{2\min(\beta,\nu)}{2\nu+d}}
.
\]
\end{result}

Note that $m_{\Pi(\.\given \v{x}, \v{y})}$ is the Bayes estimator \cite{tsybakov} of $f$ associated to the weighted $L^2$ loss and that the second inequality above is a direct consequence of the first.
Therefore the posterior contraction rate implies the same convergence rate for $m_{\Pi(\.\given \v{x}, \v{y})}$.
The best rate is attained when $\beta = \nu$: that is, when true smoothness and prior smoothness match---which is known to be minimax optimal in the problem of estimating $f_0$: see \textcite{tsybakov}. 
In this paper, we extend this result to the manifold setting.

\subsection{Related Work and Current State of Affairs}

The formalization of posterior contraction rates of Bayesian procedures dates back to the work of \textcite{Schwartz1965OnBP} and \textcite{le_cam}, but has been extensively developed since the seminal paper of \textcite{cv_rates_ghosal_vdv2000} for various sampling and prior models, see for instance \cite{ghosal_van_der_vaart_2017, Rousseau16} for reviews.
This includes, in particular, work on Gaussian process priors \cite{vandervaart2009,JMLR:v12:vandervaart11a,van_der_Vaart_2008,rousseau2016asymptotic,nugget1}.
Most of the results in the literature, however, assume Euclidean data: as a consequence, contraction properties of Bayesian models under manifold assumptions are still poorly understood, with exception of some recent developments in both density estimation \cite{berenfeld2022estimating,Bhattacharya2010NonparametricBD,wang2015nonparametric} and regression \cite{bayesian_manifold_regression,wang2015nonparametric,wang2015nonparametric}.

The results closest to ours are those of \textcite{bayesian_manifold_regression,thomas_bayes_walk}.
In the former, the authors use an extrinsic length-scale-mixture of squared exponential Gaussian processes to achieve optimal contraction rates with respect to the weighted $L^2$ norm, using a completely different proof technique compared to us, and their results are restricted to $f_0$ having Hölder smoothness of order less than or equal to two.
On the other hand \textcite{thomas_bayes_walk} consider, as an intrinsic process on the manifold, a hierarchical Gaussian process based on its heat kernel and provide posterior contraction rates.
For the Matérn class, \textcite{li2021} presents results which characterize the asymptotic behavior of kernel hyperparameters: our work complements these results by studying contraction of the Gaussian process itself toward the unknown ground-truth function.
One can also study analogous discrete problems: \textcite{dunson2021graph, sanzalonso2021unlabeled} present posterior contraction rates for a specific graph Gaussian process model in a semi-supervised setting.
In the next section, we present our results on Matérn processes, defined either by restriction of an ambient process or by an intrinsic construction, and discuss their implications.

\section{Posterior Contraction Rates on Compact Riemannian Manifolds}
\label{sec:results}

We now study posterior contraction rates for  Matérn Gaussian processes on manifolds,  which are arguably the most-widely-used Gaussian process priors in both the Euclidean and Riemannian settings.
We begin by more precisely describing our geometric setting  before stating our key results and discussing their implications.
From now on, we write $X = \c{M} $, to emphasize that the covariate space is a manifold.

\begin{assumption}\label{assumption:manifold}
Assume that $\c{M} \subset \R^D$ is a smooth, compact submanifold (without boundary) of dimension $d < D$ equipped with the standard Riemannian volume measure $\mu$.
\end{assumption}

We denote  $|\c{M}| = \int_{\c{M}}d\mu(x) $ for volume of $\c{M}$.
With this geometric setting defined, we will need to describe regularity assumptions in terms of functional spaces on the manifold $\c{M}$.
We work with Hölder spaces $\c{CH}^\gamma \del{\c{M}}$, defined using charts via the usual Euclidean Hölder spaces, the Sobolev spaces $H^s \del{\c{M}}$, and Besov spaces $B_{\infty, \infty}^s \del{\c{M}}$ which are one of the ways of generalizing the Euclidean Hölder spaces of smooth functions to manifolds.
We follow \textcite{coulhon2012heat,thomas_bayes_walk}, and define these spaces using the Laplace-Beltrami operator on $\c{M}$ in \Cref{appdx:function_spaces}.

Recall that the data-generating process is given by \eqref{eqn:random-design}, with $f_0$ as the true regression function and $p_0$ as the distribution of the covariates.

\begin{assumption}\label{assumption:data_generating_process}
Assume that $p_0$ is absolutely continuous with respect to $\mu$, and that its density, denoted by $p_0$, satisfies $c \leq p_0 \leq C$ for $0 < c,C < \infty$.
Assume the regression function $f_0$ satisfies $f_0 \in H^\beta \del{\c{M}} \cap B_{\infty, \infty}^\beta \del{\c{M}}$ for some $\beta > d/2$, and that $\sigma_{\eps}^2>0$ is fixed and known.
\end{assumption}

This setting can be extended to handle unknown variance $\sigma_{\eps}$ by putting a prior on $\sigma_{\eps}$, following the strategy of  \textcite{salomond18, naulet:barat13}.
Since we are focused primarily on the impact of the manifold, we do not pursue this here.
With the setting fully defined, we proceed to develop posterior contraction results for different types of Matérn Gaussian process priors: intrinsic, intrinsic truncated and extrinsic.

\begin{figure}[t]
\begin{subfigure}[b]{0.245\textwidth}
\includegraphics[scale=0.25]{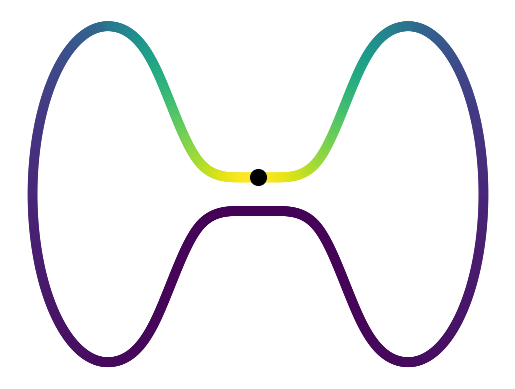}
\caption{Intrinsic}
\end{subfigure}
\hfill
\begin{subfigure}[b]{0.245\textwidth}
\includegraphics[scale=0.25]{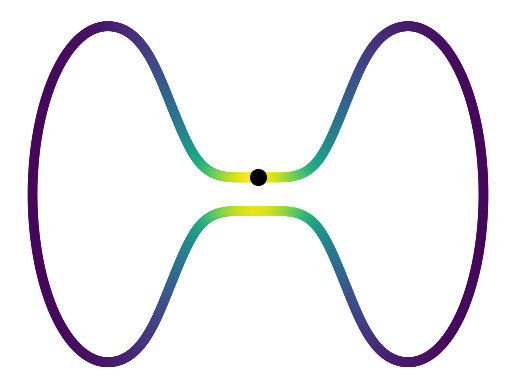}
\caption{Extrinsic}
\end{subfigure}
\hfill
\begin{subfigure}[b]{0.245\textwidth}
\includegraphics[scale=0.25]{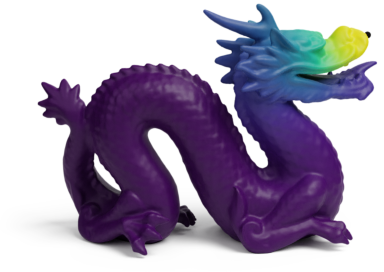}
\caption{Intrinsic}
\end{subfigure}
\hfill
\begin{subfigure}[b]{0.245\textwidth}
\includegraphics[scale=0.25]{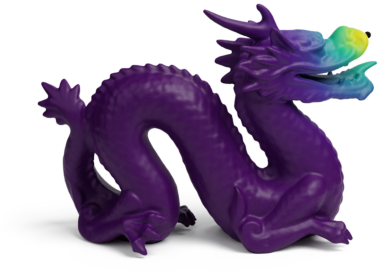}
\caption{Extrinsic}
\end{subfigure}
\caption{Different Matérn kernels $k(\bullet,x)$ on different manifolds.}
\label{fig:matern-kernel-manifold}
\end{figure}

\subsection{Intrinsic Matérn Gaussian Processes}
\label{sec:intrinsic-matern}

We now introduce the first geometric Gaussian process prior under study---the Riemannian Matérn kernel of \textcite{whittle1963,lindgren2011,borovitskiy2020}.
This process was originally defined using stochastic partial differential equations: here, we present it by its Karhunen--Loève expansion, to facilitate comparisons with its truncated analogs presented in \Cref{sec:truncated-matern}.

\begin{definition}[Intrinsic Matérn prior]\label{def:intrinsic_matern}
Let $\nu > 0$, and let $(\lambda_j, f_j)_{j \geq 0}$ be the eigenvalues and orthonormal eigenfunctions of the Laplace--Beltrami operator on $\c{M}$, in increasing order.
Define the intrinsic \emph{Riemannian Matérn Gaussian process} through its Karhunen--Loève expansion to be
\[ \label{def:matern}
f(\.) &= \frac{\sigma_f^2}{C_{\nu, \kappa}}\sum_{j=1}^\infty \del{\frac{2 \nu}{\kappa^2}+\lambda_j}^{- \frac{\nu+d/2}{2}}\xi_j f_j(\.)
&
\xi_j &\~[N] \del{0,1}
\]
where $\nu, \kappa, \sigma_f^2$ are positive parameters and $C_{\nu, \kappa}$ is the normalization constant, chosen such that $\frac{1}{|\c{M}|} \int_M \Var\del{f(x)} \d \mu(x) = \sigma_f^2$, where $\Var$ denotes the variance.
\end{definition}

The covariance kernels of these processes are visualized in \Cref{fig:matern-kernel-manifold}.
With this prior, and the setting defined in \Cref{sec:results}, we are ready to present our first result: this model attains the desired optimal posterior contraction rate as soon as the regularity of the ground-truth function matches the regularity of the Gaussian process, as described by the parameter $\nu$.

\begin{restatable}{theorem}{ThmIntrinsic}\label{thm:intrinsic}
Let $f$ be a Riemannian Matérn Gaussian process prior of \Cref{def:intrinsic_matern} with smoothness parameter $\nu > d/2$ and let $f_0$ satisfy \Cref{assumption:data_generating_process}.
Then there is a $C>0$ such that
\[ \label{eqn:intrinsic_bound}
\E_{\v{x},\v{y}} \E_{f \~ \Pi(\cdot\given \v{x},\v{y})} \norm{f - f_0}_{L^2(p_0)}^2 \leq C n^{-\tfrac{2\min(\beta,\nu)}{2\nu+d}}
.
\]
\end{restatable}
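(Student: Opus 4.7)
My plan is to apply the general Gaussian process posterior contraction framework of \textcite{van_der_Vaart_2008, JMLR:v12:vandervaart11a}, which reduces the problem to controlling the concentration function of the prior
\[
\phi_{f_0}(\epsilon) = \inf_{h\in\mathcal{H}:\,\|h-f_0\|_\infty < \epsilon} \tfrac12\|h\|_{\mathcal{H}}^2 \;-\; \log P(\|f\|_\infty < \epsilon),
\]
where $\mathcal{H}$ is the reproducing kernel Hilbert space of $f$. If $\epsilon_n$ solves $\phi_{f_0}(\epsilon_n) \le n \epsilon_n^2$, and the prior-mass, sieve and testing conditions of the framework are verified, posterior contraction at rate $\epsilon_n$ in $\|\cdot\|_\infty$ follows; since $p_0$ is bounded and $\c{M}$ has finite volume, this transfers to the $\|\cdot\|_{L^2(p_0)}$ rate claimed in \eqref{eqn:intrinsic_bound}.

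There are three quantitative ingredients, all flowing from spectral analysis on $\c{M}$. First, the RKHS $\mathcal{H}$ of the process in \Cref{def:intrinsic_matern} equals, up to an equivalent norm, the Sobolev space $H^{\nu+d/2}(\c{M})$: the spectral coefficients $(2\nu/\kappa^2 + \lambda_j)^{-(\nu+d/2)/2}$ are precisely the weights associated with the operator $(\mathrm{Id}-\Delta)^{-(\nu+d/2)/2}$ that defines that Sobolev space via the Laplace--Beltrami calculus. Second, by Weyl's law $\lambda_j \asymp j^{2/d}$ on compact manifolds, combined with a sup-norm bound on the Laplace eigenfunctions $f_j$ (e.g.\ Hörmander's $\|f_j\|_\infty \lesssim \lambda_j^{(d-1)/4}$), one obtains a metric-entropy estimate for the unit ball of $\mathcal{H}$ in $L^\infty(\c{M})$, from which the Kuelbs--Li small-ball theorem yields $-\log P(\|f\|_\infty < \epsilon) \lesssim \epsilon^{-d/\nu}$. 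Third, for the approximation term I would truncate $f_0 = \sum_j \langle f_0, f_j\rangle f_j$ at a level $J$ and set $h_J = \sum_{j\le J} \langle f_0, f_j\rangle f_j \in \mathcal{H}$: the Besov regularity $f_0 \in B^\beta_{\infty,\infty}(\c{M})$ controls $\|f_0-h_J\|_\infty$ in terms of $J$, while the Sobolev regularity $f_0 \in H^\beta(\c{M})$ controls $\|h_J\|_{\mathcal{H}}^2$ in terms of $J$; tuning $J$ to balance them produces the bound $\phi_{f_0}(\epsilon) \lesssim \epsilon^{-d/\nu} + \epsilon^{-(2\nu+d-2\beta)/\beta}$, and solving $\phi_{f_0}(\epsilon_n) \le n\epsilon_n^2$ gives exactly $\epsilon_n = n^{-\min(\beta,\nu)/(2\nu+d)}$.

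The main obstacle is that the Euclidean proof of \textcite{JMLR:v12:vandervaart11a} rests on translation-invariant Fourier analysis and the Bernstein-type inequalities it provides, which are unavailable on a general Riemannian manifold. These must be replaced by global spectral analysis of $-\Delta$, with careful attention to the intrinsic spectral definitions of $H^\beta(\c{M})$ and $B^\beta_{\infty,\infty}(\c{M})$ used in the paper following \textcite{coulhon2012heat}. Once the spectral RKHS identification, Weyl's law, and eigenfunction sup-norm estimates are in hand, the prior-mass, sieve, and testing steps are abstract and require only a separable Banach-space ambient setting together with compactness of $\c{M}$, both of which are immediate. The remaining argument then parallels the proof of Theorem 2 of \textcite{JMLR:v12:vandervaart11a}, with the spectral ingredients substituted for their Fourier counterparts throughout.
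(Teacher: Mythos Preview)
Your overall architecture---identify the RKHS as $H^{\nu+d/2}(\c{M})$, bound the small-ball probability via metric entropy, bound the approximation term by spectral truncation, and then invoke the van der Vaart--van Zanten machinery---is exactly the paper's strategy. Two points, however, need correction.

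\textbf{The passage to $L^2(p_0)$ is not a triviality.} Your statement that the concentration-function bound delivers ``posterior contraction at rate $\eps_n$ in $\norm{\cdot}_\infty$'', after which boundedness of $p_0$ transfers it to $L^2(p_0)$, misidentifies the intermediate norm. In the random-design regression model the testing/entropy machinery yields contraction only in the \emph{empirical} seminorm $\norm{\cdot}_n = (n^{-1}\sum_i (\cdot)(x_i)^2)^{1/2}$, not in $\norm{\cdot}_\infty$. Going from $\norm{\cdot}_n$ to $\norm{\cdot}_{L^2(p_0)}$ is the genuinely delicate half of the argument and is where the paper spends most of its effort: it establishes that prior draws lie in $\c{CH}^\gamma(\c{M})$ for $\gamma<\nu$ with sub-Gaussian Hölder norm (via a manifold Kolmogorov continuity criterion pushed through charts), proves an extrapolation inequality $\norm{g}_{L^\infty}\lesssim \norm{g}_{\c{CH}^\gamma}^{d/(2\gamma+d)}\norm{g}_{L^2(p_0)}^{2\gamma/(2\gamma+d)}$, and then combines these with Bernstein's inequality to control $\P(\norm{f-f_n}_{L^2(p_0)}\ge 2\norm{f-f_n}_n)$. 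Without this step you only have the empirical-norm result of the paper's Lemma following Theorem~4 (their \Cref{rate:empirical_l2}), not \eqref{eqn:intrinsic_bound}. Your closing remark that ``the remaining argument parallels Theorem~2 of \textcite{JMLR:v12:vandervaart11a}'' is correct in spirit, but that theorem's proof already contains precisely this nontrivial $\norm{\cdot}_n\to\norm{\cdot}_{L^2(p_0)}$ machinery, which must be redeveloped on the manifold; it is not a one-line transfer.

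\textbf{Sharp versus smooth spectral truncation.} For the approximation term you propose the hard cutoff $h_J=\sum_{j\le J}\langle f_0,f_j\rangle f_j$ and assert that $f_0\in B^\beta_{\infty,\infty}(\c{M})$ controls $\norm{f_0-h_J}_\infty$. With the paper's definition of $B^\beta_{\infty,\infty}$ via the smooth multipliers $\Phi_j(\sqrt{\Delta})$, what is controlled directly is $\norm{f_0-\Phi_j(\sqrt{\Delta})f_0}_\infty$, and this is the approximant the paper actually uses. Sharp spectral projectors are not, in general, uniformly bounded on $L^\infty$, so the sup-norm bound for the hard cutoff does not follow immediately from the Besov assumption; you would need an additional argument (or simply switch to $\Phi_j(\sqrt{\Delta})f_0$, which lies in the RKHS for the same reason and makes the $L^\infty$ bound a tautology). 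Your entropy route via Hörmander's $\norm{f_j}_\infty\lesssim\lambda_j^{(d-1)/4}$ is a legitimate alternative to the paper's chart-based reduction to Euclidean Sobolev entropy, though the latter is more direct.
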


All proofs are given in \Cref{appdx:proofs}. 
Our proof follows the general approach of \textcite{JMLR:v12:vandervaart11a}, by first proving a contraction rate with respect to the distance $n^{-1/2} \norm{f(
\v{x}) - f_0(\v{x})}_{\R^n}$ at input locations $\v{x}$, and then extending the result to the true $L^2$-distance by applying a suitable concentration inequality.
The first part is obtained by studying the \emph{concentration function}, which is known to be the key quantity to control in order to derive contraction rates of Gaussian process priors---see \textcite{ghosal_van_der_vaart_2017,van_der_Vaart_2008} for an overview.
Given our regularity assumptions on $f_0$, the most difficult part lies in controlling the small-ball probabilities $\Pi \sbr[1]{\norm{f}_{\c{C}(\c{M})} < \eps}$: we handle this by using results relating this quantity with the entropy of an RKHS unit ball with respect to the uniform norm.
Since our process' RKHS is related to the Sobolev space $H^{\nu + d/2} \del{\c{M}}$ which admits a description in terms of charts, we apply results on the entropy of Sobolev balls in the Euclidean space to conclude the first part.
Finally, to extend the rate to the true $L^2(p_0)$ norm, following \textcite{JMLR:v12:vandervaart11a}, we prove a Hölder-type property for manifold Matérn processes, and apply Bernstein's inequality.
Together, this gives the claim.

This result is good news for the intrinsic Matérn model: it tells us that asymptotically it incorporates the data as efficiently as possible at least in terms of posterior contraction rates, given that its regularity matches the regularity of $f_0$.
An inspection of the proof shows that the constant $C>0$ can be seen to depend on $d$,$\sigma_f^2$, $\nu$, $\kappa$, $\beta$, $p_0$,$\sigma_\varepsilon^2$, $\norm{f_0}_{H^\beta \del{\c{M}}}$, $\norm{f_0}_{B_{\infty\infty}^\beta \del{\c{M}}}$, and $\norm{f_0}_{\c{CH}^\beta\del{\c{M}}}$.
\Cref{thm:intrinsic} extends to the case where the norm is raised to any power $q > 1$ rather than the second power, with the right-hand side raised to the same power: see \Cref{appdx:proofs} for details.
We now consider variants of this prior that can be implemented in practice.

\subsection{Truncated Matérn Gaussian Processes}
\label{sec:truncated-matern}

The Riemannian Matérn prior's covariance kernel cannot in general be computed exactly, since \Cref{def:intrinsic_matern} involves an infinite sum.
Arguably the simplest way to implement these processes numerically is to truncate the respective infinite series in the Karhunen--Loève expansion by taking the first $J$ terms, which is also optimal in an $L^2(\c{M})$-sense.

Note that the truncated prior is a randomly-weighted finite sum of Laplace--Beltrami eigenfunctions, which have different smoothness properties compared to the original prior: the truncated prior takes its values in $\c{C}^\infty \del{\c{M}}$ since the eigenfunctions of $\c{M}$ are smooth---see for instance \textcite{rkhs_manifolds2022}.
Nevertheless, if the truncation level is allowed to grow as the sample size increases, then the regularity of the process degenerates and one gets a function with essentially-finite regularity in the~limit.

Truncated random basis expansions have been studied extensively in the Bayesian literature in the Euclidean setting---see for instance \textcite{arbel,yoo2017adaptive} or \textcite[Chapter 11]{ghosal_van_der_vaart_2017} for examples with priors based on wavelet expansions.
It is known that truncating the expansion at a high enough level usually allows one to retain optimality.
Instead of truncating deterministically, it is also possible to put a prior on the truncation level and resort to MCMC computations which would then select the optimal number of basis functions adaptively, at the expense of a more computationally intensive method---this is done, for instance, in \textcite{van_der_Meulen_2014} in the context of drift estimation for diffusion processes.
Random truncation has been proven to lead in many contexts to adaptive posterior contraction rates, meaning that although the prior does not depend on the smoothness $\beta$ of $f_0$, the posterior contraction rate---up to possible $\ln n$ terms---is of order $n^{-\beta/(2\beta +d)}$: see for instance \textcite{arbel,rousseau2016asymptotic}.

By analogy of the Euclidean case with its random Fourier feature approximations \cite{rahimi200}, we can call the truncated version of \Cref{def:intrinsic_matern} the \emph{manifold Fourier feature} model, for which we now present our result.

\begin{restatable}{theorem}{ThmTruncatedIntrinsic}\label{thm:truncated_intrinsic}
Let $f$ be a Riemannian Matérn Gaussian process prior on $\c{M}$ with smoothness parameter $\nu > d/2$, modified to truncate the infinite sum to at least $J_n \geq c n^{\frac{d\del{\min(1, \nu/\beta)}}{2\nu+d}}$ terms, and let $f_0$ satisfy \Cref{assumption:data_generating_process}.
Then there is a $C>0$ such that
\[ \label{eqn:truncated_bound}
\E_{\v{x},\v{y}} \E_{f \~ \Pi(\cdot\given \v{x},\v{y})} \norm{f - f_0}_{L^2(p_0)}^2 \leq C n^{-\tfrac{2\min(\beta,\nu)}{2\nu+d}}
.
\]
\end{restatable}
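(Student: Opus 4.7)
The plan is to mirror the proof of \Cref{thm:intrinsic}: first establish a contraction rate with respect to the empirical norm $n^{-1/2} \norm{f(\v{x}) - f_0(\v{x})}_{\R^n}$ via control of the concentration function of the truncated prior, then extend to $L^2(p_0)$ using a Hölder-type property together with Bernstein's inequality. Since the truncated prior differs from the intrinsic one only by removal of the tail of its Karhunen--Loève expansion, most of the architecture carries over; the new technical content lies in analyzing how the concentration function depends on the truncation level $J_n$.

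The concentration function of the truncated prior can be written as
\[
\phi_{f_0}(\eps) = \inf\cbr[1]{\norm{h}_{\mathbb{H}_{J_n}}^2 : h \in \mathbb{H}_{J_n},\ \norm{h-f_0}_{\c{C}(\c{M})} < \eps} - \log \Pi\sbr[1]{\norm{f}_{\c{C}(\c{M})} < \eps}
\]
where $\mathbb{H}_{J_n}$ is the RKHS of the truncated prior, namely the span of the first $J_n$ Laplace--Beltrami eigenfunctions equipped with the Sobolev-type norm coming from \Cref{def:intrinsic_matern}. For the small-ball term, since the truncated process lives in a $J_n$-dimensional subspace, standard Gaussian log-concavity together with $L^\infty$-bounds on the eigenfunctions (via Weyl-type estimates such as those in \textcite{rkhs_manifolds2022}) yields a bound of order $J_n \log(1/\eps)$. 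For the approximation term, I would take $h_n = P_{J_n} f_0$, the projection of $f_0$ onto the first $J_n$ eigenfunctions: its RKHS norm is bounded using the $H^\beta$ assumption on $f_0$ and Weyl's law by a polynomial in $J_n$ of order $J_n^{(2\nu+d-2\beta)/d}$ when $\beta < \nu + d/2$, and by a constant otherwise, while its $L^\infty$ distance to $f_0$ decays like $J_n^{-\beta/d}$ thanks to $f_0 \in B_{\infty,\infty}^\beta(\c{M})$.

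Combining these bounds and optimizing $\eps$, I expect the choice $J_n \geq c n^{d\min(1, \nu/\beta)/(2\nu+d)}$ to make the concentration function at scale $\eps_n \asymp n^{-\min(\beta,\nu)/(2\nu+d)}$ bounded by a constant multiple of $n \eps_n^2$, which is the standard condition for $\eps_n$ to be a posterior contraction rate. The Bernstein step needed to pass from the empirical to the $L^2(p_0)$ norm is identical to that of \Cref{thm:intrinsic}, provided the Hölder-type property extends to the truncated process---which it does, since truncation only smooths the covariance.

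The main obstacle will be managing the two regimes $\beta \leq \nu$ and $\beta > \nu$ simultaneously: in the former, $J_n$ is driven by the approximation side, while in the latter it is dictated by the small-ball cost. Verifying that the stated lower bound on $J_n$ is tight enough to match the intrinsic rate in both regimes, without any additional assumption on $f_0$, requires balancing the $J_n$-dependence of the two terms precisely. A secondary concern is controlling the $L^\infty$ growth of the eigenfunctions $f_j$ on a general compact manifold, which enters both the small-ball estimate and the translation between $H^\beta$ and the Besov-type approximation bound.
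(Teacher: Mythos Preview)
Your overall architecture---control the concentration function, deduce an empirical $L^2$ rate, then transfer to $L^2(p_0)$ via Bernstein and a H\"older bound---matches the paper exactly. The gap is in how you handle the two terms of the concentration function, both of which you have tied to $J_n$ in a way that breaks the argument.

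For the small-ball term you propose a bound of order $J_n \log(1/\eps)$. This is fatal: the theorem only imposes a \emph{lower} bound on $J_n$, so $J_n$ may be as large as you like (even $J_n = \infty$), and then $J_n \log(1/\eps_n) \gg n \eps_n^2$. Even at the minimal truncation level, your bound fails by a $\log n$ factor precisely when $\beta = \nu$, the minimax-optimal case. The paper's approach is the opposite: since the RKHS unit ball of the truncated prior is \emph{contained} in that of the full prior, $B_1^{\bb{H}_{J_n}} \subset B_1^{H^{\nu+d/2}(\c{M})}$, the metric-entropy bound---and hence, via \textcite{li_linde99}, the small-ball exponent $C\eps^{-d/\nu}$---holds \emph{uniformly} in $J_n$ (\Cref{lemma:small_ball}). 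Equivalently, by Anderson's inequality, removing independent Gaussian components can only increase the small-ball probability, so truncation never hurts here. No $L^\infty$ eigenfunction bounds are needed.

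A parallel issue affects your approximation term: taking $h_n = P_{J_n} f_0$ makes $\norm{h_n}_{\bb{H}_{J_n}}^2 \asymp J_n^{(2\nu+d-2\beta)/d}$, which again blows up if $J_n$ is large. The paper instead uses the smooth spectral cutoff $\Phi_j(\sqrt{\Delta}) f_0$ with $2^{-\beta j}\asymp \eps$, whose $L^\infty$ error is controlled directly by the definition of $B_{\infty,\infty}^\beta(\c{M})$ and whose RKHS norm depends only on $\eps$, not on $J_n$. The role of the condition $J_n \geq c\, n^{d\min(1,\nu/\beta)/(2\nu+d)}$ is then purely to guarantee that this approximant lies in $\bb{H}_{J_n}$, since $\Phi_j(\sqrt{\Delta})f_0$ involves only eigenfunctions with $\sqrt{\lambda_l} \leq 2^{j+1}$, i.e.\ (by Weyl) $l \lesssim \eps_n^{-d/\beta}$. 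In short, both terms of the concentration function must be bounded \emph{uniformly} in the truncation level; the truncation condition enters only as a membership constraint, not as a parameter to balance.
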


The proof is essentially-the-same as the non-truncated Matérn, but involves tracking dependence of the inequalities on the truncation level $J_n$, which implicitly defines a sequence of priors rather than a single fixed prior. 

This result is excellent news for the intrinsic models: it means that they inherit the optimality properties of the limiting one, even in the absence of the infinite sum---in spite of the fact that the corresponding finite-truncation prior places its probability on $\c{C}^\infty \del{\c{M}}$.
Again, the constant $C>0$ can be seen to depend on $d$,$\sigma_f^2$, $\nu$,$\kappa$,$\beta$,$p_0$,$\sigma_\varepsilon^2$, $\norm{f_0}_{H^\beta \del{\c{M}}}$, $\norm{f_0}_{B_{\infty\infty}^\beta \del{\c{M}}}$, and $\norm{f_0}_{\c{CH}^\beta\del{\c{M}}}$.
This concludes our results for the intrinsic Riemannian Matérn priors.
We now study what happens if, instead of working with a geometrically-formulated model, we simply embed everything into $\R^d$ and formulate our models there.

\subsection{Extrinsic Matérn Gaussian Processes}

The results of the preceding sections provide good reason to be excited about the intrinsic Riemannian Matérn prior: the rates it obtains match the usual minimax rates seen for the Euclidean Matérn prior and Euclidean data, provided that we match the smoothness $\nu$ with the regularity of $f_0$.
Another possibility is to consider an extrinsic Gaussian process, that is, a Gaussian process defined over an ambient space.
This has been considered by \textcite{bayesian_manifold_regression} for instance for the square-exponential process, in an adaptive setting where one does not assume that the regularity $\beta$ of $f_0$ is explicitly known, but where $\beta \leq 2$.
In this section we prove a non-adaptive analog of this result for the Matérn process.

\begin{definition}[Extrinsic Matérn prior] \label{def:extrinsic_matern}
Assume that the manifold $\c{M}$ is isometrically embedded in the Euclidean space $\R^D$, such that we can regard $\c{M}$ as a subset of $\R^D$.
Consider the Gaussian process with zero mean and kernel given by restricting onto $\c{M}$ the standard Euclidean Matérn kernel
\[
\label{eqn:matern-kernel-eucl}
k_{\nu, \kappa, \sigma_f^2}(x,x') = \sigma_f^2 \frac{2^{1-\nu}}{\Gamma(\nu)} \del{\sqrt{2\nu} \frac{\norm{x-x'}_{\R^D}}{\kappa}}^\nu K_\nu \del{\sqrt{2\nu} \frac{\norm{x-x'}_{\R^D}}{\kappa}}
 \]
where $\sigma_f,\kappa,\nu > 0$ and $K_\nu$ is the modified Bessel function of the second kind \cite{gradshteyn2014}.
\end{definition}

Since the extrinsic Matérn process is defined in a completely agnostic way with respect to the manifold geometry, we would expect it to be less performant when $\c{M}$ is known.
However, it turns out that the extrinsic Matérn process converges at the same rate as the intrinsic one, as given in the following claim.

\begin{restatable}{theorem}{ThmExtrinsic}\label{thm:extrinsic}
Let $f$ be a mean-zero extrinsic Matérn Gaussian process prior with smoothness parameter $\nu > d/2$ on $\c{M}$, and let $f_0$ satisfy \Cref{assumption:data_generating_process}.
Then for some $C>0$ we have
\[ \label{eqn:extrinsic_bound}
\E_{\v{x},\v{y}} \E_{f \~ \Pi(\cdot\given \v{x},\v{y})} \norm{f - f_0}_{L^2(p_0)}^2 \leq C n^{-\tfrac{2\min(\beta,\nu)}{2\nu+d}}
.
\]
\end{restatable}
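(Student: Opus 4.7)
The plan is to reduce the extrinsic case to the intrinsic one by identifying the RKHS of the restricted process with a Sobolev space on $\c{M}$, after which essentially the same concentration-function argument used in the proof of \Cref{thm:intrinsic} applies. The classical fact \cite{JMLR:v12:vandervaart11a} that the RKHS of the Euclidean Matérn kernel on $\R^D$ of smoothness $\nu$ is the Sobolev space $H^{\nu + D/2}(\R^D)$, together with Aronszajn's result that the RKHS of a restricted kernel is the image of the ambient RKHS under restriction, equipped with the quotient norm, reduces the identification to a trace/extension theorem between $H^{s}(\R^D)$ and Sobolev spaces on $\c{M}$.

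The central step is therefore to show that the restriction map $H^{\nu + D/2}(\R^D) \to H^{\nu + d/2}(\c{M})$ is continuous and admits a continuous right inverse. The continuity is the classical trace theorem for Sobolev spaces on smooth compact submanifolds, valid since $\nu + D/2 > (D-d)/2$ because $\nu > d/2 > 0$; the existence of a bounded extension operator follows from Stein's extension theorem combined with the use of a tubular neighborhood and a partition of unity adapted to local charts of $\c{M}$. Combining these with Aronszajn's identification yields that the RKHS of the extrinsic process restricted to $\c{M}$ is, as a set, $H^{\nu + d/2}(\c{M})$, with an equivalent norm to the intrinsic Matérn RKHS of \Cref{def:intrinsic_matern} at the same $\nu$.

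Once this equivalence is established, the van der Vaart--van Zanten machinery proceeds almost verbatim from the intrinsic proof. One controls the concentration function $\varphi_{f_0}(\eps)$ by (i) approximating $f_0 \in H^{\beta}(\c{M}) \cap B_{\infty,\infty}^{\beta}(\c{M})$ from the RKHS at rate $\eps$, using the smoothing properties afforded by the norm equivalence with $H^{\nu+d/2}(\c{M})$, and (ii) bounding the small-ball probability $-\log \Pi[\|f\|_{\c{C}(\c{M})} < \eps]$ via entropy estimates for the Sobolev unit ball of $H^{\nu + d/2}(\c{M})$ in the uniform norm, which follow from the Euclidean entropy estimates via charts exactly as in the intrinsic case. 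This yields the posterior contraction rate in the empirical $L^2$ distance $n^{-1/2}\|f(\v{x}) - f_0(\v{x})\|_{\R^n}$. To upgrade it to $L^2(p_0)$, one verifies a Hölder-type sample path property for the extrinsic Matérn restricted to $\c{M}$---this is easier than in the intrinsic case, since Hölder continuity in the ambient Euclidean distance is classical for Matérn processes, and the embedding is Lipschitz---and then applies Bernstein's inequality exactly as in the proof of \Cref{thm:intrinsic}.

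The main obstacle is the sharpness of the trace/extension argument: it is crucial that the loss of smoothness in the trace $H^{\nu + D/2}(\R^D) \to H^{\nu + d/2}(\c{M})$ is exactly $(D-d)/2$ and that this loss is recovered by a bounded extension, so that the resulting RKHS on $\c{M}$ coincides with the intrinsic Matérn RKHS of the \emph{same} parameter $\nu$ rather than a shifted one. Any mismatch here would produce a different exponent in the contraction rate and would destroy the coincidence asserted by the theorem. Modulo this RKHS identification, which must be formulated carefully in the appendix, the rest is a structural reuse of the intrinsic argument.
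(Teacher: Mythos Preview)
Your proposal is correct and follows essentially the same route as the paper: identify the RKHS of the restricted extrinsic process with $H^{\nu+d/2}(\c{M})$ via the quotient-norm description of restricted-kernel RKHSs together with a trace/extension theorem losing exactly $(D-d)/2$ derivatives, then rerun the concentration-function, small-ball entropy, and Bernstein arguments from the intrinsic case. The only cosmetic difference is that the paper invokes a ready-made trace/extension result from \textcite{trace_ex_sobolev_manifold} rather than building one from Stein's extension and tubular neighborhoods, and cites \textcite{bayesian_manifold_regression} rather than Aronszajn for the restricted-RKHS characterization.
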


\Cref{thm:extrinsic} is a surprising result because  the optimal rates in this setting only require the knowledge of the regularity $\beta$, but not the knowledge of the manifold or the intrinsic dimension.
More precisely, the prior is not designed to be  an adaptive prior, since it is a fixed Gaussian process, but it surprisingly adapts to the dimension of the manifold, and thus to the manifold.

The proof is also based on control of concentration functions.
The main difference is that, although the ambient process has a well known RKHS---the Sobolev space $H^{s+D/2}\del{\R^D}$---the restricted process has a non-explicit RKHS, which necessitates further analysis.
We tackle this issue by using results from \textcite{trace_ex_sobolev_manifold} relating manifold and ambient Sobolev spaces by linear bounded trace and extension operators, and from \textcite{bayesian_manifold_regression} describing a general link between the RKHS of an ambient process and its restriction.
This allows us to show that the restricted process has an RKHS that is actually norm-equivalent to the Sobolev space $H^{\nu+d/2}\del{\c{M}}$, which allows us to conclude the result in the same manner as in the intrinsic case.

As consequence, our argument applies \emph{mutatis mutandis} in any setting where suitable trace and extension theorems apply, with the Riemannian Matérn case corresponding to the usual Sobolev results.
In particular, our arguments therefore apply directly to other processes possessing similar RKHSs, such as for instance various kernels defined on the sphere---see e.g. \textcite[Chapter 17]{wendland04} and \textcite{hubbert2023}.
The constant $C>0$ can be seen to depend on $d$,$D$,$\sigma_f^2$, $\nu$,$\kappa$,$\beta$,$p_0$,$\sigma_\varepsilon^2$, $\norm{f_0}_{H^\beta \del{\c{M}}}$,$\norm{f_0}_{B_{\infty\infty}^\beta \del{\c{M}}}$,$\norm{f_0}_{\c{CH}^\beta\del{\c{M}}}$---notice that here $C$ depends implicitly on $D$ because of the presence of trace and extension operator continuity constants.
We now proceed to understand the significance of the overall results.

\subsection{Summary of Results}

As a consequence of our previous results, fixing a single common data generating distribution determined by $p_0,f_0$, under suitable conditions the intrinsic Matérn process, its truncated version, and the extrinsic Matérn process all possess the \emph{same} posterior contraction rate with respect to the $L^2 \del{p_0}$-norm, which depends on $d$, $\nu$, and $\beta$, and is optimal if the regularities of $f_0$ and the prior match.
These results imply the following immediate corollary, which follows by convexity of $\norm{\.}_{L^2(p_0)}^2$ using Jensen's inequality.

\begin{corollary} \label{thm:error_mean}
Under the assumptions of \Cref{thm:intrinsic,thm:truncated_intrinsic,thm:extrinsic}, it follows that, for some $C>0$
\[ \label{eqn:error_mean}
\E_{\v{x}, \v{y}} \, \norm[0]{m_{\Pi(\.\given \v{x},\v{y})} - f_0}_{L^2(p_0)}^2
\leq
C n^{- \frac{2\min(\beta, \nu)}{2\nu+d}}
\]
where $m_{\Pi(\.\given \v{x}, \v{y})}$ is the posterior mean given a particular value of $\del{x_i,y_i}_{i=1}^n$.
\end{corollary}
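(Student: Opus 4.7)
The plan is to apply Jensen's inequality to the convex functional $g \mapsto \norm{g - f_0}_{L^2(p_0)}^2$ and then invoke whichever of \Cref{thm:intrinsic}, \Cref{thm:truncated_intrinsic}, or \Cref{thm:extrinsic} is appropriate for the prior under consideration.

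First, I would observe that for each fixed $(\v{x},\v{y})$ the posterior $\Pi(\.\given \v{x},\v{y})$ is a Gaussian process whose mean function equals $m_{\Pi(\.\given \v{x},\v{y})}$, so this mean coincides with the Bochner expectation $\E_{f \~ \Pi(\.\given \v{x},\v{y})} f$ when we view posterior draws as random elements of $L^2(p_0)$. The integrability needed to make sense of this Bochner expectation is supplied by the preceding theorems themselves, which bound the expected squared $L^2(p_0)$-risk of a posterior draw and in particular guarantee that $\E_{f \~ \Pi(\.\given \v{x},\v{y})}\norm{f}_{L^2(p_0)} < \infty$ almost surely.

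Second, since $L^2(p_0)$ is a Hilbert space and $g \mapsto \norm{g - f_0}_{L^2(p_0)}^2$ is convex (it is the square of a translated Hilbertian seminorm), Jensen's inequality for Bochner-valued random variables yields, pointwise in $(\v{x},\v{y})$,
\[
\norm[0]{m_{\Pi(\.\given \v{x},\v{y})} - f_0}_{L^2(p_0)}^2 \leq \E_{f \~ \Pi(\.\given \v{x},\v{y})} \norm{f - f_0}_{L^2(p_0)}^2.
\]
Taking expectation over $(\v{x},\v{y})$ on both sides and then applying \Cref{thm:intrinsic}, \Cref{thm:truncated_intrinsic}, or \Cref{thm:extrinsic} to the resulting right-hand side yields the claimed bound of order $n^{-2\min(\beta,\nu)/(2\nu+d)}$, with the constant $C$ inherited from the relevant theorem.

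The only real obstacle is bookkeeping: one must confirm that the Bochner-integral representation of the posterior mean in $L^2(p_0)$ is valid and that Fubini's theorem permits interchanging the posterior expectation with the spatial integral against $p_0$. Both are routine consequences of the integrability supplied by the earlier theorems, so no substantively new argument is required beyond what they already establish.
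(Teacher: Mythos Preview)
Your proposal is correct and matches the paper's own justification exactly: the corollary is stated to follow ``by convexity of $\norm{\.}_{L^2(p_0)}^2$ using Jensen's inequality,'' with no further argument given. The Bochner-integrability and Fubini bookkeeping you flag are indeed routine and are left implicit in the paper as well.
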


When $\nu = \beta$, the optimality of the rates we present in the manifold setting can be easily inferred by lower bounding the $L^2$-risk of the posterior mean by the $L^2$-risk over a small subset of $\c{M}$ and using charts, which translates the problem into the Euclidean framework for which the rate is known to be optimal---see for instance \textcite{tsybakov}.

To contextualize this, observe that even in cases where the geometry of the manifold is non-flat, the asymptotic rates are unaffected by the choice of the prior's length scale $\kappa$---in either the intrinsic, or the extrinsic case---but only by the smoothness parameter $\nu$.
Indeed, the RKHS of the process is only determined---up to norm equivalence---by $\nu$, which plays an important role in the proofs.
This, and the fact that extrinsic processes attain the same rates, implies that the study of asymptotic posterior contraction rates \emph{cannot detect geometry} in our setting, as was already hinted by \textcite{bayesian_manifold_regression}.
Hence, in the geometric setting, optimal posterior contraction rates should be thought of more as a basic property that any reasonable model should satisfy.
Differences in performance will be down to constant factors alone---but as we will see, these can be significant.
To understand these differences, we turn to empirical analysis.

\section{Experiments}
\label{sec:experiments}  

From \Cref{thm:intrinsic,thm:truncated_intrinsic,thm:extrinsic}, we know that intrinsic and extrinsic Gaussian processes exhibit the same posterior contraction rates in the asymptotic regime.
Here, we study how these rates manifest themselves in practice, by examining how worst-case errors akin to those of \Cref{thm:error_mean} behave numerically.
Specifically, we consider the pointwise worst-case error
\[ \label{eqn:pointwise_error}
v^{(\tau)}(t)
=
\sup_{\norm{f_0}_{H^{\nu+d/2}} \leq 1}
\E_{\eps_i \~[N](0,\sigma_{\eps}^2)}
\abs[1]{m_{\Pi(\.\given \v{x}, \v{y})}^{(\tau)}(t) - f_0(t)}^2
\]
where $m_{\Pi(\.\given \v{x}, \v{y})}^{(\tau)}$ is the posterior mean corresponding to the zero-mean Matérn Gaussian process prior with smoothness $\nu$, length scale $\kappa$, amplitude $\sigma_f^2$, which is intrinsic if $\tau = \f{i}$ or extrinsic if $\tau = \f{e}$.
We use a Gaussian likelihood with noise variance $\sigma_{\eps}^2$ and observations $y_i = f_0(x_i) + \eps_i$, and examine this quantity as a function of the evaluation location $t \in \c{M}$.
By allowing us to assess how error varies in different regions of the manifold, this provides us with a fine-grained picture of how posterior contraction behaves.

One can show that $v^{(\tau)}$ may be computed without numerically solving an infinite-dimensional optimization problem.
Specifically, \eqref{eqn:pointwise_error} can be calculated, in the respective intrinsic and extrinsic cases, using
\[ \label{eqn:vit}
v^{(\f{i})}(t)
&=
k^{(\f{i})}(t, t) - \m{K}^{(\f{i})}_{t \m{X}} \del{\m{K}^{(\f{i})}_{\m{X} \m{X}} + \sigma_{\eps}^2 \m{I}}^{-1} \m{K}^{(\f{i})}_{\m{X} t}
\\ \label{eqn:vet}
v^{(\f{e})}(t)
&\approx
\del[0]{
\m{K}_{t \, \m{X}'}^{(\f{i})} - \v{\alpha}_t \m{K}_{\m{X}\, \m{X}'}^{(\f{i})}
}
\del[0]{\m{K}_{\m{X}' \m{X}'}^{(\f{i})}}^{-1}
\del[0]{
\m{K}_{\m{X}'\, t}^{(\f{i})} - \m{K}_{\m{X}'\, \m{X}}^{(\f{i})}\v{\alpha}_t^{\top}
}
+
\sigma_{\eps}^2
\v{\alpha}_t
\v{\alpha}_t^{\top}
\]
where, for the extrinsic case, $\v{\alpha}_t = \m{K}_{t \m{X}}^{(\f{i})} (\m{K}_{\m{X} \m{X}}^{(\f{e})} + \sigma_{\eps}^2 \m{I})^{-1}$, and $\m{X}'$ is a set of points sampled uniformly from the manifold $\c{M}$, the size of which determines approximation quality.
The intrinsic expression is simply the posterior variance $k_{\Pi(\. \given \v{x}, \v{y})}^{(\f{i})}(t, t)$, and its connection with worst-case error is a well-known folklore result mentioned somewhat implicitly in, for instance, \textcite{mutny2022}.
The extrinsic expression is very-closely-related, and arises by numerically approximating a certain RKHS norm.
A derivation of both is given in \Cref{appdx:error_formulas}.
To assess the approximation error of this formula, we also consider an analog of \eqref{eqn:vet} but instead defined for the intrinsic model, and compare it to \eqref{eqn:vit}: in all cases, the difference between the exact and approximate expression was found to be smaller than differences between models.
By computing these expressions, we therefore obtain, up to numerics, the pointwise worst-case expected error in our regression model.

\begin{figure}[t]
\includegraphics{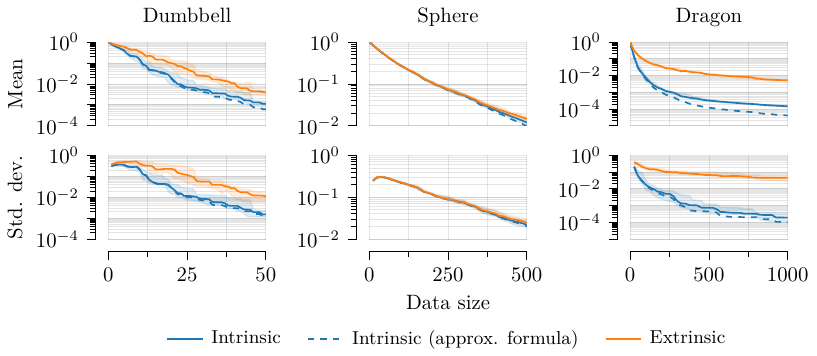}
\caption{
Worst-case error estimates for the intrinsic and extrinsic processes, on the \emph{dumbbell}, \emph{sphere}, and \emph{dragon} manifolds (lower is better, $y$ axis is in the logarithmic scale).
We see that, on the dumbbell and dragon manifold, intrinsic models achieve lower expected errors than extrinsic models for the ranges considered (top), and that their expected error consistently varies less as a function of space (bottom).
In contrast, on the sphere, both models achieve similar performance, with differences between models falling within the range of variability caused by different random number seeds.
We also see that the difference between computing the pointwise worst-case error exactly and approximately, in the intrinsic case where computing this difference is possible, is small in all cases.
}
\label{fig:exp_results}
\end{figure}

For $\c{M}$ we consider three settings: a dumbbell-shaped manifold, a sphere, and the dragon manifold from the Stanford 3D scanning repository.
In all cases, we perform computations by approximating the manifold using a mesh, and implementing the truncated Karhunen--Loève expansion with $J=500$ eigenpairs obtained from the mesh.
We fix smoothness $\nu = \frac{5}{2}$, amplitude $\sigma_f^2 = 1$, and noise variance $\sigma_{\eps}^2 = 0.0005$, for both the intrinsic and extrinsic Matérn Gaussian processes.
Since the interpretation of the length scale parameter is manifold-specific, for the intrinsic Gaussian processes we set $\kappa = 200$ for the dumbbell, $\kappa = 0.25$ for the sphere, and $\kappa = 0.05$ for the dragon manifold.
In all cases, this yielded functions that were neither close to being globally-constant, nor resembled noise.
Each experiment was repeated $10$ times to assess variability.
Complete experimental details are given in~\Cref{appdx:experiments}.\footnote{Code available at: \url{https://github.com/aterenin/geometric_asymptotics}.}

The length scales $\kappa$ are defined differently for intrinsic and extrinsic Matérn kernels: in particular, using the same length scale in both models can result in kernels behaving very differently.
To alleviate this, for the extrinsic process, we set the length scale by maximizing the extrinsic process' marginal likelihood using the full dataset generated by the intrinsic process, except in the dumbbell's case where the full dataset is relatively small, and therefore a larger set of 500 points was used instead.
This allows us to numerically match intrinsic and extrinsic length scales to ensure a reasonably-fair comparison.

\Cref{fig:exp_results} shows the mean, and spatial standard deviation of $v_{\tau}(t)$, where by \emph{spatial standard deviation} we mean the sample standard deviation computed with respect to locations in space, rather than with respect to different randomly sampled datasets.
From this, we see that on the dumbbell and dragon manifold---whose geometry differs significantly from the respective ambient Euclidean spaces---intrinsic models obtain better mean performance.
The standard deviation plot reveals that intrinsic models have errors that are less-variable across space.
This means that extrinsic models exhibit higher errors in some regions rather than others---such as, for instance, regions where embedded Euclidean and Riemannian distances differ---whereas in intrinsic models the error decays in a more spatially-uniform manner.

In contrast, on the sphere, both models perform similarly.
Moreover, both the mean and spatial standard deviation decrease at approximately the same rates, indicating that the extrinsic model's predictions are correct about-as-often as the intrinsic model's, as a function of space.
This confirms the view that, since the sphere does not possess any bottleneck-like areas where embedded Euclidean distances are extremely different from their Riemannian analogs, it is significantly less affected by differences coming from embeddings.

In total, our experiments confirm that there are manifolds on which geometric models can perform significantly better than non-geometric models. This phenomenon was also noticed in \textcite{dunson2021graph}, where a prior based on the eigendecomposition of a random geometric graph, which can be thought as an approximation of our intrinsic Matérn processes, is compared to a standard extrinsic Gaussian process.
In our experiments, we see this through expected errors, mirroring prior results on Bayesian optimization performance.
From our theoretical results, such differences cannot be captured through posterior contraction rates, and therefore would require sharper technical tools, such as non-asymptotic analysis, to quantify theoretically.

\section{Conclusion}

In this work, we studied the asymptotic behavior of Gaussian process regression with different classes of Matérn processes on Riemannian manifolds.
By using various results on Sobolev spaces on manifolds we derived posterior contraction rates for intrinsic Matérn process defined via their Karhunen-Loeve decomposition in the Laplace--Beltrami eigenbasis, including processes arising from truncation of the respective sum which can be implemented in practice.
Next, using trace and extension theorems which relate manifold and Euclidean Sobolev spaces, we derived similar contraction rates for the restriction of an ambient Matérn process in the case where the manifold is embedded in Euclidean space.
These theoretical asymptotic results were supplemented by experiments on several examples, showing significant differences in performance between intrinsic and extrincic methods in the small sample size regime when the manifold's geometric structure differs from the ambient Euclidean space.
Our work therefore shows that capturing such differences cannot be done through asymptotic contraction rates, motivating and paving the way for further work on non-asymptotic error analysis to capture empirically-observed differences between extrinsic and intrinsic models.

\section*{Acknowledgments}
The authors are grateful to Mojmír Mutný and Prof. Andreas Krause for fruitful discussions concerning this work.
PR and JR were supported by the European Research Council (ERC) under the European Union's Horizon 2020 research and innovation programme (grant agreement No. 834175).
VB was supported by an ETH Zürich Postdoctoral Fellowship. 
AT was supported by Cornell University, jointly via the Center for Data Science for Enterprise and Society, the College of Engineering, and the Ann S. Bowers College of Computing and Information Science.

\printbibliography

\newpage
\appendix

\section{Preliminaries} \label{appdx:function_spaces}

Here we describe preliminaries necessary for~\Cref{appdx:proofs,appdx:rkhs,appdx:concentration,appdx:small_ball}.
This includes some basic properties of the Laplace--Beltrami operator on compact manifolds, partitions of unity subordinate to atlases, function spaces such as Hölder, Sobolev and Besov spaces, general Gaussian random elements on Banach spaces, and a certain technical lemma.
Hereinafter, the expression $a \lesssim b$ means $a \leq Cb$ for some constant $C>0$ whose value is irrelevant to our claims.

\subsection{Laplace--Beltrami Operator and Subordinate Partitions of Unity}

Let $\c{M}$ denote a compact connected Riemannian manifold without boundary of dimension $d \in \Z_{>0}$.
The Laplace--Beltrami operator $\Delta$ on $\c{M}$ is self-adjoint and positive semi-definite \cite[Theorem 2.4]{strichartz1983}.
Let $(L^2(\c{M}),\innerprod{\cdot}{\cdot}_{L^2(\c{M})})$ denote the Hilbert space of square integrable (equivalence classes of) functions on $\c{M}$ with respect to the standard Riemannian volume measure.

By standard theory \cite{chavel1984, grigoryan_book}, there exists an orthonormal basis $\cbr{f_j}_{j=0}^{\infty}$ of $L^2(\c{M})$ consisting of the eigenfunctions of $\Delta$, namely $\Delta f_j = - \lambda_j f_j$, with $\lambda_j \geq 0$.
We assume that the pairs $(\lambda_j, f_j)$ are sorted such that $0 = \lambda_0 \leq \lambda_j \leq \lambda_{j+1}$.
The growth of $\lambda_j$ can be characterized as follows.

\begin{result}[Weyl's Law]\label{thm:weyls_law}
There exists a constant $C > 0$ such that for all $j$ large enough we have
\[
C^{-1} j^{2/d} \leq \lambda_j \leq C j^{2/d}.
\]
\end{result}
\begin{proof}
See \textcite[Chapter 1]{chavel1984}.
\end{proof}

Following \textcite{rkhs_manifolds2022, trace_ex_sobolev_manifold} we fix a family $\c{T} = \del{\c{U}_l,\phi_l,\chi_l}_{l=1}^L$, where $L \in \Z_{>0}$, the local coordinates $\phi_l : \c{U}_l \subset \c{M} \to \c{V}_l = \phi_l \del{\c{U}_l} \subset \R^d$ are smooth diffeomorphisms, and the functions $\chi_l$ form a partition of unity subordinate to $\cbr{\c{U}_l}_{l=1}^L$, that is $\chi_l \in \c{C}^\infty \del{\c{M}}$, $\supp \del{\chi_l} \subset \c{U}_l$, $0 \leq \chi_l \leq 1$ and $\sum_l \chi_l = 1$---here, we can choose $L$ finite by compactness of $\c{M}$.
For convenience and without loss of generality, we assume that $\c{V}_l = (0, 1)^d$.
With this, we can start defining function spaces on $\c{M}$.

\subsection{Hölder Spaces \texorpdfstring{$\c{CH}^{\gamma}$}{CH{\textasciicircum}gamma} and the spaces of continuous functions \texorpdfstring{$\c{C}$}{C} and smooth functions \texorpdfstring{$\c{C}^{\infty}$}{C{\textasciicircum}infty}}

Consider an arbitrary domain $\c{X} \subseteq \R^d$ or $\c{X} \subseteq \c{M}$.
We denote the class of infinitely differentiable functions on $\c{X}$ by $\c{C}^{\infty}(\c{X})$.
Let $\c{C}^k(\c{X})$ denote the Banach space of $k \in \Z_{>0}$ times continuously differentiable functions on $\c{X}$ with finite norm
\[
\norm{f}_{\c{C}^k(\c{X})}
=
\sup_{x \in \c{X}}
\abs{\nabla^k f(x)}
\]
where $\nabla^k$ is the $k$th covariant derivative, as in \textcite[Section 2.1]{hebey1996} and \textcite[Definition 2.2]{aubin1998}.
We also write $\c{C}(\c{X}) = \c{C}^0(\c{X})$ for the space of continuous functions on $\c{X}$.

The Euclidean Hölder spaces $\c{CH}^{\gamma}(\R^d)$, where $\gamma = k + \alpha$ with $k \in \Z_{\geq 0}$ and $0 < \alpha \leq 1$, are defined by\footnote{Hölder spaces are often also denoted by $\c{C}^{\gamma}$ as well, with $\gamma \in \R_{> 0}$. Since, using this formulation, they do \emph{not} coincide with $\c{C}^k(\c{X})$ when $k = \gamma \in \Z_{\geq 0}$, we use the notation $\c{CH}^{\gamma}$ to avoid confusion.}
\[
\c{CH}^{\gamma}(\R^d)
&=
\cbr{
f \in \c{C}^k(\R^d)
:
\norm{f}_{\c{CH}^{\gamma}(\R^d)} < \infty
}
\]
where
\[
\norm{f}_{\c{CH}^{\gamma}(\R^d)}
&=
\norm{f}_{\c{C}^{k}(\R^d)}
+
\sup_{\v{x}, \v{y} \in \R^d, \, \v{x} \not= \v{y}} \frac{\abs{f(\v{x}) - f(\v{y})}}{\norm{\v{x} - \v{y}}_{\R^d}^{\alpha}}
.
\]

More information on these definitions may be found, for instance, in~\textcite{gine_nickl_2015, triebel_ii}.
We now turn to the manifold versions of the Hölder spaces.

\begin{definition}[Hölder spaces]
\label{def:Holder}
For all $\gamma > 0$ we define the Hölder space $\c{CH}^\gamma \del{\c{M}}$ on the manifold $\c{M}$ to be the space of all $f: \c{M} \to \R$ satisfying
\[
\norm{f}_{\c{CH}^\gamma \del{\c{M}}} = \sum_{l=1}^L \norm{\del{\chi_l f} \circ \phi_l^{-1}}_{\c{CH}^\gamma \del{\R^d}} < \infty.
\]
\end{definition}

Since the charts $\phi_l$ are smooth, \Cref{def:Holder} can be easily seen to be independent of the chosen atlas, with equivalence of norms.

\subsection{Sobolev and Besov Spaces}

We now introduce the manifold versions of the Sobolev and Besov spaces, whose definitions in the standard Euclidean case may be found, for instance, in~\textcite{triebel_ii}.
For Sobolev spaces we use the Bessel-potential-based definition, following \textcite{rkhs_manifolds2022}.

\begin{definition}[Sobolev spaces]
\label{def:1}
For any $s>0$ we define the Sobolev space $H^s \del{\c{M}}$ on the manifold $\c{M}$ as the Hilbert space of functions $f \in L^2 \del{\c{M}}$ such that
$\norm{f}_{H^s \del{\c{M}}}^2 = \innerprod{f}{f}_{H^s \del{\c{M}}} < \infty$ where
\[ \label{eqn:sobolev_inner_prod}
\innerprod{f}{g}_{H^s \del{\c{M}}}
=
\sum_{j = 0}^{\infty}
(1+\lambda_j)^s
\innerprod{f}{f_j}_{L^2(\c{M})}
\innerprod{g}{f_j}_{L^2(\c{M})}
.
\]
\end{definition}

\begin{remark}
It is easy to see that substituting $(1+\lambda_j)^s$ in~\eqref{eqn:sobolev_inner_prod} with $\beta (\alpha+\lambda_j)^s$ or with $(\alpha + \beta \lambda_j^s)$ for any $\alpha, \beta > 0$ results in the same set of functions and an equivalent norm.
The former follows from~\textcite[eq. (109)]{borovitskiy2020}.
The latter follows from the Binomial Theorem.
\end{remark}

For Besov spaces we follow \textcite{coulhon2012heat,thomas_bayes_walk} and define them in terms of approximations by low-frequency functions.
We fix a function $\Phi \in \c{C}^\infty \del{\R_{\geq 0},\R_{\geq 0}}$ such that $K = \supp \del{\Phi} \subset [0,2]$ and $\Phi(x) = 1$ for $x \in [0,1]$.
We also define the functions $\Phi_j(x) = \Phi \del{2^{-j} x}$.

\textcite[Corollary 3.6]{coulhon2012heat} shows that the operators $\Phi_j \del[0]{\sqrt{\Delta}}$ defined by
\[
\Phi_j \del{\sqrt{\Delta}}f = \sum_{j \geq 0} \Phi_j \del{\sqrt{\lambda_j}} \innerprod{f_j}{f} f_j
\]
are bounded  in the space $L^p \del{\c{M}}$ for all $1 \leq p \leq \infty$.\footnote{The space $L^p \del{\c{M}}$ is the Banach space of functions (or rather their equivalence classes) that are integrable when raised to the power $p < \infty$ or essentially bounded for $p = \infty$. See for instance~\textcite{triebel_i} for details.}
Moreover, the same result also shows that we can express any $f \in L^p \del{\c{M}}$ as $f = \lim_{j \to \infty} \Phi_j \del[0]{\sqrt{\Delta}}f$ in $L^p \del{\c{M}}$. 
$\Phi_j \del[0]{\sqrt{\Delta}}f$ can intuitively be considered as a version of $f$ filtered by a low-pass filter.
The next definition introduces the Besov spaces $B_{p,q}^s \del{\c{M}}$, which are formulated in terms of quality-of-approximation by low-frequency functions.

\begin{definition}[Besov spaces] \label{def:2}
For any $s>0$ and $1 \leq p,q \leq \infty$ we define the Besov space $B_{p,q}^s \del{\c{M}}$ on the manifold $\c{M}$ as the space of functions $f \in L^p \del{\c{M}}$ such that $\norm{f}_{B_{p,q}^s\del{\c{M}}} < \infty$ where
\[
\norm{f}_{B_{p,q}^s \del{\c{M}}} = 
\begin{cases}
  \norm{f}_{L^p(\c{M})} + \del{\sum_{j \geq 0} \del{2^{js} \norm[0]{\Phi_j \del[0]{\sqrt{\Delta}}f - f}_{L^p(\c{M})}}^q}^{1/q} &\text{if } q < +\infty \\
  \norm{f}_{L^p(\c{M})} + \sup_{j \geq 0} 2^{js} \norm[0]{\Phi_j \del[0]{\sqrt{\Delta}}f - f}_{L^p(\c{M})} &\text{if } q = +\infty
    .
\end{cases}
\]
\end{definition}

The classical Besov spaces $B_{2,2}^s$ coincide with the Sobolev spaces $H^s$  on $\R^d$, in the sense that they define the same set of functions and equivalent norms---see for instance \textcite{gine_nickl_2015} section 4.3.6---and even on manifolds if one follows the construction of \textcite[7.3-7.4]{triebel_ii} for Besov spaces. 
Since our definition of the Besov spaces is somewhat non-standard, we present a proof.

\begin{proposition} \label{prop:1}
For all $s>0$, $H^s \del{\c{M}} = B_{2,2}^s \del{\c{M}}$ as sets and there exist two constants $C_1,C_2 > 0$ such that for all $f \in H^s \del{\c{M}} = B_{2,2}^s \del{\c{M}}$ we have
\[ \label{eqn:sobolev_besov_norm_equiv}
C_1 \norm{f}_{H^s \del{\c{M}}} \leq \norm{f}_{B_{2,2}^s \del{\c{M}}} \leq C_2 \norm{f}_{H^s \del{\c{M}}}
.
\]
\end{proposition}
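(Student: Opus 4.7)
The plan is to prove both inclusions simultaneously by expressing the Besov norm in the eigenbasis of $\Delta$ via Parseval's identity, and then comparing the resulting expression to the Sobolev norm via a double-sum manipulation.

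First, given $f \in L^2(\mathcal{M})$ with Fourier coefficients $\hat{f}_k = \innerprod{f}{f_k}_{L^2(\mathcal{M})}$, I would observe that since $\{f_k\}$ is orthonormal,
\[
\norm[0]{\Phi_j\del[0]{\sqrt{\Delta}}f - f}_{L^2(\c{M})}^2 = \sum_{k \geq 0} \del[0]{\Phi_j\del[0]{\sqrt{\lambda_k}} - 1}^2 \abs{\hat{f}_k}^2.
\]
The support and value properties of $\Phi$ imply $\Phi_j(\sqrt{\lambda_k}) = 1$ for $\lambda_k \leq 2^{2j}$ and $\Phi_j(\sqrt{\lambda_k}) = 0$ for $\lambda_k \geq 2^{2(j+1)}$, while in general $|\Phi_j(\sqrt{\lambda_k}) - 1| \leq 1 + \norm{\Phi}_{\infty}$. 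This gives the two-sided sandwich
\[
\sum_{k : \lambda_k \geq 2^{2(j+1)}} \abs{\hat{f}_k}^2 \leq \norm[0]{\Phi_j\del[0]{\sqrt{\Delta}}f - f}_{L^2(\c{M})}^2 \leq \del[0]{1 + \norm{\Phi}_{\infty}}^2 \sum_{k : \lambda_k > 2^{2j}} \abs{\hat{f}_k}^2.
\]

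Next, I would compute $\sum_{j \geq 0} 2^{2js} \norm[0]{\Phi_j(\sqrt{\Delta})f - f}_{L^2(\c{M})}^2$ by swapping the order of summation. For the upper bound, for each $k$ with $\lambda_k > 1$, the geometric series $\sum_{j : 2^{2j} < \lambda_k} 2^{2js}$ is dominated by a constant multiple of $\lambda_k^s$, since the largest index is $J_k = \lfloor \tfrac{1}{2}\log_2 \lambda_k\rfloor$ and $2^{2 J_k s} \leq \lambda_k^s$. Terms with $\lambda_k \leq 1$ contribute nothing to this sum. The analogous argument applied to $\sum_{j : 2^{2(j+1)} \leq \lambda_k} 2^{2js}$ yields a lower bound $c \lambda_k^s$ for all $\lambda_k$ large enough (say $\lambda_k \geq 4$), which is almost every $k$ by Weyl's law. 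This shows
\[
c_1 \sum_{k : \lambda_k \geq 4} \lambda_k^s \abs{\hat{f}_k}^2 \leq \sum_{j \geq 0} 2^{2js} \norm[0]{\Phi_j(\sqrt{\Delta})f - f}_{L^2(\c{M})}^2 \leq c_2 \sum_{k \geq 0} \lambda_k^s \abs{\hat{f}_k}^2.
\]

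To finish, I would add $\norm{f}_{L^2(\c{M})}^2 = \sum_k |\hat{f}_k|^2$ to both sides, which absorbs the finitely many ``missing'' low-frequency terms in the lower bound (there are only finitely many eigenvalues below any fixed threshold by Weyl's law, and each contributes a bounded multiple of $|\hat{f}_k|^2$). The resulting two-sided bound is exactly equivalent to $\sum_k (1 + \lambda_k)^s |\hat{f}_k|^2 = \norm{f}_{H^s(\c{M})}^2$, up to universal constants, using the remark following \Cref{def:1}. Squaring the definition of the $B_{2,2}^s$ norm (and using $(a+b)^2 \asymp a^2 + b^2$) yields \eqref{eqn:sobolev_besov_norm_equiv} and the set equality.

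The only mildly delicate point will be bookkeeping the constants across the regions $\lambda_k < 1$, $1 \leq \lambda_k < 4$, and $\lambda_k \geq 4$ so that the lower bound genuinely controls $\sum_k (1+\lambda_k)^s |\hat{f}_k|^2$ and not merely its high-frequency tail; but this is handled cleanly because the addition of $\norm{f}_{L^2(\c{M})}$ in the Besov norm exactly compensates for the low-frequency part. No new analytic ingredient beyond Parseval, the support of $\Phi$, and elementary geometric series is needed.
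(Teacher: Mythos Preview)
Your proof is correct. The overall strategy---express the Besov seminorm via Parseval in the Laplace--Beltrami eigenbasis, then compare the resulting double sum to the Sobolev norm---matches the paper's. The technical difference lies in how the double sum $\sum_{j} 2^{2js} \sum_{k:\, \lambda_k > 2^{2j}} |\hat f_k|^2$ is controlled. The paper invokes Weyl's Law to convert the eigenvalue threshold $\sqrt{\lambda_l} > 2^j$ into an index threshold $l > 2^{d(j-r)}$, then applies summation by parts (Abel summation) on the index $l$, and finally uses Weyl's Law again to convert $l^{2s/d}$ back to $\lambda_l^s$. You instead swap the order of summation directly and bound the inner geometric series $\sum_{j:\, 2^{2j} < \lambda_k} 2^{2js} \lesssim \lambda_k^s$, which is more elementary and sidesteps Weyl's Law entirely; the only place you invoke it---that finitely many eigenvalues lie below a fixed threshold---already follows from discreteness of the spectrum. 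Both arguments are short; yours is a bit more direct, while the paper's makes the dyadic block structure in the index more explicit.
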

\begin{proof}
It is enough to prove~\eqref{eqn:sobolev_besov_norm_equiv}, the rest will follow automatically.
The main technical tools used in the proof are~\Cref{thm:weyls_law} and summation by parts.
First, we prove the upper bound.
Denote the support set of $\Phi$ by $K = \f{supp}(\Phi)$.
Notice that
\[
\del[1]{\norm{f}_{B_{2,2}^s(\c{M})} - \norm{f}_{L^2(\c{M})}}^2 = & \sum_{j \geq 0} 2^{2js} \norm[1]{\Phi_j \del{\sqrt{\Delta}}f - f}_{L^2 (\c{M})}^2 \\
= & \sum_{j \geq 0} 2^{2js} \sum_{l : \sqrt{\lambda_l} \notin 2^j K} \abs[1]{\innerprod{f_l}{f}_{L^2(\c{M})}}^2 \\
\leq & \sum_{j \geq 0} 2^{2js} \sum_{l : \sqrt{\lambda_l} > 2^j} \abs[1]{\innerprod{f_l}{f}_{L^2(\c{M})}}^2
.
\]
The last inequality results from the fact that $[0,1] \subset K$.
By Weyl's Law (\Cref{thm:weyls_law}), there exists a constant $C>0$ such that $\lambda_l \leq C l^{2/d}$.
Without loss of generality, we can assume that $C = 2^{2r}$, $r \in \Z_{>0}$.
Hence $\sqrt{\lambda_l} > 2^j$ implies $l > 2^{d (j-r)}$, thus we have
\[
\del[1]{\norm{f}_{B_{2,2}^s(\c{M})} \!-\! \norm{f}_{L^2(\c{M})}}^2 \!\! \leq & \sum_{j \geq 0} 2^{2js} \sum_{l > 2^{d (j-r)}} \abs[1]{\innerprod{f_l}{f}_{L^2(\c{M})}}^2 \\
= & \sum_{0 \leq j \leq r} \!\! 2^{2js} \!\!\!\!\!\! \sum_{l > 2^{d (j-r)}} \!\!\!\!\! \abs[1]{\innerprod{f_l}{f}_{L^2(\c{M})}}^2 + \sum_{j > r} 2^{2js} \!\!\!\!\!\! \sum_{l > 2^{d (j-r)}} \!\!\!\! \abs[1]{\innerprod{f_l}{f}_{L^2(\c{M})}}^2 \\
\leq & r2^{2rs} \norm{f}_{L^2 (\c{M})}^2 + 2^{2rs} \sum_{j > 0} 2^{2js} \sum_{l > 2^{dj}} \abs[1]{\innerprod{f_l}{f}_{L^2(\c{M})}}^2
.
\]
Now let $R_j = \sum_{l > 2^{dj}} \abs[1]{\innerprod{f_l}{f}_{L^2(\c{M})}}^2$ and $S_J = \sum_{j=1}^J 2^{2js} \leq \frac{2^{2s}}{2^{2s} - 1} 2^{2Js}, S_{0}=0$.
Write
\[
\sum_{j > 0} 2^{2js} \sum_{l > 2^{dj}} \abs[1]{\innerprod{f_l}{f}_{L^2(\c{M})}}^2
= & \sum_{j > 0} \del{S_j - S_{j-1}} R_j \\
= & \sum_{j > 0} S_j \del{R_j - R_{j+1}} \\
= & \sum_{j > 0} S_j \sum_{2^{dj} < l \leq 2^{d(j+1)}} \abs[1]{\innerprod{f_l}{f}_{L^2(\c{M})}}^2 \\
\leq & \frac{2^{2s}}{2^{2s} - 1} \sum_{j > 0} 2^{2js} \sum_{2^{dj} < l \leq 2^{d(j+1)}} \abs[1]{\innerprod{f_l}{f}_{L^2(\c{M})}}^2 \\
\leq & \frac{2^{2s}}{2^{2s} - 1} \sum_{j > 0} \,\,\,\, \sum_{2^{dj} < l \leq 2^{d(j+1)}} l^{2s/d} \abs[1]{\innerprod{f_l}{f}_{L^2(\c{M})}}^2 \\
\leq & \frac{c^s 2^{2s}}{2^{2s} - 1} \sum_{j > 0} \,\,\,\, \sum_{2^{dj} < l \leq 2^{d(j+1)}} \lambda_l^s \abs[1]{\innerprod{f_l}{f}_{L^2(\c{M})}}^2 \\
= & \frac{c^s 2^{2s}}{2^{2s} - 1} \sum_{l > 2^d} \lambda_l^s \abs[1]{\innerprod{f_l}{f}_{L^2(\c{M})}}^2 \\
\leq & \frac{c^s 2^{2s}}{2^{2s} - 1} \sum_{l \geq 0} \lambda_l^s \abs[1]{\innerprod{f_l}{f}_{L^2(\c{M})}}^2
\]
where we have used \Cref{thm:weyls_law} to get existence of a $c$ such that $l^{2/d} \leq c \lambda_l$.
This proves the upper bound.
The proof for the lower bound is similar.
\end{proof}

\Cref{prop:1} provides a characterization of the Sobolev spaces $H^s(\c{M})$.
There is yet another characterization of these spaces that will be useful later, in terms of charts.
We present this characterization as part of the following result.

\begin{theorem}\label{thm:sobolev_spaces}
On the Sobolev space $H^s \del{\c{M}}$, the following norms are equivalent:
\[
\norm{f}_{\,H^s\,(\c{M})}
&=
\del[4]{
\sum_{j = 0}^{\infty}
(1+\lambda_j)^s
\innerprod{f}{f_j}_{L^2(\c{M})}^2
}^{1/2}
\\
\norm{f}_{B_{2,2}^s(\c{M})}
&=
\norm{f}_{L^2(\c{M})}
+
\del[4]{\sum_{j \geq 0} \del{2^{js} \norm[0]{\Phi_j \del[0]{\sqrt{\Delta}}f - f}_{L^2(\c{M})}}^2}^{1/2}
\\
\norm{f}_{H_\c{T}^s (\c{M})}
&=
\del{\sum_{l=1}^L \norm{\del{\chi_l f} \circ \phi_l^{-1}}_{H^s \del{\R^d}}^2}^{1/2}
\]
\end{theorem}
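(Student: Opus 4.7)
The plan is to reduce to the equivalence of the chart-based norm $\norm{\cdot}_{H_\c{T}^s(\c{M})}$ with the spectrally-defined norm $\norm{\cdot}_{H^s(\c{M})}$, since equivalence with the Besov norm is already established by \Cref{prop:1}. The argument rests on two preliminary facts: (i) multiplication by any fixed $\chi \in \c{C}^\infty(\c{M})$ is a bounded operator on $H^s(\c{M})$ for all $s > 0$, which can be shown by viewing $\chi$ as a zeroth-order pseudo-differential operator or, more elementarily, by direct filter commutator estimates combined with the Besov characterization from \Cref{def:2}; and (ii) for smooth diffeomorphisms between bounded open subsets of $\R^d$, the Euclidean Sobolev norm of a compactly supported function is preserved up to a multiplicative constant under pullback, a standard result found in e.g.\ \textcite{triebel_ii}.

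For the direction $\norm{f}_{H_\c{T}^s(\c{M})} \lesssim \norm{f}_{H^s(\c{M})}$, fact (i) gives $\norm{\chi_l f}_{H^s(\c{M})} \lesssim \norm{f}_{H^s(\c{M})}$ for each chart $l$. Since $\chi_l f$ has compact support in $\c{U}_l$, one can pull it back via $\phi_l^{-1}$ and bound the resulting Euclidean Sobolev norm by $\norm{\chi_l f}_{H^s(\c{M})}$ using the local-equivalence assertion below; summing over $l$ and squaring yields the desired inequality. Conversely, decomposing $f = \sum_l \chi_l f$ and applying the triangle inequality reduces the reverse direction to bounding each $\norm{\chi_l f}_{H^s(\c{M})}$ by $\norm{(\chi_l f) \circ \phi_l^{-1}}_{H^s(\R^d)}$; Cauchy--Schwarz then passes from the $\ell^1$ sum over $l$ to the $\ell^2$ form appearing in $\norm{f}_{H_\c{T}^s(\c{M})}$, at the cost of an absolute constant depending only on $L$.

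The main obstacle is establishing the local norm equivalence: for $g$ compactly supported in $\c{U}_l$, each of $\norm{g}_{H^s(\c{M})}$ and $\norm{g \circ \phi_l^{-1}}_{H^s(\R^d)}$ must bound the other up to a multiplicative constant. In the local coordinates $\phi_l$, the Laplace--Beltrami operator takes the form of a second-order elliptic operator with smooth coefficients and non-degenerate principal symbol on $\c{V}_l$. For integer $s$ the equivalence reduces to elliptic regularity applied to compactly supported functions, together with a direct comparison of covariant and Euclidean partial derivatives in the chart. For non-integer $s$ one either invokes complex interpolation between consecutive integer exponents, or appeals to the pseudo-differential operator calculus, wherein $(I+\Delta)^{s/2}$ is an elliptic pseudo-differential operator of order $s$ on $\c{M}$ whose standard mapping properties yield the desired comparison once restricted to the chart neighborhood.
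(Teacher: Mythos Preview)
Your proposal is correct. The paper's own proof is extremely terse: it invokes \Cref{prop:1} for the Besov equivalence, exactly as you do, and then simply cites \textcite{rkhs_manifolds2022} for the equivalence between the spectral norm $\norm{\cdot}_{H^s(\c{M})}$ and the chart norm $\norm{\cdot}_{H_\c{T}^s(\c{M})}$, without reproducing any argument. Your outline---boundedness of multiplication by $\c{C}^\infty$ functions, diffeomorphism invariance of Euclidean Sobolev norms on compactly supported functions, and the local spectral-to-Euclidean comparison via elliptic regularity for integer $s$ followed by interpolation (or via the $\Psi$DO calculus for $(I+\Delta)^{s/2}$)---is precisely the standard route one finds in the cited reference and in \textcite{triebel_ii}. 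So you have supplied the content the paper outsources; there is no genuine divergence in strategy, only in level of detail.
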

\begin{proof}
The equivalence between $\norm{\cdot}_{H^s(\c{M})}$ and $\norm{f}_{B_{2,2}^s(\c{M})}$ is given by~
\Cref{prop:1}.
The proof of equivalence between $\norm{\cdot}_{H^s(\c{M})}$ and $\norm{f}_{H_\c{T}^s (\c{M})}$ can be found in~\textcite{rkhs_manifolds2022}.
\end{proof}

\subsection{Gaussian Random Elements}

Here we recall the definition of a Gaussian process as a Banach-space-valued random variable, following for instance \textcite{rkhs_gaussian}.

\begin{definition}[Gaussian random element]
Let $\del{\bb{B},\norm{\cdot}_{\bb{B}}}$ be a Banach space, and $f$ be a Borel random variable with values in $\bb{B}$ almost surely. We say that $f$ is a Gaussian random element if $b^* \del{f}$ is a univariate Gaussian random variable for every bounded linear functional $b^*: \bb{B} \to \R$.
\end{definition}  

Random variables of this kind are also sometimes called \emph{Gaussian in the sense of duality}.
One should think of a Gaussian random element as a generalization of a Gaussian process, but which is better-behaved from a function-analytic point of view and in particular does not require the process to be an actual function---as opposed to, for instance, a measure or a distribution.
Many connections between the usual Gaussian processes and Gaussian random elements exist, see \textcite{lifshits12}, \textcite[Appendix I]{ghosal_van_der_vaart_2017}, \textcite{van_der_Vaart_2008} for details.
The following observation about Gaussian random elements will be useful later.

\begin{lemma} \label{lemma:gaussian_random_element}
A Gaussian process $f$ on the manifold $\c{M}$ with almost surely continuous sample paths is a Gaussian random element in the Banach space $\del{\c{C} \del{\c{M}},\norm{\cdot}_{\c{C}(\c{M})}}$ of continuous functions on $\c{M}$.  
\end{lemma}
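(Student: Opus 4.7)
The plan is to verify the two defining properties of a Gaussian random element in $\del{\c{C}(\c{M}), \norm{\cdot}_{\c{C}(\c{M})}}$: that $f$ is a Borel random variable taking values in $\c{C}(\c{M})$, and that $b^*(f)$ is univariate Gaussian for every bounded linear functional $b^* : \c{C}(\c{M}) \to \R$. The first is essentially immediate from the hypotheses: almost sure continuity of sample paths places $f$ into $\c{C}(\c{M})$ on a set of full probability, and since $\c{M}$ is a compact metric space under its geodesic distance, the Banach space $\c{C}(\c{M})$ is separable, so its Borel $\sigma$-algebra coincides with the $\sigma$-algebra generated by the point-evaluation maps $g \mapsto g(x)$ for $x \in \c{M}$. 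Measurability of $f$ as a $\c{C}(\c{M})$-valued map then reduces to measurability of each real-valued coordinate $\omega \mapsto f(x)(\omega)$, which holds by definition of a Gaussian process.

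For the second property, I would invoke the Riesz--Markov--Kakutani representation theorem to identify every $b^* \in \c{C}(\c{M})^*$ with a finite signed Borel measure $\nu$ on $\c{M}$ via $b^*(g) = \int_{\c{M}} g \d\nu$. Using the Jordan decomposition $\nu = \nu^+ - \nu^-$, I would construct a sequence of measurable partitions $\cbr{A_i^{(n)}}_i$ of $\c{M}$ with mesh tending to zero, pick representatives $x_i^{(n)} \in A_i^{(n)}$, and consider the Riemann-type sums $S_n = \sum_i \nu(A_i^{(n)}) f(x_i^{(n)})$. Each $S_n$ is, by construction, a finite $\R$-linear combination of the jointly Gaussian family $\cbr{f(x) : x \in \c{M}}$, hence itself univariate Gaussian. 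Compactness of $\c{M}$ together with almost sure continuity of $f$ gives almost sure uniform continuity of $f$, from which one concludes $S_n \to b^*(f)$ almost surely. Since almost sure limits of one-dimensional Gaussian random variables are Gaussian---treating any degenerate limit as a special case---this yields the claim.

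The main obstacle is making the approximation step precise for an arbitrary finite signed Borel measure $\nu$, which may be singular with respect to the Riemannian volume and supported on a small set. The key quantitative bound is $\abs{S_n - b^*(f)} \leq \omega_f\del{\mathrm{mesh}(\cbr{A_i^{(n)}})} \cdot \abs{\nu}(\c{M})$, where $\omega_f$ is the random modulus of continuity of $f$, which is almost surely well-defined and tends to zero at zero by compactness of $\c{M}$ and sample path continuity. This estimate handles the approximation uniformly in $\nu$ for each realisation of $f$ and therefore delivers both the almost sure convergence $S_n \to b^*(f)$ and the transfer of Gaussianity to the limit.
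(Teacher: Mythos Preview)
Your argument is correct. The paper's own proof is a one-line citation: since $\c{C}(\c{M})$ is separable, the claim follows from Lemma~I.6 of Ghosal and van~der~Vaart (2017). You instead give a self-contained argument that essentially reproves that cited lemma in the concrete setting at hand: separability of $\c{C}(\c{M})$ to identify the Borel and cylindrical $\sigma$-algebras for the measurability step, and Riesz--Markov--Kakutani plus Riemann-sum approximation together with the modulus-of-continuity bound to show every bounded linear functional is an almost sure limit of finite linear combinations of point evaluations, hence Gaussian. The paper's route is shorter for a reader willing to chase the reference; yours is more informative because it makes explicit where separability, compactness, and sample-path continuity each enter, and it requires no external appeal. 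Both rely on the same underlying mechanism, namely that on a separable $\c{C}(\c{M})$ every element of the dual can be approximated, in a sense compatible with the process, by linear combinations of evaluation functionals.
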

\begin{proof}
Since $\c{C}\del{\c{M}}$ is separable, this follows from Lemma I.6 in \textcite{ghosal_van_der_vaart_2017}.
\end{proof}

\subsection{A Technical Lemma}

In order to apply Bernstein's inequality when going from the error at input locations to the $L^2(p_0)$ error, we will use the following technical extrapolation lemma.

\begin{lemma}\label{lemma:extrapolation_inequality}
For any function $g : \c{M} \to \R$, a number $\gamma \in \R_{> 0} \setminus \Z_{>0}$ and a density $p_0: \c{M} \to \R_{> 0}$ with $1 \lesssim p_0$, we have
\[
\norm{g}_{L^{\infty}(\c{M})} \lesssim \norm{g}_{\c{CH}^\gamma \del{\c{M}}}^{\frac{d}{2\gamma+d}} \norm{g}_{L^2(p_0)}^{\frac{2\gamma}{2\gamma+d}}.
\]
\end{lemma}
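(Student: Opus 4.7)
The strategy is a Gagliardo--Nirenberg-style interpolation. Since $\c{CH}^\gamma(\c{M}) \hookrightarrow \c{C}(\c{M})$ and $\c{M}$ is compact, there exists $x_0 \in \c{M}$ attaining $|g(x_0)| = M := \norm{g}_{L^{\infty}(\c{M})}$. To exploit this I would localize to a chart around $x_0$: the atlas $\c{T} = \del{\c{U}_l, \phi_l, \chi_l}_{l=1}^L$ defining $\norm{\cdot}_{\c{CH}^\gamma(\c{M})}$ satisfies $\sum_l \chi_l \equiv 1$, so by pigeonhole there is an $l$ with $\chi_l(x_0) \geq 1/L$. Setting $\tilde{g} := (\chi_l g) \circ \phi_l^{-1}$ on $\R^d$, \Cref{def:Holder} gives $\norm{\tilde{g}}_{\c{CH}^\gamma(\R^d)} \leq \norm{g}_{\c{CH}^\gamma(\c{M})} =: C_H$, while $|\tilde{g}(\tilde{y}_0)| \geq M/L$ at $\tilde{y}_0 := \phi_l(x_0) \in (0,1)^d$.

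The core step is the oscillation estimate
$|\tilde{g}(y) - \tilde{g}(\tilde{y}_0)| \leq M/(2L)$ for $|y - \tilde{y}_0| \leq r$, with $r \sim (M/C_H)^{1/\gamma}$. Given such a bound, integrating on the corresponding ball and using $p_0 \gtrsim 1$ together with the bounded Jacobian of $\phi_l^{-1}$ (smoothness plus compactness) yields
\[
\norm{g}_{L^2(p_0)}^2
\,\gtrsim\, \int_{\c{M}} |g|^2 \, d\mu
\,\gtrsim\, \int_{B(\tilde{y}_0, r)} |\tilde{g}(y)|^2 \, dy
\,\gtrsim\, M^2 r^d
\,\gtrsim\, M^{2+d/\gamma} C_H^{-d/\gamma}
.
\]
Rearranging immediately gives $M \lesssim C_H^{d/(2\gamma+d)} \norm{g}_{L^2(p_0)}^{2\gamma/(2\gamma+d)}$, as required.

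The main obstacle is the oscillation bound when $\gamma > 1$: a direct mean-value estimate yields only a linear rate in $|y - \tilde{y}_0|$, which is weaker than what is needed. I would handle this using the equivalent characterization of $\c{CH}^\gamma(\R^d)$ by finite differences of order $\lfloor \gamma \rfloor + 1$ (the Hölder--Zygmund norm), which yields the sharp $|y - \tilde{y}_0|^\gamma$ decay in a symmetrized form sufficient for the localization argument above. An alternative---and arguably cleaner---route is to directly invoke the classical Euclidean Gagliardo--Nirenberg inequality
\[
\norm{u}_{L^{\infty}(\R^d)} \lesssim \norm{u}_{\c{CH}^\gamma(\R^d)}^{d/(2\gamma+d)} \norm{u}_{L^2(\R^d)}^{2\gamma/(2\gamma+d)}
\]
for compactly supported $u \in \c{CH}^\gamma(\R^d)$ (e.g.\ Brezis--Mironescu), apply it to $\tilde{g}$, and bound $\norm{\tilde{g}}_{L^2(\R^d)}$ by $\norm{g}_{L^2(p_0)}$ exactly as in the chain above.
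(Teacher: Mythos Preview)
Your alternative route at the end---invoking the Euclidean interpolation inequality $\norm{u}_{L^\infty} \lesssim \norm{u}_{\c{CH}^\gamma}^{d/(2\gamma+d)} \norm{u}_{L^2}^{2\gamma/(2\gamma+d)}$ directly on $\tilde g = (\chi_l g)\circ\phi_l^{-1}$ and then bounding $\norm{\tilde g}_{L^2(\R^d)} \lesssim \norm{g}_{L^2(p_0)}$ via change of variables and $p_0 \gtrsim 1$---is exactly what the paper does. The paper cites this Euclidean inequality as Lemma~15 of \textcite{JMLR:v12:vandervaart11a} on $[0,1]^d$ (using $B_{\infty,\infty}^\gamma = \c{CH}^\gamma$ for $\gamma\notin\Z$), applies it chart by chart, and bounds the resulting $L^2$ pieces by $\norm{g}_{L^2(p_0)}$ in the same way. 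Your pigeonhole trick to select a single chart with $\chi_l(x_0)\geq 1/L$ is a cosmetic variant of the paper's $\sum_l \leq L\max_l$ step.

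Your primary approach---the direct localization argument---is more self-contained and works cleanly for $0<\gamma<1$, but the extension to $\gamma>1$ via higher-order finite differences is not convincing as written. The Zygmund characterization controls $\norm{\Delta_h^m \tilde g}_{L^\infty}$, not the two-point oscillation $|\tilde g(y)-\tilde g(\tilde y_0)|$, and the decay $|y-\tilde y_0|^\gamma$ from a single point genuinely fails once $\gamma>2$: even at an interior maximum of $\tilde g$ (where $\nabla\tilde g$ vanishes but $\nabla^2\tilde g$ need not), Taylor expansion only yields quadratic decay, which feeds back the suboptimal exponent $d/(4+d)$ in place of $d/(2\gamma+d)$. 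For $1<\gamma<2$ the idea can be rescued by working at the true maximum of $|\tilde g|$ (where $\nabla\tilde g=0$ and $\nabla\tilde g$ is $(\gamma-1)$-H\"older), but not beyond. So for the full range of $\gamma$ you really do need to invoke the Euclidean inequality as a black box, which is your second route and the paper's.
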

\begin{proof}
We use Lemma 15 from \textcite{JMLR:v12:vandervaart11a} and push it through charts. More precisely we have, using $B_{\infty, \infty}^\gamma \del{[0,1]^D} = \c{CH}^\gamma \del{[0,1]^D}$ for $\gamma \notin \Z_{>0}$, that
\[
\norm{g}_{L^{\infty}(\c{M})} \leq & \sum_l \norm{\del{\chi_l g} \circ \phi_l^{-1}}_{L^\infty \del{\c{V}_l}} \\
\lesssim & \max_l \norm{\del{\chi_l g} \circ \phi_l^{-1}}_{\c{CH}^\gamma \del{\c{V}_l}}^{\frac{d}{2\gamma+d}} \norm{\del{\chi_l g} \circ \phi_l^{-1}}_{L^2 \del{\c{V}_l}}^{\frac{2\gamma}{2\gamma + d}}
.
\]
By definition of the the manifold Hölder spaces, this gives
\[
\norm{g}_{L^{\infty}(\c{M})} \lesssim \norm{g}_{\c{CH}^\gamma \del{\c{M}}}^{\frac{d}{2\gamma+d}} \max_l \norm{\del{\chi_l g} \circ \phi_l^{-1}}_{L^2 \del{\c{V}_l}}^{\frac{2\gamma}{2\gamma + d}}
.
\]
Finally, since $\chi_l$ are bounded, the charts are smooth, and $p_0$ is lower bounded, we have
\[
\norm{\del{\chi_l g} \circ \phi_l^{-1}}_{L^2 \del{\c{V}_l}}^2 = \int_{\c{V}_l} \abs{\del{\chi_l g} \circ \phi_l^{-1} (y)}^2 dy \lesssim \int_{\c{U}_l} g^2(x) p_0(x)\mu(dx) \lesssim \norm{g}_{L^2(\c{M})}^2
\]
which gives the result.
\end{proof}

This concludes the necessary preliminaries. 
We now turn to the proofs of our main results.

\section{Proofs}
\label{appdx:proofs}

Throughout this section we use the notation from~\Cref{appdx:function_spaces} and results from~\Cref{appdx:rkhs,appdx:concentration,appdx:small_ball}.
We start by defining our notation for Gaussian likelihood and probability distribution of the sample.

\begin{definition}
For every $\v{x} \in \c{M}^n$ and $f : \c{M} \to \R$ we define $p^{(f)}_{\v{x}, \v{y}}$ to be the joint distribution corresponding to the marginal $p_{\v{x}} = p_0$ and conditional $p^{(f)}_{\v{y} \given \v{x}} = \f{N}(f(\v{x}), \sigma_{\eps}^2 \m{I})$,
where $f(\v{x})$ is the vector with entries $f(x_i)$.
Expectations with respect to $p^{(f)}_{\v{x}, \v{y}}$, to $p_{\v{x}}$ and to $p^{(f)}_{\v{y} \given \v{x}}$ we denote by $\E^{(f)}_{\v{x}, \v{y}}$, by $\E_{\v{x}}$ and by $\E^{(f)}_{\v{y} \given \v{x}}$, respectively.
Sometimes, when $f$ is clear from the context, we omit the subscript ${}^{(f)}$.
\end{definition}

Using \textcite[Theorem 1]{JMLR:v12:vandervaart11a}, which is valid for any compact metric space and thus also for $\c{M}$, we can deduce a posterior contraction rate at data input locactions, with respect to the \emph{empirical $L^2$-norm}\footnote{This is actually a seminorm, but we follow the rest of the literature in referring to it as a norm.}
\[
\norm{f}_n = \del{\frac{1}{n} \sum_{i=1}^n f(x_i)^2}^{1/2}
\]
by studying the \emph{concentration functions} with respect to the uniform norm.
This is the purpose of the following lemma, whose proof mainly follows~\cite{JMLR:v12:vandervaart11a}, but crucially relies on two new components: (i) the \emph{small ball asymptotics} for manifolds we study in~\Cref{appdx:small_ball} and (ii) the characterization of the RKHS corresponding to both intrinsic and extrinsic priors as manifold Sobolev spaces we obtain in~\Cref{appdx:rkhs}.
We recall that the prior $\Pi_n$ may depend on $n$ if we consider a truncated intrinsic Matérn process.

\begin{theorem}\label{lemma:concentration_functions}
Let $\Pi_n$ denote the prior in either \Cref{thm:intrinsic}, \Cref{thm:truncated_intrinsic} or \Cref{thm:extrinsic} with smoothness parameter $\nu > d/2$.
Let $\bb{H}_n$ denote the corresponding RKHS.
Define the \emph{concentration function} for $f_0 \in C(\c{M})$ and $\eps > 0$ by
\[ \label{eqn:concentration_function}
\varphi_{f_0} \del{\eps} = - \ln \P_{f \sim \Pi_n} \sbr{\norm{f}_{L^{\infty}(\c{M})} < \eps} + \inf_{f \in \bb{H}_n:\, \norm{f-f_0}_{L^{\infty}(\c{M})} < \eps}
\norm{f}_{\bb{H}_n}^2
.
\]
Then if $f_0 \in H^\beta \del{\c{M}} \cap B_{\infty, \infty}^\beta \del{\c{M}}, \beta >0$ we have $\varphi_{f_0}\del{\eps_n}\leq n \eps_n^2$
for $\eps_n$ a multiple of $n^{- \frac{\min(\nu, \beta)}{2 \nu + d}}$.
\end{theorem}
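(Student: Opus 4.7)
The plan is to bound the two components of the concentration function $\varphi_{f_0}$ separately---the small-ball term $-\ln \P_{f \~ \Pi_n}\sbr{\norm{f}_{L^{\infty}(\c{M})} < \eps}$ and the approximation term $\inf_{f \in \bb{H}_n,\, \norm{f-f_0}_{L^{\infty}(\c{M})}<\eps}\norm{f}_{\bb{H}_n}^2$---and then balance their $\eps$-dependence. The unifying device is the identification, established in \Cref{appdx:rkhs}, of the RKHS $\bb{H}_n$ for each of the three priors as (norm-equivalent to) a natural subspace of $H^{\nu+d/2}\del{\c{M}}$: the full Sobolev space for the intrinsic prior, a finite-dimensional truncation in the case of \Cref{thm:truncated_intrinsic}, and the image of such a space under a bounded linear isomorphism in the extrinsic case, constructed via the trace and extension operators of \textcite{trace_ex_sobolev_manifold}.

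For the small-ball term, I would invoke the Kuelbs--Li correspondence between the Gaussian small-ball exponent and the metric entropy of the RKHS unit ball in the uniform norm, which is exactly the content of \Cref{appdx:small_ball}. The remaining analytic task is to estimate $\log N\del{\eps,\, B_1\del{H^{\nu+d/2}\del{\c{M}}},\, \norm{\.}_{L^{\infty}(\c{M})}}$, which I would do by passing to charts using the norm equivalence $\norm{\.}_{H^s(\c{M})} \asymp \norm{\.}_{H_{\c{T}}^s(\c{M})}$ of \Cref{thm:sobolev_spaces} and invoking the classical entropy bound for Sobolev balls on $(0,1)^d$. This gives
\[
-\ln \P_{f \~ \Pi_n}\sbr{\norm{f}_{L^{\infty}(\c{M})} < \eps} \lesssim \eps^{-d/\nu}
\]
which is $\leq n\eps_n^2/2$ whenever $\eps_n$ is at least a constant multiple of $n^{-\nu/(2\nu+d)}$.

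For the approximation term, the candidate is the low-frequency piece $f^{(J)} = \Phi_j\del[0]{\sqrt{\Delta}} f_0$ with $2^j \asymp \eps_n^{-1/\beta}$, so that only eigenfunctions with index $\ell \lesssim 2^{jd} =: J$ contribute. The Besov hypothesis $f_0 \in B_{\infty,\infty}^\beta\del{\c{M}}$ together with \Cref{def:2} yields $\norm{f_0 - f^{(J)}}_{L^{\infty}(\c{M})} \lesssim 2^{-j\beta} \asymp \eps_n$. Using the Sobolev characterization of \Cref{def:1} together with Weyl's law, and the fact that $\Phi_j$ is supported in $[0,2^{j+1}]$ and bounded by $1$,
\[
\norm{f^{(J)}}_{H^{\nu+d/2}(\c{M})}^2
\lesssim
4^{j \max(0,\,\nu+d/2-\beta)}
\norm{f_0}_{H^\beta(\c{M})}^2
\asymp
\eps_n^{-2\max(0,\,\nu+d/2-\beta)/\beta}
\norm{f_0}_{H^\beta(\c{M})}^2
\]
(with the RHS uniformly bounded by $\norm{f_0}_{H^{\nu+d/2}(\c{M})}^2$ when $\beta>\nu+d/2$). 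Requiring this to be $\leq n\eps_n^2/2$ reduces to $\eps_n \gtrsim n^{-\beta/(2\nu+d)}$, and combining with the small-ball condition produces $\eps_n \asymp n^{-\min(\nu,\beta)/(2\nu+d)}$ as claimed.

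For the truncated prior of \Cref{thm:truncated_intrinsic} the only extra point is that $f^{(J)}$ must lie in the finite-dimensional RKHS, i.e.\ $J \leq J_n$; substituting $J \asymp \eps_n^{-d/\beta} \asymp n^{d\min(1,\nu/\beta)/(2\nu+d)}$ shows this matches exactly the hypothesis on $J_n$. The main obstacle lies in the extrinsic case: there $\bb{H}_n$ does not literally equal $H^{\nu+d/2}\del{\c{M}}$, and both the entropy estimate for the small-ball term and the construction of the approximating RKHS element must be transported through the trace/extension isomorphism of \Cref{appdx:rkhs}. Once this identification is in hand, the two bounds above apply to the corresponding preimages of $f^{(J)}$ and of the unit ball, and the same balancing yields the claim.
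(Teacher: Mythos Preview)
Your proposal is correct and follows essentially the same route as the paper: the small-ball term via the metric entropy of the Sobolev unit ball in $L^\infty$ (\Cref{appdx:small_ball}), the approximation term via the low-pass projection $\Phi_j(\sqrt{\Delta})f_0$ with the Besov hypothesis controlling the uniform error and the Sobolev characterization controlling the RKHS norm, then balancing the two. The only caveat is that the extrinsic case is simpler than you suggest: \Cref{prop:rkhs_extrinsic_Matern} shows the extrinsic RKHS is norm-equivalent to $H^{\nu+d/2}(\c{M})$ as \emph{sets} with equivalent norms, so $f^{(J)}$ is directly an element of $\bb{H}_n$ with $\norm{f^{(J)}}_{\bb{H}_n}\lesssim\norm{f^{(J)}}_{H^{\nu+d/2}(\c{M})}$, and no further transport through trace/extension is needed at this stage of the argument.
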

\begin{proof}
The first term on the right-hand side of~\Cref{eqn:concentration_function} is bounded by $C \eps^{-d/\nu}$ by \Cref{lemma:small_ball}.
To bound the second term, we assume, without loss of generality,\footnote{If $\beta > \nu$ then $f_0 \in H^\beta \del{\c{M}} \cap B_{\infty, \infty}^\beta \subseteq H^\nu \del{\c{M}} \cap B_{\infty, \infty}^\nu \del{\c{M}}$ gives $\eps_n \propto n^{- \frac{\nu}{2\nu + d}} = n^{- \frac{\min(\beta,\nu)}{2\nu + d}}$.} that $\nu \geq \beta$.
Consider an approximation $f = \Phi_j \del[0]{\sqrt{\Delta}}f_0$ of $f_0$, where $c \eps \leq 2^{-\beta j} \leq \eps$ and $c > 0$ does not depend on $j$.
Since we assume $f_0 \in B_{\infty, \infty}^\beta (\c{M})$, by definition of $B_{\infty, \infty}^\beta (\c{M})$ we have
\[
\norm{f_0 - f}_{L^{\infty}(\c{M})}
\leq
\norm{f_0}_{B_{\infty, \infty}^\beta (\c{M})}
2^{-\beta j}
\lesssim
\eps
\]
where in the last inequality the $B_{\infty, \infty}^\beta (\c{M})$-norm is the constant implied by notation $\lesssim$. We now show that
\[
\norm{f}_{\bb{H}_n}^2 \lesssim \eps^{-\frac{2}{\beta}\del{\nu - \beta + d/2}}
.
\]
First notice that by \Cref{prop:rkhs_intrinsic,prop:rkhs_extrinsic_Matern}, for any prior considered here we have $\bb{H}_n \subseteq H^{\nu + d/2}\del{\c{M}}$ and $\norm{\cdot}_{\bb{H}_n} \lesssim \norm{\cdot}_{H^{\nu + d/2}\del{\c{M}}}$ with a constant that does not depend on $n$. Hence using \Cref{thm:weyls_law} (Weyl's Law) and properties of $\Phi$, we have
\[
\norm{f}_{\bb{H}_n}^2 &\lesssim \norm{f}_{H^{\nu+d/2}\del{\c{M}}}^2 
\\
&=
\sum_{l \geq 0} (1+\lambda_l)^{\nu+d/2}
\Phi^2 \del{2^{-j} \sqrt{\lambda_l}}
\abs{\innerprod{f_l}{f_0}_{L^2(\c{M})}}^2
\\
&\leq
\sum_{l : \sqrt{\lambda_l} \leq 2^{j+1}}
(1+\lambda_l)^{\nu+d/2-\beta}
(1+\lambda_l)^\beta
\abs{\innerprod{f_l}{f_0}_{L^2(\c{M})}}^2
\\
&\leq 2^{(j+2)(\nu+d/2-\beta)} \sum_{l : \sqrt{\lambda_l} \leq 2^{j+1}}
(1+\lambda_l)^\beta
\abs{\innerprod{f_l}{f_0}_{L^2(\c{M})}}^2 
\\
&\leq 2^{(j+2)(\nu+d/2-\beta)} \sum_{l \geq 0}
(1+\lambda_l)^\beta
\abs{\innerprod{f_l}{f_0}_{L^2(\c{M})}}^2 
\\
&=
2^{(j+2)(\nu+d/2-\beta)} \norm{f_0}_{H^\beta\del{\c{M}}}^2
\\
&\lesssim
\eps^{-\frac{1}{\beta}(\nu+d/2-\beta)} \norm{f_0}_{H^\beta\del{\c{M}}}^2
\lesssim
\eps^{-\frac{2}{\beta}(\nu+d/2-\beta)} \norm{f_0}_{H^\beta\del{\c{M}}}^2
.
\]
Our assumption $\nu \geq \beta$ implies that
\[
\frac{2}{\beta}\del{\nu-\beta+d/2}
\geq 
\frac{d}{\beta}
\geq
\frac{d}{\nu}
.
\]
Hence, we have $\eps^{-d/\nu} \leq \eps^{-\frac{2}{\beta}(\nu-\beta+d/2)}$ which gives us $\varphi_{f_0} \del{\eps} \lesssim \eps^{- \frac{2}{\beta}(\nu-\beta+d/2)}$.
It is then easy to check that $\eps_n = M n^{- \frac{\beta}{2\nu + d}}$ satisfies $\varphi_{f_0}\del{\eps_n} \leq n \eps_n^2$ for $M>0$ large enough.
\end{proof}

From this, we deduce an upper bound on the error in \emph{the empirical $L^2$ norm} $\norm{\cdot}_n$, that is, on the Euclidean distance between the posterior Gaussian process $f$ and the ground truth function $f_0$ evaluated at data locations $x_i$.

\begin{lemma}\label{rate:empirical_l2}
Let $\Pi_n$ denote the prior in either \Cref{thm:intrinsic}, \Cref{thm:truncated_intrinsic} or \Cref{thm:extrinsic} with smoothness parameter $\nu > d/2$.
Fix $f_0 \in H^\beta \del{\c{M}} \cap B_{\infty, \infty}^\beta \del{\c{M}}$ with $\beta > 0$.
Then
\[
\E_{f \~ \Pi_n(\cdot \given \v{x},\v{y})}
\norm{f - f_0}_n^2
\leq
\eps_n^2
\]
for $\eps_n \propto n^{- \frac{\min(\nu, \beta)}{2\nu + d}}$ with constant depending on $f_0,\nu$ but not on $\v{x}$.
\end{lemma}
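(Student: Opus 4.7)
The plan is to reduce the statement to a direct application of Theorem~1 of \textcite{JMLR:v12:vandervaart11a}, which, for Gaussian process regression on a compact metric space, converts a concentration function bound of the form $\varphi_{f_0}\del{\eps_n} \leq n \eps_n^2$ into a posterior contraction rate in the empirical $L^2$-seminorm $\norm{\cdot}_n$. Since $\c{M}$ endowed with its geodesic distance is a compact metric space, that theorem applies to our setting without modification, and the required concentration function input with $\eps_n \propto n^{-\min(\beta,\nu)/(2\nu+d)}$ is exactly what \Cref{lemma:concentration_functions} supplies, simultaneously for all three priors under study.

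Concretely, the first step is to invoke \textcite{JMLR:v12:vandervaart11a}'s argument to obtain, for a sufficiently large constant $M > 0$, the high-probability statement $\Pi_n\del{\norm{f - f_0}_n > M \eps_n \given \v{x}, \v{y}} \lesssim e^{-c n \eps_n^2}$ on an event whose $p_{\v{x},\v{y}}^{(f_0)}$-probability tends to one at a similar exponential rate. The second step is to upgrade this probability statement to the required bound on $\E_{f \sim \Pi_n}\norm{f - f_0}_n^2$ by splitting the posterior expectation along the event $\cbr{\norm{f - f_0}_n \leq M \eps_n}$: on this event the contribution is at most $M^2 \eps_n^2$, while on its complement one combines the exponentially small posterior mass with the at most polynomial a priori growth of $\E_{f \sim \Pi_n}\norm{f - f_0}_n^2$, which follows from the explicit Gaussianity of the posterior together with the uniform boundedness of $k(x_i, x_i)$ for our Matérn priors.

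The main subtlety, and the step that demands the most care, is the truncated intrinsic case of \Cref{thm:truncated_intrinsic}, where the prior $\Pi_n$ itself varies with $n$. In that setting one must verify that the two ingredients feeding into \Cref{lemma:concentration_functions}, namely the small-ball estimate from \Cref{appdx:small_ball} and the RKHS inclusion $\bb{H}_n \subseteq H^{\nu+d/2}\del{\c{M}}$ from \Cref{appdx:rkhs}, both hold with constants uniform in $n$ as the truncation level $J_n$ grows; this requires explicit tracking of the dependence of the relevant constants on $J_n$. Once such uniformity is secured, the argument proceeds identically to the fixed-prior intrinsic and extrinsic cases, and the stated empirical-$L^2$ rate follows.
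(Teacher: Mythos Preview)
Your approach is essentially the same as the paper's: both feed the concentration-function bound $\varphi_{f_0}(\eps_n)\leq n\eps_n^2$ from \Cref{lemma:concentration_functions} into Theorem~1 (together with Proposition~11) of \textcite{JMLR:v12:vandervaart11a}, which already delivers the empirical-$L^2$ posterior risk bound directly in expectation form, so your separate probability-to-expectation upgrade step is superfluous. The uniformity-in-$n$ concern for the truncated prior that you raise is genuine, but it is resolved inside \Cref{lemma:concentration_functions} itself (through \Cref{prop:rkhs_intrinsic} and \Cref{lemma:small_ball}), not in the proof of the present lemma.
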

\begin{proof}
By \Cref{lemma:concentration_functions} for $\eps_n$ a multiple of $n^{- \frac{\min(\beta, \nu)}{2\nu+d}}$, we have $\varphi_{f_0} \del{\eps_n} \leq n \eps_n^2$. 
By virtue of this, the proof of Theorem 1 and Proposition 11 of \textcite{JMLR:v12:vandervaart11a} imply the result. 
Indeed, the proof of Theorem 1 relies solely on the fact that $\varphi_{f_0} \del{\eps_n/2} \leq n \eps_n^2$ and an application of \textcite[Proposition 11]{JMLR:v12:vandervaart11a}.
We have $\varphi_{f_0} \del{\eps_n} \leq n \eps_n^2 \leq n \del{2 \eps_n}^2$ and hence the condition is satisfied with $\eps_n$ replaced by $2 \eps_n$.
\end{proof}

We now turn to the proofs of our main results, \Cref{thm:intrinsic,thm:truncated_intrinsic,thm:extrinsic}, which for convenience we restate below.
The idea of these proofs is to extend the result of~\Cref{rate:empirical_l2} from input locations to the whole manifold $\c{M}$ using an appropriate concentration inequality.
To this end, the proof closely follows the one of Theorem 2 in~\textcite{JMLR:v12:vandervaart11a}, but relies on \Cref{rate:empirical_l2} proved above and on the concentration inequality we prove in~\Cref{appdx:concentration} along with some important sample differentiablity properties of the prior processes.

\ThmIntrinsic*

\ThmTruncatedIntrinsic*

\ThmExtrinsic*

\begin{proof}[Proof of~\Cref{thm:intrinsic,thm:truncated_intrinsic,thm:extrinsic}]
To ease notation, we denote the expectations under the true data generating process by $\E_{\v{x}, \v{y}} = \E^{(f_0)}_{\v{x}, \v{y}}$ and $\E_{\v{y} \given \v{x}} = \E^{(f_0)}_{\v{y} \given \v{x}}$, omitting the superscript ${(\.)}^{f_0}$.
Take $\eps_n \propto n^{- \frac{\min(\beta, \nu)}{2\nu+d}}$ satisfying $\varphi_{f_0} \del{\eps_n/2} \leq n \eps_n^2$, noting that such a rate exists by~\Cref{lemma:concentration_functions}. 
Then, for each $n$, there exists an element $f_n \in \bb{H}_n$, where $\bb{H}_n$ is the RKHS corresponding to $\Pi_n$, satisfying
\[
\norm{f_n}_{\bb{H}_n}^2 \leq n \eps_n^2
&&
\norm{f_n - f_0}_{L^{\infty}(\c{M})} \leq \eps_n/2
.
\] 
Write
\[
\eps_n^{-2}
\E_{\v{x}, \v{y}}
\E_{f \~ \Pi_n(\cdot\given \v{x},\v{y})}
\norm{f-f_0}_{L^2 \del{p_0}}^2
&\lesssim
\eps_n^{-2}
\E_{\v{x}, \v{y}}
\E_{f \~ \Pi_n(\cdot\given \v{x},\v{y})}
\norm{f_n-f_0}_{L^2 \del{p_0}}^2
\\
&\quad+
\eps_n^{-2}
\E_{\v{x}, \v{y}}
\E_{f \~ \Pi_n(\cdot\given \v{x},\v{y})}
\norm{f-f_n}_{L^2 \del{p_0}}^2
\\
&\lesssim
1
+
\eps_n^{-2}
\E_{\v{x}, \v{y}}
\E_{f \~ \Pi_n(\cdot\given \v{x},\v{y})}
\norm{f-f_n}_{L^2 \del{p_0}}^2.
\]
Thus, we can work with $f-f_n$ instead of $f-f_0$.
Define $\c{B}(r) = \cbr[1]{\norm{f-f_n}_{L^2\del{p_0}} > \eps_n r}$.
Then
\[
\eps_n^{-2}
\E_{f \~ \Pi_n(\cdot\given \v{x},\v{y})}
\norm{f-f_n}_{L^2 \del{p_0}}^2.
=
\int_0^\infty
2 r
\P_{f \sim \Pi_n(\cdot \given \v{x}, \v{y})} \del{\c{B}(r)}
\d r
.
\]
Fix a $\gamma$ such that $d/2 < \gamma < \nu, \gamma \notin \Z_{>0}$ and $s > 0, \tau > 0$ and define
\[
\c{B}^{(\f{I})}(r)
&=
\cbr{2\norm{f-f_n}_n > \eps_n r}
\\
\c{B}^{(\f{II})}(r)
&=
\cbr{\norm{f}_{\c{CH}^\gamma \del{\c{M}}} > \tau \sqrt{n} \eps_n r^s}
\\
\c{B}^{(\f{III})}(r)
&=
\c{B}(r)
\setminus
\del{
\c{B}^{(\f{I})}(r)
\cup
\c{B}^{(\f{II})}(r)
}
.
\]
Then $\c{B}(r) \subseteq \c{B}^{(\f{I})}(r) \cup \c{B}^{(\f{II})}(r) \cup \c{B}^{(\f{III})}(r)$, and thus for an indexed family of events $\c{A}_r$ to be chosen later, we have
\[
\eps_n^{-2}
\E_{f \~ \Pi_n(\cdot \given \v{x},\v{y})}
\norm{f-f_n}_{L^2 \del{p_0}}^2
\lesssim
\int_0^\infty
r
\P_{f \sim \Pi_n(\cdot \given \v{x}, \v{y})} \del{\c{B}^{(\f{I})}(r)}
\d r
+
\int_0^\infty
r \1_{\c{A}_r^c}
&\d r
\\
+\int_0^\infty r\1_{\c{A}_r}
\P_{f \sim \Pi_n(\cdot \given \v{x}, \v{y})} \del{\c{B}^{(\f{II})}(r)}
\d r
+
\int_0^\infty
r
\1_{\c{A}_r}
\P_{f \sim \Pi_n(\cdot \given \v{x}, \v{y})} \del{\c{B}^{(\f{III})}(r)}
&\d r
.
\]
For the first term, by \Cref{rate:empirical_l2} applied conditionally on $\v{x}$, for which we got a bound on the integrated empirical $L^2$-norm uniformly on the design points, we have
\[
\E_{\v{x}, \v{y}}
\int_0^\infty
r
\P_{f \sim \Pi_n(\cdot \given \v{x}, \v{y})} \del{\c{B}^{(\f{I})}(r)}
\d r
\lesssim
\eps_n^{-2}
\E_{\v{x}, \v{y}}
\E_{f \~ \Pi_n(\cdot\given \v{x},\v{y})} \norm{f-f_0}_n^2
\lesssim
\eps_n^{-2}
\eps_n^{2}
=1
.
\]
Moreover, by Lemma 14 of \textcite{JMLR:v12:vandervaart11a} applied with $r$ in the notation of the reference being equal to $\sqrt{n} \eps_n r^s$, for each $r > 0$, the event 
\[
\c{A}_r (\v{x})
&=
\cbr{\!\v{u} \in \R^n \!:\!\!\! \int \!\frac{p_{\v{y}\given\v{x}}^{(f)} (\v{u})}{p_{\v{y}\given\v{x}}^{(f_0)} (\v{u})} \d \Pi_n \del{f} \geq e^{-n \eps_n^2 r^{2s}} \P_{f \sim \Pi_n} \del{\norm{f\!-\!f_0}_{L^{\infty}(\c{M})} \!<\! \eps_n r^s}\!\!}
\\
&\supseteq
\cbr{\!\v{u} \in \R^n \!:\!\!\! \int \!\frac{p_{\v{y}\given\v{x}}^{(f)} (\v{u})}{p_{\v{y}\given\v{x}}^{(f_0)} (\v{u})} \d \Pi_n \del{f} \geq e^{-n \eps_n^2 r^{2s}} \P_{f \sim \Pi_n} \del{\norm{f\!-\!f_0}_n \!<\! \eps_n r^s}\!\!}
\]
is such that
\[
p_{\v{y}\given\v{x}}^{(f_0)} \sbr{\c{A}_r^c (\v{x})} \leq e^{- n \eps_n^2 r^{2 s}/8}
.
\]
It should be noted that the $\sqrt{n}$ factor disappears because of the discrepancy between the empirical $L^2$ norm $\norm{\cdot}_n$ and the Euclidean norm used in \textcite{JMLR:v12:vandervaart11a}.
By Fubini's Theorem and since $n \eps_n^2 \geq n^{\frac{d}{2\nu+d}} \geq 1$, the second term is bounded by
\[
\E_{\v{x}, \v{y}} \int_0^\infty r \1_{\c{A}_r^c (\v{x})} \d r &= \int_0^\infty r \E_{\v{x}}\sbr{\E_{\v{y}\given\v{x}}\sbr{\1_{\c{A}_r^c (\v{x})}}} \d r 
\\
&\leq \int_0^\infty r e^{- n \eps_n^2 r^{2s}/8} \d r 
\\
&\leq \int_0^\infty r e^{-r^{2s}/8} \d r = C_s < \infty.
\]
It remains to bound the last two terms.
By Bayes' Rule, we have the equality
\[
\P_{f \sim \Pi_n(\cdot \given \v{x}, \v{y})} \del{\norm{f}_{\c{CH}^\gamma \del{\c{M}}} > \tau \sqrt{n} \eps_n r^s} &= \frac{\int_{\norm{f}_{\c{CH}^\gamma \del{\c{M}}} > \tau \sqrt{n} \eps_n r^s} p^{(f)}_{\v{y} \given \v{x}}(\v{y}) \d \Pi_n (f)}{\int p^{(f)}_{\v{y} \given \v{x}}(\v{y}) \d \Pi_n(f)}
\\
&= \frac{\int_{\norm{f}_{\c{CH}^\gamma \del{\c{M}}} > \tau \sqrt{n} \eps_n r^s} \frac{p^{(f)}_{\v{y} \given \v{x}}(\v{y})}{p^{(f_0)}_{\v{y} \given \v{x}}(\v{y})} \d \Pi_n (f)}{\int \frac{p^{(f)}_{\v{y} \given \v{x}}(\v{y})}{p^{(f_0)}_{\v{y} \given \v{x}}(\v{y})} \d \Pi_n(f)}
\]
therefore for $\v{y} \in \c{A}_r (\v{x})$, we have
\[
&\P_{f \sim \Pi_n(\cdot \given \v{x}, \v{y})} \del{\norm{f}_{\c{CH}^\gamma \del{\c{M}}} > \tau \sqrt{n} \eps_n r^s} 
\\
&\quad\leq
\frac{e^{n \eps_n^2 r^{2s}}}{\P_{f \sim \Pi_n} \del{\norm{f-f_0}_{L^{\infty}(\c{M})} < \eps_n r^s}}
\int_{\norm{f}_{\c{CH}^\gamma \del{\c{M}}} > \tau \sqrt{n} \eps_n r^s} \frac{p^{(f)}_{\v{y} \given \v{x}}(\v{y})}{p^{(f_0)}_{\v{y} \given \v{x}}(\v{y})} \d \Pi_n (f)
.
\]
Hence taking expectations, using Tonelli's Theorem, and $\E_{\v{x}, \v{y}}\frac{p^{(f)}_{\v{y} \given \v{x}}(\v{y})}{p^{(f_0)}_{\v{y} \given \v{x}}(\v{y})} = 1$ gives
\[
&\E_{\v{x}, \v{y}} \sbr{ \1_{\c{A}_r(\v{x})} \P_{f \sim \Pi_n(\cdot \given \v{x}, \v{y})} \del{\norm{f}_{\c{CH}^\gamma \del{\c{M}}} > \tau \sqrt{n} \eps_n r^s}} 
\\
&\quad\leq
\frac{e^{n \eps_n^2 r^{2s}}}{\P_{f \sim \Pi_n} \del{\norm{f-f_0}_{L^{\infty}(\c{M})} < \eps_n r^s}}
\E_{\v{x}, \v{y}}
\int_{\norm{f}_{\c{CH}^\gamma \del{\c{M}}} > \tau \sqrt{n} \eps_n r^s} \frac{p^{(f)}_{\v{y} \given \v{x}}(\v{y})}{p^{(f_0)}_{\v{y} \given \v{x}}(\v{y})} \d \Pi_n (f)
\\
&\quad=
\frac{e^{n \eps_n^2 r^{2s}}}{\P_{f \sim \Pi_n} \del{\norm{f-f_0}_{L^{\infty}(\c{M})} < \eps_n r^s}}
\int_{\norm{f}_{\c{CH}^\gamma \del{\c{M}}} > \tau \sqrt{n} \eps_n r^s} \E_{\v{x}, \v{y}}\frac{p^{(f)}_{\v{y} \given \v{x}}(\v{y})}{p^{(f_0)}_{\v{y} \given \v{x}}(\v{y})} \d \Pi_n (f)
\\
&\quad= \frac{e^{n \eps_n^2 r^{2s}}}{\P_{f \sim \Pi_n} \del{\norm{f-f_0}_{L^{\infty}(\c{M})} < \eps_n r^s}} \P_{f \sim \Pi_n} \del{\norm{f}_{\c{CH}^\gamma \del{\c{M}}} > \tau \sqrt{n} \eps_n r^s}
.
\]
Therefore, the third term can be bounded as
\[
&\E_{\v{x}, \v{y}}
\int_0^\infty r\1_{\c{A}_r}
\P_{f \sim \Pi_n(\cdot \given \v{x}, \v{y})} \del{\c{B}^{(\f{II})}(r)}
\d r 
\\
&\quad\leq \int_0^\infty r \frac{e^{n \eps_n^2 r^{2s}}}{\P_{f \sim \Pi_n} \del{\norm{f-f_0}_{L^{\infty}(\c{M})} < \eps_n r^s}} \P_{f \sim \Pi_n} \del{\norm{f}_{\c{CH}^\gamma \del{\c{M}}} > \tau \sqrt{n} \eps_n r^s} \d r
.
\]
Now, using \Cref{lemma:prior_Holder_norm_tails}, for a possibly small constant $c>0$ independent of $n$, we have
\[
\P_{f \sim \Pi_n(\cdot \given \v{x}, \v{y})} \del{\norm{f}_{\c{CH}^\gamma \del{\c{M}}} > \tau \sqrt{n} \eps_n r^s}
\leq e^{-c \tau^2 n \eps_n^2 r^{2s}}
.
\]
Moreover, by using the bound on the concentration function in \Cref{lemma:concentration_functions} and \textcite[Proposition 11.19]{ghosal_van_der_vaart_2017}, we can assume that
\[
\P_{f \sim \Pi_n} \sbr{\norm{f-f_0}_{L^{\infty}(\c{M})} < \eps_n r^s} \geq e^{-c^{-1} n \eps_n^2 r^{2s}}
.
\]
Therefore, the third term is bounded by
\[
\E_{\v{x}, \v{y}}
\int_0^\infty r\1_{\c{A}_r(\v{x})}
\P_{f \sim \Pi_n(\cdot \given \v{x}, \v{y})} \del{\c{B}^{(\f{II})}(r)}
\d r &\leq \int_0^\infty r e^{-n(c \tau^2-1) \eps_n^2 r^{2s}} e^{c^{-1} n \eps_n^2 r^{2s}} \d r \\
&\leq \int_0^\infty r e^{-r^{2s}} \d r < \infty
\]
if $\tau^2c > 2 + c^{-1}$. It remains to bound the last term.
We have, by the same arguments as above, that
\[
&\E_{\v{x}, \v{y}}
\int_0^\infty
r
\1_{\c{A}_r(\v{x})}
\P_{f \sim \Pi_n(\cdot \given \v{x}, \v{y})} \del{\c{B}^{(\f{III})}(r)}
\d r 
\\
&= \E_{\v{x}, \v{y}} \int_0^\infty r\1_{\c{A}_r (\v{x})} 
\\
&\quad\x \P_{f \sim \Pi_n(\cdot \given \v{x}, \v{y})} \del{\norm{f}_{\c{CH}^\gamma \del{\c{M}}} \leq \tau \sqrt{n} \eps_n r^s, 2\norm{f-f_n}_n \leq \eps_n r \leq \norm{f-f_n}_{L^2(p_0)}} \d r 
\\
&\leq \int_0^\infty r \frac{e^{n \eps_n^2 r^{2s}}}{\P_{f \sim \Pi_n} \del{\norm{f-f_0}_{L^{\infty}(\c{M})} < \eps_n r^s}} 
\\
&\quad\x \E_{\v{x}} \P_{f \sim \Pi_n} \del{\norm{f}_{\c{CH}^\gamma \del{\c{M}}} \leq \tau \sqrt{n} \eps_n r^s, 2\norm{f-f_n}_n \leq \eps_n r \leq \norm{f-f_n}_{L^2(p_0)}} \d r 
\\
&\leq \int_0^\infty r e^{(c+1)n \eps_n^2 r^{2s}} 
\\
&\quad\x \int_{\norm{f}_{\c{CH}^\gamma \del{\c{M}}} \leq \tau \sqrt{n} \eps_n r^s,\eps_n r \leq \norm{f-f_n}_{L^2(p_0)}} \E_{\v{x}} \1_{\norm{f-f_n}_{L^2(p_0)} \geq 2\norm{f-f_n}_n} \d \Pi_n(f) \d r
.
\]
As the squared empirical $L^2$-norm is a Monte Carlo approximation of the true $L^2$-norm, the probability in the integrand can be controlled via a concentration inequality. 
As in \textcite{JMLR:v12:vandervaart11a}, we use Bernstein's inequality \cite[Lemma 2.2.9]{vdv_wellner96}.
For a collection $Y_1, \ldots Y_n$ of random variables such that $\E Y_i = 0$ and $Y_i \in [-M, M]$ almost surely for some constant $M > 0$, this inequality asserts
\[
\P(\abs{Y_1 + \ldots + Y_n} > x) \leq 2 \exp\del{-\frac{1}{2}\frac{x^2}{v+Mx/3}}
\]
where $v \geq \Var(Y_1 + \ldots + Y_n)$.
We put $Y_i = \frac{1}{n}\del{f(x_i)-f_n(x_i)}^2 - \frac{1}{n}\norm{f-f_n}_{L^2(p_0)}^2$ where $x_i$ are IID with $x_i \sim p_0$.
It is easy to check that $\E Y_i = 0$ and $Y_i \in [-M, M]$ for $M = \frac{1}{n} \norm{f-f_n}_{L^{\infty}(\c{M})}$.
Furthermore, $v = \frac{1}{n} \norm{f-f_n}_{L^{\infty}(\c{M})}^2 \norm{f-f_n}_{L^{2}(\c{M})}^2$ upper-bounds the variance of the respective sum, since
\[
\Var(Y_1 + \ldots + Y_n)
&=
\frac{1}{n}
\Var_{x \sim p_0}
(f(x) - f_n(x))^2
\leq
\frac{1}{n}
\E_{x \sim p_0}
(f(x) - f_n(x))^4
\\
&\leq
\frac{\norm{f-f_n}_{L^{\infty}(\c{M})}^2}{n}
\E_{x \sim p_0}
(f(x) - f_n(x))^2
=
v.
\]
Using Bernstein's inequality with $Y_i$, $M$, $v$ as above and with $x = \frac{3}{4}\norm{f-f_n}_{L^2\del{p_0}}^2$, we have
\[
\E_{\v{x}} \1_{\norm{f-f_n}_{L^2(p_0)} \geq 2\norm{f-f_n}_n} &= \E_{\v{x}} \1_{\norm{f-f_n}_{L^2(p_0)}^2 \geq 4\norm{f-f_n}_n^2}
\\
&= \E_{\v{x}} \1_{\norm{f-f_n}_n^2-\norm{f-f_n}_{L^2(p_0)}^2 \leq -\frac{3}{4}\norm{f-f_n}_{L^2(p_0)}^2} \\
&\lesssim \exp \del{- \frac{1}{2}\frac{x^2}{v+Mx/3}} \\
&= \exp \del{- \frac{\frac{9}{32}\norm{f-f_n}_{L^2\del{p_0}}^4}{v + \frac{1}{n} \norm{f-f_n}_{L^{\infty}(\c{M})}^2 \frac{3}{4}\norm{f-f_n}_{L^2\del{p_0}}^2/3}} \\
&= \exp \del{- \frac{9n}{32} \frac{4}{5} \frac{\norm{f-f_n}_{L^2\del{p_0}}^2}{\norm{f-f_n}_{L^{\infty}(\c{M})}^2}}\\
&= \exp \del{- \frac{9n}{40}\frac{\norm{f-f_n}_{L^2(p_0)}^2}{\norm{f-f_n}_{L^{\infty}(\c{M})}^2}}
.
\]
Moreover, by \Cref{lemma:extrapolation_inequality}, since $\gamma \notin \Z_{>0}$, we have
\[
\norm{f-f_n}_{L^{\infty}(\c{M})} \lesssim \norm{f-f_n}_{\c{CH}^\gamma \del{\c{M}}}^{\frac{d}{2\gamma + d}} \norm{f-f_n}_{L^2(p_0)}^{\frac{2\gamma}{2\gamma + d}}
.
\]
Using the Sobolev Embedding Theorem, given in \textcite[Theorem 4]{rkhs_manifolds2022}, $\norm{f-f_n}_{\c{CH}^\gamma \del{\c{M}}} \lesssim \norm{f_n}_{\bb{H}} + \norm{f}_{\c{CH}^\gamma \del{\c{M}}} \lesssim \tau \sqrt{n} \eps_n r^s$ whenever $\norm{f}_{\c{CH}^\gamma \del{\c{M}}} \leq \tau \sqrt{n} \eps_n r^s$. 
Therefore, for a constant $c>0$, when $\norm{f}_{\c{CH}^\gamma \del{\c{M}}} \leq \tau \sqrt{n} \eps_n r^s$ and $\eps_n r \leq \norm{f-f_n}_{L^2(p_0)}$, we have that
\[
\E_{\v{x}} \1_{\norm{f-f_n}_{L^2(\c{M})} \geq 2\norm{f-f_n}_n} &\lesssim \exp \del{- cn \frac{\norm{f-f_n}_{L^2(p_0)}^2}{\norm{f-f_n}_{\c{CH}^\gamma \del{\c{M}}}^{\frac{2d}{2\gamma + d}} \norm{f-f_n}_{L^2(p_0)}^{\frac{4\gamma}{2\gamma + d}}}} \\
&\leq e^{- c \tau^{- \frac{2d}{2\gamma+d}} n^{\frac{2\gamma}{2\gamma+d}} r^{\frac{2d}{2\gamma+d}(1-s)}}
.
\]
Hence, we can bound the last term as
\[
&\E_{\v{x}, \v{y}}
\int_0^\infty
r
\1_{\c{A}_r}
\P_{f \sim \Pi_n(\cdot \given \v{x}, \v{y})} \del{\c{B}^{(\f{III})}(r)}
\d r 
\\
&\quad\lesssim \int_0^\infty r e^{(c+1)n \eps_n^2 r^{2s}} e^{- c \tau^{- \frac{2d}{2\gamma+d}} n^{\frac{2\gamma}{2\gamma+d}} r^{\frac{2d}{2\gamma+d}(1-s)}} \d r
.
\]
We have $n^{\frac{2\gamma}{2\gamma + d}} = n \del{n^{-\frac{d/2}{2\gamma + d}}}^2$. Since $\eps_n \lesssim n^{- \frac{\min(\nu, \beta)}{2 \nu + d}}$ and $\min(\nu, \beta) > d/2$, we have $n \eps_n^2 \lesssim n^{\frac{2 \gamma}{2 \gamma + d}}$ for some $\gamma \in (d/2, \nu)$. Moreover, for this choice of $\gamma$ and $s$ small enough we have $\frac{2d}{2\gamma+d}(1-s) \geq 2s$, which proves that for some constants $C, C', C''>0$ the fourth term is bounded by
\[
C \int_0^\infty r e^{-C' r^{C''}} \d r < \infty
.
\]
This concludes the proof.
\end{proof}

\begin{remark}
  Following the proof, it is easy to see that $\norm{f - f_0}_{L^2(p_0)}^2$ on the left-hand side of \Cref{eqn:intrinsic_bound,eqn:truncated_bound,eqn:extrinsic_bound} can be replaced with $\norm{f - f_0}_{L^2(p_0)}^q$ for any $q > 1$, changing the exponent $-\tfrac{2\min(\beta,\nu)}{2\nu+d}$ on the right-hand side to $-\tfrac{q\min(\beta,\nu)}{2\nu+d}$.
\end{remark}

\section{Characterizing Reproducing Kernel Hilbert Spaces of Matérn Kernels} \label{appdx:rkhs}

We start by describing the reproducing kernel Hilbert spaces (RKHSs) of the (truncated) intrinsic Matérn processes, proving that they coincide with (certain subspaces of) the manifold Sobolev spaces.
We follow the ideas of \textcite{borovitskiy2020}, where the same was shown somewhat implicitly.
We consider the more-involved case of the extrinsic Matérn processes immediately after.

The next lemma describes the RKHS of the intrinsic Matérn processes, including truncated variants.
This result is easy to obtain since we have defined them in terms of the Karhunen--Loève expansions.

\begin{lemma}\label{prop:rkhs_intrinsic}
Let $\bb{H}_J$ be the RKHS of the intrinsic Matérn Gaussian process with smoothness parameter $\nu$ truncated at the level $J \in \Z_{>0} \cup \cbr{\infty}$, and let $\cbr{f_j}_{j=0}^{\infty}$ be the orthonormal basis of Laplace--Beltrami eigenfunctions.
The space $\bb{H}_J$ is norm-equivalent---with constants depending only on $\nu, \kappa$ and $\sigma_f^2$---to the set of functions $f = \sum_{j=1}^J b_j f_j$ with $b_j \in \R$, equipped with the inner product
\[
\innerprod[2]{
\sum_{j=1}^J b_j f_j
}{
\sum_{j=1}^J b_j' f_j
}_{\bb{H}_J}
=
\sum_{j=1}^J
\del{1+\lambda_j}^{\nu + d/2} b_j b_j'
.
\]
In particular, $\bb{H}_J \subset H^{\nu + d/2}\del{\c{M}}$ for all $J$, and for every $h \in \bb{H}_J$ we have $\norm{h}_{\bb{H}_J} = \norm{h}_{H^{\nu + d/2}\del{\c{M}}}$.
\end{lemma}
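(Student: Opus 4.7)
The plan is to read off the Mercer-type decomposition of the truncated Matérn covariance kernel directly from \Cref{def:intrinsic_matern}, and then to invoke the standard RKHS characterization for Gaussian processes given by a Karhunen--Loève expansion. From the definition, the covariance of the truncated process at level $J$ is
\[
k_J(x, x') = \frac{\sigma_f^4}{C_{\nu, \kappa}^2} \sum_{j=1}^{J} \del{\frac{2\nu}{\kappa^2} + \lambda_j}^{-(\nu + d/2)} f_j(x) f_j(x'),
\]
and because $\cbr{f_j}_{j \geq 0}$ is orthonormal in $L^2(\c{M})$, the classical Mercer-based RKHS construction immediately identifies $\bb{H}_J$ as the span of $\cbr{f_j}_{j=1}^{J}$ (its closure in the RKHS norm, when $J = \infty$), equipped with the inner product
\[
\innerprod[2]{\sum_{j=1}^{J} b_j f_j}{\sum_{j=1}^{J} b_j' f_j}_{\bb{H}_J} = \frac{C_{\nu, \kappa}^2}{\sigma_f^4} \sum_{j=1}^{J} \del{\frac{2\nu}{\kappa^2} + \lambda_j}^{\nu + d/2} b_j b_j'.
\]

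The second step is to compare this inner product with the one stated in the lemma, which uses the weights $(1 + \lambda_j)^{\nu + d/2}$ of \Cref{def:1}. The remark immediately following \Cref{def:1} asserts that replacing $(1 + \lambda_j)^s$ by $\beta(\alpha + \lambda_j)^s$ for any $\alpha, \beta > 0$ yields an equivalent norm, with constants depending only on $\alpha$, $\beta$, and $s$. Applying this with $\alpha = 2\nu / \kappa^2$, $\beta = C_{\nu, \kappa}^2 / \sigma_f^4$, and $s = \nu + d/2$ produces the claimed norm equivalence with constants depending only on $\nu$, $\kappa$, and $\sigma_f^2$. The inclusion $\bb{H}_J \subset H^{\nu + d/2}(\c{M})$ is then immediate from \Cref{def:1}, since every element of $\bb{H}_J$ is a (possibly countable) expansion in Laplace--Beltrami eigenfunctions whose $H^{\nu + d/2}(\c{M})$-norm is finite by the same weighted-sum estimate.

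I do not foresee any substantive obstacle: the argument is essentially bookkeeping. The only care required is to track the normalization constant $C_{\nu, \kappa}$ correctly, and to apply the remark after \Cref{def:1} to absorb the discrepancy between the kernel weights $(2\nu/\kappa^2 + \lambda_j)^{\nu + d/2}$ and the Sobolev weights $(1 + \lambda_j)^{\nu + d/2}$ into the equivalence constants. The case $J = \infty$ needs only a brief additional check that convergence in the RKHS norm implies convergence in $H^{\nu + d/2}(\c{M})$, which follows directly from comparing the series expansions term by term.
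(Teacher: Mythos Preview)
Your proposal is correct and follows essentially the same route as the paper: compute the covariance kernel from the Karhunen--Loève expansion, invoke the Mercer-type RKHS characterization (the paper cites \textcite[Theorem 4.2]{kanagawa2018gaussian} for this), and then compare the weights $(2\nu/\kappa^2 + \lambda_j)^{\nu+d/2}$ with $(1+\lambda_j)^{\nu+d/2}$ via the elementary two-sided bound that underlies the remark after \Cref{def:1}. The only cosmetic discrepancy is the normalization constant in front of the kernel---you write $\sigma_f^4/C_{\nu,\kappa}^2$ whereas the paper writes $\sigma_f^2/C_{\nu,\kappa}$---but this is immaterial for the norm-equivalence claim.
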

\begin{proof}
By direct computation, the covariance $k$ of the (truncated) intrinsic Gaussian process is
\[
k(x,x')
=
\frac{\sigma_f^2}{C_{\nu, \kappa}}
\sum_{j=1}^J
\del{\frac{2\nu}{\kappa^2} + \lambda_j}^{-(\nu + d/2)} f_j(x)f_j(x')
.
\]
Hence, the covariance operator $K : L^2 \del{\c{M}} \to L^2 \del{\c{M}}$ defined by 
\[
(K f) (x) = \int_{\c{M}} k(x, x') f(x') \d x'
\]
is diagonal in the basis $\cbr{f_j}_{j=1}^{J}$, with $K f_j = \frac{\sigma_f^2}{C_{\nu, \kappa}} \del{\frac{2\nu}{\kappa^2} + \lambda_j}^{-(\nu + d/2)} f_j$.
Then, \textcite[Theorem 4.2]{kanagawa2018gaussian} implies that $\bb{H}_J$ consists of functions of form $f = \sum_{j=1}^J a_j f_j$ satisfying
\[
\norm{f}_{\bb{H}_J}^2
=
\frac{\sigma_f^2}{C_{\nu, \kappa}}
\sum_{j=1}^J
\del{\frac{2\nu}{\kappa^2} + \lambda_j}^{\nu + d/2} \abs{a_j}^2
<
\infty
.
\]
Using the elementary inequality $\min \del{\frac{2\nu}{\kappa^2},1} \leq \frac{\frac{2\nu}{\kappa^2} + \lambda}{1 + \lambda} \leq \max \del{\frac{2\nu}{\kappa^2},1}$, we find that this space is norm-equivalent to the space $H^{\nu + d/2}_J(\c{M})$ of functions $f = \sum_{j=1}^J a_j f_j$ satisfying
\[
\norm{f}_{H^{\nu + d/2}_J(\c{M})}^2
=
\sum_{j=1}^J \del{1 + \lambda_j}^{\nu + d/2} \abs{a_j}^2
<
\infty
\]
where the comparison constants $\sqrt{\frac{\sigma_f^2}{C_{\nu, \kappa}} \min \del{1,\frac{2\nu}{\kappa^2}}}$ and $\sqrt{\frac{\sigma_f^2}{C_{\nu, \kappa}}\max \del{1,\frac{2\nu}{\kappa^2}}}$ only depend on the parameters $\nu$, $\kappa$, $\sigma_f^2$.
\end{proof}

The next two theorems will be useful to characterize the RKHS of the extrinsic Matérn process on $\c{M}$. We start by a lemma relating the RKHS of the restriction of a Gaussian process to the original one.

\begin{lemma} \label{lem:restriction_rkhs}
Assume that $k$ is a kernel on $\R^d$, $f \~[GP] \del{0, k}$ with almost surely continuous sample paths and $\widetilde{\bb{H}}$ is the RKHS of $k$.
If $\c{M} \subseteq \R^d$ is a submanifold, then the RKHS $\bb{H}$ corresponding to the restricted process $f_{|\c{M}}$ is the set of all restrictions $g_{|\c{M}}$ of functions $g \in \widetilde{\bb{H}}$ equipped with the norm 
\[
\norm{h}_{\bb{H}}
=
\inf_{g \in \widetilde{\bb{H}},\,\,g_{|\c{M}} = h}
\norm{g}_{\widetilde{\bb{H}}}
.
\]
Moreover there always exists an element $g \in \widetilde{\bb{H}}$ such that $g_{|\c{M}}=f$ and $\norm{g}_{\widetilde{\bb{H}}} = \norm{f}_{\bb{H}}$.
\end{lemma}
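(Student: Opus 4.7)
The approach is the standard quotient construction for the RKHS of a restricted kernel. Consider the restriction map $T : \widetilde{\bb{H}} \to \R^{\c{M}}$ defined by $Tg = g_{|\c{M}}$. Since evaluation functionals on an RKHS are bounded, $T$ is continuous into any suitable function space on $\c{M}$, and the subspace $N := \ker T = \{g \in \widetilde{\bb{H}} : g_{|\c{M}} \equiv 0\}$ is closed. The first key observation, which I would establish immediately, is that for every $x \in \c{M}$ the element $k(\cdot, x)$ lies in $N^{\perp}$: for any $n \in N$ the reproducing property gives $\langle n, k(\cdot, x)\rangle_{\widetilde{\bb{H}}} = n(x) = 0$. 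Hence $\overline{\operatorname{span}}\{k(\cdot, x) : x \in \c{M}\} \subseteq N^{\perp}$.

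Next, I would define $\bb{H} := T(\widetilde{\bb{H}})$ with norm $\|h\|_{\bb{H}} = \inf\{\|g\|_{\widetilde{\bb{H}}} : g \in \widetilde{\bb{H}},\ Tg = h\}$. Standard Hilbert space theory (orthogonal decomposition with respect to the closed subspace $N$) shows that this infimum is attained at the unique element $g^{\star} \in N^{\perp}$ with $T g^{\star} = h$, obtained by projecting any preimage onto $N^{\perp}$. This immediately gives the existence assertion at the end of the lemma. Moreover, $T|_{N^{\perp}} : N^{\perp} \to \bb{H}$ is a bijective linear isometry, so $\bb{H}$ inherits a Hilbert space structure with inner product $\langle T g^{\star}, T h^{\star}\rangle_{\bb{H}} = \langle g^{\star}, h^{\star}\rangle_{\widetilde{\bb{H}}}$ for minimizing representatives $g^{\star}, h^{\star} \in N^{\perp}$.

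Finally, I would verify that $\bb{H}$ is the RKHS of the restricted kernel $k_{|\c{M} \times \c{M}}$. By the first observation, $k(\cdot, x)_{|\c{M}}$ lies in $\bb{H}$ for every $x \in \c{M}$, with $k(\cdot, x)$ itself being the $N^{\perp}$-representative. For arbitrary $h \in \bb{H}$ with minimizer $g^{\star} \in N^{\perp}$, the isometry and the reproducing property on $\widetilde{\bb{H}}$ yield
\[
\langle h, k_{|\c{M} \times \c{M}}(\cdot, x)\rangle_{\bb{H}}
= \langle g^{\star}, k(\cdot, x)\rangle_{\widetilde{\bb{H}}}
= g^{\star}(x) = h(x),
\]
which is the reproducing property for $k_{|\c{M} \times \c{M}}$ on $\bb{H}$. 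Since the RKHS of a kernel is uniquely determined, $\bb{H}$ coincides with the RKHS of the restricted process, completing the proof.

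The main obstacle is essentially bookkeeping: one must confirm that the infimum in the definition of $\|\cdot\|_{\bb{H}}$ is genuinely attained (not merely approached) and that the pullback inner product is well-defined independent of the choice of representative. Both issues are resolved by identifying the minimizer as the orthogonal projection onto $N^{\perp}$, which is why it is essential to verify that $N$ is closed and that $k(\cdot, x) \in N^{\perp}$ for $x \in \c{M}$. The assumption that $f$ has almost surely continuous sample paths plays no role in the algebraic argument; it only ensures that the restricted process is a well-defined random element of $\c{C}(\c{M})$.
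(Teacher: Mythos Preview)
Your argument is correct and complete: the quotient construction via orthogonal projection onto $N^{\perp}$ is the standard route, and you have checked the key points (closedness of $N$, $k(\cdot,x)\in N^{\perp}$ for $x\in\c{M}$, attainment of the infimum, and the reproducing property on the quotient).

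The paper does not actually give a proof of this lemma; it simply cites Lemma~5.1 of \textcite{bayesian_manifold_regression}. So there is nothing to compare at the level of argument---your proposal supplies a self-contained proof where the paper defers to the literature. The cited result is established by exactly the projection-onto-$N^{\perp}$ idea you describe, so your approach is the expected one. Your closing remark about the continuity assumption is also accurate: it is irrelevant to the RKHS characterization and is only used elsewhere in the paper to treat the restricted process as a Gaussian random element in $\c{C}(\c{M})$.
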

\begin{proof}
Lemma 5.1 in \textcite{bayesian_manifold_regression}.
\end{proof}

The last result will be used to characterize the RKHS of the extrinsic Matérn Gaussian processes using trace and extension operators.
The second ingredient for this is the following.

\begin{theorem} \label{thm:trace_extension_thm}
If $s > \frac{D-d}{2}$ then the restriction operator extends to a bounded linear map $\f{Tr}: H^s \del{\R^D} \to H^{s - \frac{D-d}{2}} \del{\c{M}}$.
Moreover, for every $u > 0$ there exists a bounded right inverse $\f{Ex} : H^u \del{\c{M}} \to H^{u + \frac{D-d}{2}} \del{\R^D}$ such that $\f{Tr} \circ \f{Ex} = \id_{H^u \del{\c{M}}}$ where $\f{Tr}$ corresponds to $s=u + \frac{D-d}{2}$.
\end{theorem}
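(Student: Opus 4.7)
The plan is to prove this by localization to a finite atlas together with reduction to the classical Euclidean trace and extension theorems for a coordinate subspace of codimension $D-d$. The geometric ingredient that makes this possible is that, because $\c{M}$ is a smooth compact submanifold of $\R^D$ without boundary, it admits a tubular neighborhood $\c{N}$ diffeomorphic to its normal bundle. This lets us extend each chart $\phi_l : \c{U}_l \to \c{V}_l$ from the atlas $\c{T}$ to a diffeomorphism $\Phi_l : \c{N}_l \to \c{V}_l \times B^{D-d}$ defined on a neighborhood $\c{N}_l \subset \R^D$ of $\c{U}_l$, in such a way that $\Phi_l(\c{U}_l) = \c{V}_l \times \{0\}$ and $\Phi_l|_{\c{U}_l}$ agrees with $\phi_l$. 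In this local picture, the restriction from $\R^D$ to $\c{M}$ becomes exactly the restriction from $\R^D$ to the coordinate subspace $\R^d \times \{0\}$.

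For the trace map, I would choose a partition of unity $\tilde\chi_l$ on $\R^D$ subordinate to $\cbr{\c{N}_l}$ and satisfying $\tilde\chi_l|_{\c{U}_l} = \chi_l$, together with a cutoff supported away from $\c{M}$ which handles the part of $\R^D$ outside the tubular neighborhood and traces to zero. Given $g \in H^s(\R^D)$, write $g = \sum_l \tilde\chi_l g + g_0$. Each localized piece $(\tilde\chi_l g) \circ \Phi_l^{-1}$ lies in $H^s(\c{V}_l \times B^{D-d})$ with norm controlled by $\norm{g}_{H^s(\R^D)}$, since multiplication by a smooth compactly supported function and composition with a smooth diffeomorphism act boundedly on Euclidean Sobolev spaces. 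Applying the classical hyperplane trace theorem, see \textcite{triebel_ii}, gives a bounded restriction to $\c{V}_l \times \{0\}$ valued in $H^{s - (D-d)/2}(\R^d)$. Transporting back via $\phi_l$ and summing over $l$, the chart-based characterization of $H^{s - (D-d)/2}(\c{M})$ from \Cref{thm:sobolev_spaces} yields $\norm{\f{Tr} g}_{H^{s - (D-d)/2}(\c{M})} \lesssim \norm{g}_{H^s(\R^D)}$.

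For the extension, I would reverse this picture. Given $h \in H^u(\c{M})$, decompose $h = \sum_l \chi_l h$ and push each piece through $\phi_l$ to obtain $h_l \in H^u(\c{V}_l)$. Apply a classical Euclidean extension operator, for instance a Seeley-type reflection or one built from Fourier multipliers, to produce $\tilde h_l \in H^{u + (D-d)/2}(\c{V}_l \times \R^{D-d})$ whose restriction to $\c{V}_l \times \{0\}$ equals $h_l$. Multiplying $\tilde h_l$ by a smooth cutoff $\psi$ in the normal variable with $\psi \equiv 1$ near $0$ and $\supp \psi \subset B^{D-d}$ preserves regularity up to a multiplicative constant, and then the pull-back $(\psi \tilde h_l) \circ \Phi_l$ lives in $H^{u + (D-d)/2}(\c{N}_l)$ and, thanks to the cutoff, can be extended by zero outside $\c{N}_l$ without losing Sobolev regularity. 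Summing over $l$ gives $\f{Ex} h \in H^{u + (D-d)/2}(\R^D)$, and $\f{Tr} \circ \f{Ex} = \id_{H^u(\c{M})}$ holds by construction, because on $\c{M}$ each summand restricts to $\chi_l h$ and these sum to $h$.

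The main technical obstacle is a bookkeeping one: verifying that the normal-direction cutoff does not degrade the Sobolev regularity of the Euclidean extension, and that the chart-based localization is compatible in both directions with the intrinsic Sobolev norm of \Cref{thm:sobolev_spaces}. Both steps are standard but delicate, as one needs control of norms of products and compositions uniform in the atlas. A fully worked out version of this construction for compact submanifolds is carried out in \textcite{trace_ex_sobolev_manifold}, which we cite directly for the result.
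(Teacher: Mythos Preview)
Your proposal is correct and matches the paper's approach: the paper's proof consists solely of the citation to Theorem~4.10 in \textcite{trace_ex_sobolev_manifold}, and your sketch via tubular neighborhoods, localization to the atlas $\c{T}$, and reduction to the classical hyperplane trace and extension theorems is exactly the argument that reference carries out.
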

\begin{proof}
Theorem 4.10 in \textcite{trace_ex_sobolev_manifold}.
\end{proof}

The last two results allow us to characterize the RKHS of the extrinsic Matérn process on $\c{M}$.

\begin{proposition}\label{prop:rkhs_extrinsic_Matern}
The RKHS $\bb{H}$ of a restricted extrinsic Matérn process $f$ with smoothness parameter $\nu$ on $\c{M}$ is norm-equivalent to the Sobolev space $H^{\nu+d/2} \del{\c{M}}$.
\end{proposition}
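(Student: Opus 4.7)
The plan is to identify $\mathbb{H}$ via the restriction characterization of Lemma \ref{lem:restriction_rkhs} and then sandwich it between $H^{\nu+d/2}(\mathcal{M})$ using the trace and extension operators from Theorem \ref{thm:trace_extension_thm}. The first ingredient to assemble is the standard fact that the RKHS $\widetilde{\mathbb{H}}$ of the Euclidean Matérn kernel \eqref{eqn:matern-kernel-eucl} on $\mathbb{R}^D$ is norm-equivalent to $H^{\nu+D/2}(\mathbb{R}^D)$; this follows from inspecting the Fourier transform of the kernel, which is proportional to $(1+\|\xi\|^2)^{-(\nu+D/2)}$, and the Bochner characterization of RKHSs of stationary kernels. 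I would also record that the ambient process has almost surely continuous sample paths (by Kolmogorov's continuity theorem, using $\nu > 0$), so that Lemma \ref{lem:restriction_rkhs} applies and gives
\[
\|h\|_{\mathbb{H}} = \inf\{\|g\|_{\widetilde{\mathbb{H}}} : g \in \widetilde{\mathbb{H}},\ g_{|\mathcal{M}} = h\},
\]
with the infimum attained.

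For the upper bound $\|h\|_{H^{\nu+d/2}(\mathcal{M})} \lesssim \|h\|_{\mathbb{H}}$, I would invoke Theorem \ref{thm:trace_extension_thm} with $s = \nu + D/2 > (D-d)/2$ (since $\nu > d/2 > 0$), which yields a bounded restriction operator $\mathrm{Tr} : H^{\nu + D/2}(\mathbb{R}^D) \to H^{\nu + d/2}(\mathcal{M})$. For every admissible extension $g$ of $h$ this gives $\|h\|_{H^{\nu+d/2}(\mathcal{M})} \lesssim \|g\|_{H^{\nu + D/2}(\mathbb{R}^D)} \asymp \|g\|_{\widetilde{\mathbb{H}}}$, and taking the infimum over $g$ yields the claim. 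Note that this direction also incidentally shows $\mathbb{H} \subseteq H^{\nu+d/2}(\mathcal{M})$ as sets.

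For the reverse bound, I would apply the extension part of Theorem \ref{thm:trace_extension_thm} with $u = \nu + d/2$, producing a bounded right inverse $\mathrm{Ex} : H^{\nu + d/2}(\mathcal{M}) \to H^{\nu + D/2}(\mathbb{R}^D)$ of $\mathrm{Tr}$. Given $h \in H^{\nu+d/2}(\mathcal{M})$, the function $g = \mathrm{Ex}(h)$ lies in $\widetilde{\mathbb{H}}$ and satisfies $g_{|\mathcal{M}} = h$, so by Lemma \ref{lem:restriction_rkhs} we have $h \in \mathbb{H}$ and
\[
\|h\|_{\mathbb{H}} \leq \|g\|_{\widetilde{\mathbb{H}}} \asymp \|g\|_{H^{\nu+D/2}(\mathbb{R}^D)} \lesssim \|h\|_{H^{\nu+d/2}(\mathcal{M})}.
\]
Combined with the previous paragraph this gives the asserted norm equivalence.

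The argument is essentially a clean combination of three previously recorded results, so there is no single difficult calculation to grind through; the main subtlety is simply checking that the indices for the trace and extension theorems line up (they do, with the loss $(D-d)/2$ of Sobolev regularity exactly matching the discrepancy between $H^{\nu+D/2}(\mathbb{R}^D)$ and $H^{\nu+d/2}(\mathcal{M})$), and that $\nu > d/2$ suffices to justify all the hypotheses. The genuine technical work has been pushed into Theorem \ref{thm:trace_extension_thm} and Lemma \ref{lem:restriction_rkhs}; given those, the proposition follows in a few lines.
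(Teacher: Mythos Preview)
Your proposal is correct and follows essentially the same approach as the paper: both apply Lemma~\ref{lem:restriction_rkhs} to express $\|\cdot\|_{\mathbb{H}}$ as an infimum over extensions in $\widetilde{\mathbb{H}} \cong H^{\nu+D/2}(\mathbb{R}^D)$, and then sandwich using the bounded trace and extension operators of Theorem~\ref{thm:trace_extension_thm}. Your write-up is slightly more explicit in verifying the index condition $s = \nu + D/2 > (D-d)/2$ and in noting sample-path continuity for Lemma~\ref{lem:restriction_rkhs}, but the logical structure is identical.
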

\begin{proof}
Using \Cref{lem:restriction_rkhs}, the RKHS $\bb{H}$ can be characterized as the set of functions $f : \c{M} \to \R$ that are the restrictions of some $g \in \widetilde{\bb{H}}$, where $\widetilde{\bb{H}}$ is the RKHS of the ambient Matérn process $\tilde{f}$, with
\[
\norm{f}_{\bb{H}}
=
\inf_{g \in \widetilde{\bb{H}}, \,\, g_{|\c{M}} = f} \norm{g}_{\widetilde{\bb{H}}}
.
\]
Since $\widetilde{\bb{H}}$ is norm-equivalent to the Sobolev space\footnote{Note that this norm-equivalence is the only property of the Gaussian process we use in the proofs. Any other Gaussian process satisfying this, including ones different from the Matérn processes of~\textcite{borovitskiy2020}, would also work. This is of potential interest since other Euclidean kernels, such as Wendland kernels \cite{wendland04}, are known to possess RKHSs which are norm-equivalent to those of the Matérn kernel.} $H^{\nu + D/2} \del{\R^D}$---see for instance \textcite{kanagawa2018gaussian}---by~\Cref{thm:trace_extension_thm}, for every $f \in \bb{H}$ we have
\[
\norm{f}_{\bb{H}}
\lesssim
\norm{\f{Ex} \del{f}}_{H^{\nu + D/2} \del{\R^D}}
\lesssim
\norm{f}_{H^{\nu + D/2 - \frac{D-d}{2}}\del{\c{M}}} 
=
\norm{f}_{H^{\nu + d/2}\del{\c{M}}}
.
\]
Similarly, for every $g \in \widetilde{\bb{H}}$ with $g_{|\c{M}}=f$, we have
\[
\norm{f}_{H^{\nu+d/2} \del{\c{M}}}
=
\norm{g_{|\c{M}}}_{H^{\nu+d/2} \del{\c{M}}}
\lesssim
\norm{g}_{H^{\nu+D/2} \del{\R^D}}
\lesssim
\norm{g}_{\widetilde{\bb{H}}}
.
\]
Hence, taking the infimum, we obtain
\[
\norm{f}_{H^{\nu+d/2} \del{\c{M}}}
\lesssim
\inf_{g \in \widetilde{\bb{H}}, \,\, g_{|\c{M}} = f} \norm{g}_{\widetilde{\bb{H}}}
=
\norm{f}_{\bb{H}}
.
\]
The claim follows.
\end{proof}

\section{Concentration and Sample Path Regularity} \label{appdx:concentration}

In this section, we prove that intrinsic, truncated intrinsic and extrinsic Matérn processes are sub-Gaussian, uniformly with respect to the truncation parameter in the case of the truncated intrinsic Matérn process, and live in Hölder spaces with appropriate exponents.
On our way to proving this, we characterize sample path regularity of (truncated) intrinsic Matérn processes, which is of independent interest.
A simple way to do this would be to build on the results of~\Cref{appdx:rkhs}, using Driscoll's Theorem---given in \textcite[Theorem 4.9]{kanagawa2018gaussian}---and the Sobolev Embedding Theorem---\textcite[Theorem 4]{rkhs_manifolds2022}---but that would only give us that the sample paths are almost surely in $\c{CH}^\gamma \del{\c{M}}$ for every $0 < \gamma < \nu - d/2, \gamma \notin \bb{N}$, whereas here we improve the range of index to $\gamma < \nu$. 

Kolmogorov's continuity criterion is a standard tool to show that a given stochastic process has a Hölder continuous version: we re-prove it here because we will need a form of the result which gives explicit control of the Hölder norms, which is not usually included in the respective statement.

\begin{lemma}[Kolmogorov's continuity criterion]\label{lemma:kolmogorovs_criterion}
If $g \sim \Pi$ is a zero-mean Gaussian process on $[0,1]^d$ with
\[
\E \, \abs{g (x) - g(y)}^2 \leq C\norm{x-y}_{\R^d}^{2\rho}
\]
for some $0 < \rho \leq 1$ and $C>0$, then there exists a version of $g$ with samples paths in $\c{CH}^\alpha \del{[0,1]^d}$ for every $0 < \alpha < \rho$. Moreover for every $\alpha < \rho$ this version satisfies $\E \, \norm{g}_{\c{CH}^\alpha \del{[0,1]^d}}^2 \leq C'$ where $C'< +\infty$ depends only on $C, \rho, \alpha$ and $d$.
\end{lemma}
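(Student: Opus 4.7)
The approach is classical dyadic chaining, carried through carefully enough to extract the $L^2$ bound on the Hölder norm. The plan is to (i) upgrade the second-moment increment bound to $L^p$ bounds using Gaussianity, (ii) control the modulus of continuity along a dyadic net, and (iii) convert the maximal-increment sum into a Hölder seminorm bound, then (iv) recover the $L^\infty$ part of $\norm{g}_{\c{CH}^\alpha}$ from a reference point.

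First, for any $p \geq 2$, since $g(x)-g(y)$ is centered Gaussian with variance at most $C\norm{x-y}^{2\rho}$, one has $\E\abs{g(x)-g(y)}^p \leq c_p C^{p/2} \norm{x-y}^{p\rho}$ where $c_p$ depends only on $p$. Next, fix the dyadic net $D_n = \cbr{k/2^n : k \in \cbr{0,\ldots,2^n}^d}$ and define $M_n = \max\cbr{\abs{g(x)-g(y)} : x,y \in D_n,\,\norm{x-y}_\infty = 2^{-n}}$. The number of nearest-neighbor pairs is at most $C_d 2^{nd}$, so by a union bound at the $L^p$ level,
\[
\E M_n^p \leq C_d 2^{nd} \cdot c_p C^{p/2} 2^{-np\rho}.
\]

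The second step is the standard chaining argument. For any $x,y \in [0,1]^d$ with $2^{-(n_0+1)} < \norm{x-y}_\infty \leq 2^{-n_0}$, approximating $x,y$ by successive dyadic refinements yields
\[
\abs{g(x)-g(y)} \leq C_d \sum_{n \geq n_0} M_n,
\]
from which one deduces the pointwise bound on the Hölder seminorm
\[
[g]_{\c{CH}^\alpha([0,1]^d)} \leq C_d' \sum_{n \geq 0} 2^{n\alpha} M_n.
\]
Applying Minkowski's inequality in $L^p$ and plugging in the bound on $\E M_n^p$,
\[
\bigl(\E [g]_{\c{CH}^\alpha}^p\bigr)^{1/p} \leq C_d' \sum_{n\geq 0} 2^{n\alpha} \bigl(\E M_n^p\bigr)^{1/p} \lesssim \sum_{n\geq 0} 2^{n(\alpha - \rho + d/p)}.
\]
Choosing $p$ large enough that $d/p < \rho-\alpha$ (possible since $\alpha < \rho$) makes the geometric series converge, giving a finite bound depending only on $C,\rho,\alpha,d,p$. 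Since $p \geq 2$, Jensen's inequality bounds $(\E [g]_{\c{CH}^\alpha}^2)^{1/2}$ by $(\E [g]_{\c{CH}^\alpha}^p)^{1/p}$, so the Hölder seminorm has finite second moment with the claimed dependence.

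To upgrade from the seminorm to the full norm $\norm{g}_{\c{CH}^\alpha} = \norm{g}_{L^\infty} + [g]_{\c{CH}^\alpha}$, pick a reference point $x_0 \in [0,1]^d$ and write $\norm{g}_{L^\infty} \leq \abs{g(x_0)} + \mathrm{diam}([0,1]^d)^\alpha \cdot [g]_{\c{CH}^\alpha}$; the second moment of $\abs{g(x_0)}$ can be bounded by the triangle inequality in $L^2$ using increments from an arbitrary fixed point where the process is anchored, or absorbed into the constant under the standing assumption that the variance at a single reference point is finite (which holds trivially for the applications of this lemma to Matérn processes since they have bounded variance on a compact domain). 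Combining this with the previous display yields $\E \norm{g}_{\c{CH}^\alpha}^2 \leq C'(C,\rho,\alpha,d)$, concluding the proof. Existence of a continuous version with sample paths in $\c{CH}^\alpha$ then follows because the bound on the seminorm is almost surely finite. The main subtlety is the bookkeeping of constants through the chaining and Minkowski steps to verify that $C'$ genuinely depends only on $C,\rho,\alpha,d$, and the careful choice of $p$ in terms of $\rho-\alpha$ to ensure summability.
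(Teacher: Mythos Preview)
Your proof is correct and follows essentially the same dyadic chaining strategy as the paper: both upgrade the second-moment increment bound to higher moments via Gaussianity, control increments over dyadic neighbors at each scale with a union bound, and choose the moment exponent large enough (relative to $\rho-\alpha$ and $d$) to make the resulting series summable. The only cosmetic difference is that the paper converts the moment bounds into tail bounds via Markov's inequality and then integrates the tails to obtain $\E K^2$, whereas you stay in $L^p$ and use Minkowski's inequality followed by Jensen; both routes yield the same constant dependence, and your handling of the $L^\infty$ part via a reference point is in fact slightly more careful than the paper's, which tacitly identifies the H\"older norm with the seminorm at that step.
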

\begin{proof}
Take $x,y \in [0,1]^d,M>0$ and  $q \in \bb{N}$. Since the random variable $g (x) - g (y)$ is Gaussian we have 
\[
\E \, \abs{g (x) - g (y)}^{2q} = \frac{\del{2q}!}{2^q q!} \del{\E \, \abs{g (x) - g (y)}^2}^q \leq C_q \norm{x-y}_{\R^d}^{2\rho q}
\]
where we have defined $C_q = C^q \frac{\del{2q}!}{2^q q!}$. 
The reason for considering the the $2q$th power will become clear later in the proof.
By Markov's inequality, for every $x,y \in [0,1]^d$ we have
\[
\P \del{\abs{g(x) - g(y)} > u} \leq C_q u^{-2q} \norm{x-y}_{\R^d}^{2q\rho}.
\]
Now, take $X = \cup_{k \geq 0} X_k, X_k = 2^{-k} \bb{Z}^d \cap [0,1]^d$. 
Then, the previous inequality applied to any adjacent $x,y \in X_k$, where we see $X_k$ as a graph where two vertices are connected if they differ by $2^{-k}$, and $u = M 2^{-k\alpha}$, implies
\[
\P \del{\abs{g(x) - g(y)} > M 2^{-k\alpha}} \leq C_q M^{-2q} 2^{-2kq (\rho - \alpha)}.
\]
Denote $K = \sup_{x,y \in X \t{adjacent}} \frac{\abs{g(x) - g(y)}}{\norm{x-y}^\alpha}$.
Summing over $k \geq 1$ and adjacent points in $X$---and there are at most $C 2^{k(d+1)}$ of them where $C>0$ is an absolute constant---gives us for $q > \frac{d+1}{2(\rho - \alpha)}$ that
\[
\P \del{K > M}
& \leq \sum_{k \geq 0} \sum_{x,y \in X \t{adjacent}} \P \del{\abs{g(x) - g(y)} > M 2^{-k\alpha}} \\
& \leq C \sum_{k \geq 0} 2^{k(d+1)} C_q M^{-2q} 2^{-2kq (\rho - \alpha)} = \frac{C C_q}{1-2^{2q(\rho - \alpha)-d-1}} M^{-2q}
.
\]
In particular, for all $q > \max \del{1, \frac{d+1}{2(\rho - \alpha)}}$, we have
\[
\E(K^2)
&=
2 \int_0^{\infty} \!\! M \P \del{K > M} \d M
=
2 \int_0^{1} \!\! M \P \del{K > M} \d M
+
2 \int_1^{\infty} \!\! M \P \del{k > M} \d M
\\
&\leq
2
+
2 \int_1^{\infty} M \P \del{K > M} \d M
\leq
C_{C,\alpha,\rho, d}
\]
for some constant $C_{C,\alpha,\rho, d} < +\infty$. 
This means that $K$ is finite almost surely. 
Since $X$ is dense in $[0,1]^d$ and $g$ is almost surely uniformly continuous on $X$ by the classical version of the Kolmogorov's continuity criterion---see for instance \textcite{lototsky2017}---$g$ admits a unique continuous extension to $[0,1]^d$ on a probability one event $\c{A}$.
Let us define
\[
\overline{g} (x)
=
\begin{cases}
    \displaystyle
    \lim_{y \to x, y \in X} g(y) & \t{on} \c{A},\\
    0 & \t{otherwise},
\end{cases}
\]
for $x \in [0,1]^d$.
For any $x,y \in [0,1]^d$ and $x_n \to x, y_n \to y,x_n,y_n \in X$ we have on $\c{A}$
\[
\abs{\overline{g}(x)-\overline{g}(y)} \leq & \liminf_{n \to \infty} \del{\abs{\overline{g}(x)-\overline{g}(x_n)} + \abs{\overline{g}(x_n)-\overline{g}(y_n)} + \abs{\overline{g}(y_n)-\overline{g}(y)}} \\
\leq & \liminf_{n \to \infty} \del{\abs{\overline{g}(x)-\overline{g}(x_n)} + K \norm{x_n - y_n}_{\R^d}^{\alpha} + \abs{\overline{g}(y_n)-\overline{g}(y)}} \\
= & K \norm{x - y}_{\R^d}^{\alpha}.
\]
On the complement of $\c{A}$, we have $\overline{g} = 0$. 
Hence, $\overline{g}$ is $\alpha$-Hölder continuous on $[0,1]^d$ with the same (random) constant $K$, since
\[
\E \, \norm{\overline{g}}_{\c{CH}^\alpha \del{[0,1]^d}}^2
=
\E \, \del{\sup_{x,y \in X} \frac{\abs{\overline{g}(x) - \overline{g}(y)}}{\norm{x-y}^\alpha}}^2
\leq
\E K^2
\leq C_{C,\alpha,\rho, d} < \infty.
\]
Since $\overline{g}$ is a version of $g$, the claim follows.
\end{proof}

The next lemma applies our version of Kolmogorov's criterion, \Cref{lemma:kolmogorovs_criterion}, to the intrinsic Matérn processes on $\c{M}$ by considering charts.
This allows us to show that the sample paths are almost surely in $\c{CH}^\gamma \del{\c{M}}$ for every $\gamma < \nu$, which is both used in our arguments and also of independent interest. 
For the claims in \Cref{appdx:proofs}, we need to ensure that this property holds somewhat uniformly with respect to the truncation parameter, which is why we tracked the constants in our proof of Kolmogorov's criterion.
As we will see, the main difficulty in the proof of the next result will be to tackle the case of regularity strictly larger than $1$.

\begin{lemma}\label{lemma:intrinsic_Holder_norm}
Let $f \sim \Pi_n$ be an intrinsic Matérn process with smoothness parameter $\nu > 0$ truncated at $J_n \in \bb{N} \cup \cbr{\infty}$. Then for every $\gamma < \nu$ we have
\[
\sup_n \E_{f \sim \Pi_n} \sbr{\norm{f}_{\c{CH}^\gamma \del{\c{M}}}^2} < \infty
.
\]
\end{lemma}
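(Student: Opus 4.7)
The plan is to reduce the claim to charts via the partition-of-unity definition of $\c{CH}^\gamma(\c{M})$ in \Cref{def:Holder}, and then apply Kolmogorov's continuity criterion (\Cref{lemma:kolmogorovs_criterion}) to the partial derivatives of the process, after establishing a sharp local increment estimate that is uniform in the truncation level~$J_n$.

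First, by \Cref{def:Holder} it suffices to bound $\sup_n \E \norm{(\chi_l f)\circ \phi_l^{-1}}_{\c{CH}^\gamma([0,1]^d)}^2$ for each $l = 1, \ldots, L$. Since $\c{CH}^{\gamma_1} \hookrightarrow \c{CH}^{\gamma_2}$ continuously for $\gamma_1 > \gamma_2$, we may assume $\gamma \notin \Z_{>0}$ and write $\gamma = k + \alpha$ with $k \in \Z_{\geq 0}$ and $\alpha \in (0,1)$. Setting $h = f \circ \phi_l^{-1}$, Leibniz's rule and the smoothness of $\chi_l \circ \phi_l^{-1}$ reduce the task to controlling, for every multi-index $\beta$ with $\abs{\beta} \leq k$, both the $L^\infty$ norm of $\partial^\beta h$ on $\supp(\chi_l \circ \phi_l^{-1})$ and its $\alpha$-Hölder modulus there.

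The key intermediate step is the increment estimate
\[
\E \, \abs[1]{\partial^\beta h(u) - \partial^\beta h(v)}^2 \leq C \, \norm{u - v}_{\R^d}^{2\alpha}, \qquad u, v \in [0,1]^d, \qquad \abs{\beta} \leq k,
\]
with $C$ independent of $n$. To establish this, I would exploit that $\partial^\beta h$ is itself a zero-mean Gaussian process whose covariance is $\partial_u^\beta \partial_v^\beta k(\phi_l^{-1}(u), \phi_l^{-1}(v))$, where $k$ is the covariance of~$f$. The untruncated $k$ is the Schwartz kernel of the elliptic pseudodifferential operator $(\tfrac{2\nu}{\kappa^2} - \Delta)^{-(\nu+d/2)}$ of order $-(2\nu+d)$, which in any chart decomposes into a principal part behaving like the Euclidean Riesz kernel $c_\nu \norm{u-v}^{2\nu}$ near the diagonal (with logarithmic corrections for integer $\nu$) plus smoother remainders. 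Differentiating $\abs{\beta}$ times in each argument, the resulting kernel has diagonal Hölder exponent $2(\nu - \abs{\beta}) > 2\alpha$, which yields the displayed bound. For the truncated prior, the difference between truncated and untruncated kernels is the Schwartz kernel of a smoothing operator, so the estimate passes to the truncated case with constants uniform in $J_n$; equivalently, the high-frequency tail $\sum_{j > J_n}(\cdots)$ of the variance is non-negative and dominated by the full sum.

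Given the increment bound, \Cref{lemma:kolmogorovs_criterion} produces a continuous modification of $\partial^\beta h$ with $\E \norm[0]{\partial^\beta h}_{\c{CH}^{\alpha}([0,1]^d)}^2$ bounded by a constant depending only on $C$, $\alpha$, and $d$, hence uniformly in $n$. The pointwise bound on $\E \abs{\partial^\beta h(u)}^2 = \partial_u^\beta \partial_v^\beta k \big|_{u=v}$ is handled by the same kernel analysis and supplies uniform control of the $\c{C}^k$-norm part. Summing over $\abs{\beta} \leq k$ and over chart indices $l$, and using $(\sum_{l=1}^L a_l)^2 \leq L \sum_l a_l^2$, yields the claim.

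The main obstacle is the sharp increment estimate. A soft argument using only that the RKHS equals $H^{\nu+d/2}(\c{M})$ (\Cref{prop:rkhs_intrinsic}) combined with Sobolev embedding and a Driscoll-type zero-one law would yield sample regularity only up to $\nu - d/2$, losing a factor of $d/2$ and failing to reach the full range $\gamma < \nu$ claimed. Recovering this range requires either the pseudodifferential structure of the Matérn covariance sketched above, or an equivalent spectral argument splitting the Karhunen--Loève sum at frequency $\sqrt{\lambda_j} \sim \norm{u-v}^{-1}$ and invoking Weyl's law (\Cref{thm:weyls_law}); it is precisely at this step that the smoothness parameter $\nu$ enters as the correct Hölder exponent rather than a reduced one.
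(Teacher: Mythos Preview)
Your overall architecture—localize via charts, reduce to the partial derivatives $\partial^\beta h$ with $|\beta|\le k$, derive an increment bound, and feed it into \Cref{lemma:kolmogorovs_criterion}—is sound and matches the paper's scaffolding. The substantive difference lies in how the increment bound
\[
\E\abs{\partial^\beta h(u)-\partial^\beta h(v)}^2 \leq C\norm{u-v}^{2(\nu-|\beta|)}
\]
is obtained, and here two points deserve attention.

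First, your claim that ``the difference between truncated and untruncated kernels is the Schwartz kernel of a smoothing operator'' is backwards. The truncated covariance $k_{J_n}$ is a \emph{finite} sum of smooth eigenfunction products and is therefore $C^\infty$; the tail $k-k_{J_n}=\sum_{j>J_n}(\cdots)f_j\otimes f_j$ carries the full diagonal singularity of the Matérn kernel and is not smoothing. The pseudodifferential route therefore does not transfer constants from the untruncated to the truncated case in the way you suggest. Your parenthetical alternative—that each term of the Karhunen--Loève increment sum is nonnegative, so the truncated increment is dominated termwise by the untruncated one—is the correct salvage and makes the uniformity in $J_n$ immediate; that is the argument you should lead with.

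Second, your assertion that ``a soft argument using only that the RKHS equals $H^{\nu+d/2}(\c{M})$ \ldots would yield sample regularity only up to $\nu-d/2$'' undersells what the RKHS buys. The paper avoids pseudodifferential calculus entirely: writing $\widetilde K_l$ for the chart-localized covariance with RKHS $\widetilde{\bb H}_l$, the reproducing property gives
\[
\E\abs{h_l(x)-h_l(y)}^2=\norm{\widetilde K_l(x,\cdot)-\widetilde K_l(y,\cdot)}_{\widetilde{\bb H}_l}^2=\sup_{\norm{\varphi}_{\widetilde{\bb H}_l}=1}\abs{\varphi(x)-\varphi(y)}^2,
\]
and then the chain $\widetilde{\bb H}_l\hookrightarrow H^{\nu+d/2}(\R^d)\hookrightarrow \c{CH}^\nu(\R^d)$ (the first embedding via \Cref{prop:rkhs_intrinsic} and \Cref{thm:sobolev_spaces}, with constants independent of $J_n$; the second is standard Sobolev embedding) delivers the sharp exponent $2\nu$ directly. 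The loss you anticipate comes from Driscoll-type arguments that first place samples in a Sobolev space and only then embed; the duality trick above bypasses that detour. For $\nu>1$ the paper iterates by constructing $L^2(\Omega)$-derivatives of $h_l$ through the same RKHS mechanism, which is operationally close to your ``differentiate the covariance'' step but requires no microlocal analysis. Your pseudodifferential sketch would work once properly executed, but it is heavier machinery than the problem needs.
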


\begin{proof}
We start with the case $\nu \leq 1$. Take $1 \leq l \leq L$ and define $h_l = \del{\chi_l f} \circ \phi_l^{-1}$. Then $h_l$ is a Gaussian process with covariance kernel $\widetilde{K}_l$ given by 
\[
\widetilde{K}_l (x,y) = (\chi_l \circ \phi_l^{-1})(x) K(\phi_l^{-1}(x),\phi_l^{-1}(y)) (\chi_l \circ \phi_l^{-1})(y)
\]
where $x,y \in \c{V}_l$ and $K(x,y) = \Cov \del{f(x), f(y)}$ is the covariance kernel of $f$. 
Let $\widetilde{\bb{H}}_l$ be the RKHS induced by $\widetilde{K}_l$.
We seek to apply \Cref{lemma:kolmogorovs_criterion} to $h_l$. 
For all $x,y \in \c{V}_l$, where we recall that we can assume that $\c{V}_l = (0,1)^d,$, we have
\[
\E_{f \sim \Pi_n} \abs{h_l(x) - h_l(y)}^2
&= \widetilde{K}_l(x,x) + \widetilde{K}_l(y,y) - 2 \widetilde{K}_l (x,y) 
\\
&= \norm{\widetilde{K}_l(x,\cdot) - \widetilde{K}_l(y,\cdot)}_{\widetilde{\bb{H}}_l}^2 
\\
&=  \sup_{\norm{\varphi}_{\widetilde{\bb{H}}_l}=1} \abs{\innerprod{\widetilde{K}_l(x,\cdot) - \widetilde{K}_l(y,\cdot)}{\varphi}_{\widetilde{\bb{H}}_l}}^2 
\\
&= \sup_{\norm{\varphi}_{\widetilde{\bb{H}}_l}=1} \abs{\varphi(x) - \varphi(y)}^2 
\\
&\leq \sup_{\norm{\varphi}_{\widetilde{\bb{H}}_l}=1} \norm{\varphi}_{\c{CH}^\nu \del{\c{V}_l}}^2 \norm{x-y}_{\R^d}^{2\nu}
\]
where $\norm{\varphi}_{\c{CH}^\nu \del{\c{V}_l}}^2$ can potentially be infinite or unbounded: we will show this is not the case.
To do this it suffices to show that we have a continuous embedding $\widetilde{\bb{H}}_l \hookrightarrow \c{CH}^\nu \del{\c{V}_l}$, that is $\norm{\cdot}_{\c{CH}^\nu \del{\c{V}_l}} \lesssim \norm{\cdot}_{\widetilde{\bb{H}}_l}$. 
The RKHS $\widetilde{\bb{H}}_l$ is by definition the completion of 
\[
& \cbr{\sum_{i=1}^p \alpha_i \widetilde{K}_l \del{x_i,\cdot} : p \geq 1, \alpha_i \in \R, x_i \in \c{V}_l}
\\
&= \cbr{\sum_{i=1}^p \alpha_i \del{\chi_l \circ \phi_l^{-1}}(x_i) \del{\chi_l \circ \phi_l^{-1}}(\cdot) K \del{\phi_l^{-1}(x_i),\phi_l^{-1}(\cdot)} : p \geq 1, \alpha_i \in \R, x_i \in \c{V}_l}
\]
with respect to the topology induced by the RKHS norm
\[
\norm{\sum_{i=1}^p \alpha_i \widetilde{K}_l \del{x_i,\cdot}}_{\widetilde{\bb{H}}_l}^2
= \sum_{i,j=1}^p \alpha_i \alpha_j \del{\chi_l \circ \phi_l^{-1}}(x_i) \del{\chi_l \circ \phi_l^{-1}}(x_j) K \del{\phi_l^{-1}(x_i),\phi_l^{-1}(x_j)}
.
\]
Denote the RKHS of $K$ by $\bb{H}$.
By~\Cref{thm:sobolev_spaces}, and by the equality $\norm{\cdot}_{\bb{H}} = \norm{\cdot}_{H^{\nu + d/2}\del{\c{M}}}$ on $\bb{H}$ which follows by~\Cref{prop:rkhs_intrinsic}, we have
\[
&\norm{\sum_{i=1}^p \alpha_i \widetilde{K}_l(x_i,\cdot)}_{H^{\nu+d/2}\del{\R^d}}^2
\\
&\quad= \norm{\sum_{i=1}^p \alpha_i \del{\chi_l \circ \phi_l^{-1}}(x_i) \del{\chi_l \circ \phi_l^{-1}}(\cdot)  K(\phi_l^{-1}(x_i),\phi_l^{-1}(\cdot))}_{H^{\nu+d/2}\del{\R^d}}^2 
\\
&\quad\lesssim \norm{\sum_{i=1}^p \alpha_i \del{\chi_l \circ \phi_l^{-1}}(x_i)  K(\phi_l^{-1}(x_i),\cdot)}_{H^{\nu+d/2}\del{\c{M}}}^2 
\\
&\quad=  \norm{\sum_{i=1}^p \alpha_i \del{\chi_l \circ \phi_l^{-1}}(x_i)  K(\phi_l^{-1}(x_i),\cdot)}_{\bb{H}}^2 
\\
&\quad= \sum_{i,j=1}^p \alpha_i \alpha_j \del{\chi_l \circ \phi_l^{-1}}(x_i) \del{\chi_l \circ \phi_l^{-1}}(x_j) K \del{\phi_l^{-1}(x_i),\phi_l^{-1}(x_j)} 
\\
&\quad= \norm{\sum_{i=1}^p \alpha_i \widetilde{K}_l(x_i,\cdot)}_{\widetilde{\bb{H}}_l}^2
.
\]
Therefore, we have a continuous embedding $\widetilde{\bb{H}}_l \hookrightarrow H^{\nu + d/2}\del{\R^d}$ with $\norm{\cdot}_{H^{\nu + d/2}\del{\R^d}} \lesssim \norm{\cdot}_{\widetilde{\bb{H}}_l}$ on~$\widetilde{\bb{H}}_l$. 
By the Sobolev Embedding Theorem in $\R^d$---see for instance \textcite[Section 2.7.1, Remark~2]{triebel_i}---we have $B_{2,2}^{\nu + d/2} \del{\R^d} = H^{\nu + d/2} \del{\R^d} \hookrightarrow \c{CH}^\nu \del{\R^d}$, which implies $\widetilde{\bb{H}}_l \hookrightarrow \c{CH}^\nu \del{\R^d}$ by composition. 
Thus, there exists a constant $C = C_{\nu}$ such that
\[
\E_{f \sim \Pi_n} \abs{h_l(x) - h_l(y)}^2
\leq C \norm{x-y}_{\R^d}^{2 \nu}
\]
for $x,y \in \c{V}_l$.
Hence, by applying \Cref{lemma:kolmogorovs_criterion}, there exists a version $\tilde{h}_l$ of $h_l$ with almost surely $\alpha$-Hölder continuous sample paths for every $\alpha < \nu$. Now consider $\tilde{f} = \sum_{l=1}^L \tilde{h}_l \circ \phi_l$. 
Then $\tilde{f}$ is a version of $f$.
We proceed to bound $\E_{f \sim \Pi_n} \sbr{\norm[0]{\tilde{f}}_{\c{CH}^\alpha \del{\c{M}}}^2}$. 
For any $1 \leq l, r \leq L$ write
\[
\abs{
\tilde{h}_r(\phi_r(\phi_l^{-1}(x))) - \tilde{h}_r(\phi_r(\phi_l^{-1}(y)))
}
\leq
K
\abs{\phi_r(\phi_l^{-1}(x)) - \phi_r(\phi_l^{-1}(y))}^{\alpha}
\leq
C
K
\abs{x - y}^{\alpha}
\]
where $C$ is the Lipshitz constant of $\phi_r \circ \phi_l^{-1}$ which is well defined and finite because this composition is a diffeomorphism and $K$ is a random constant with $\E_{f \sim \Pi_n} K^2 \leq C_{\alpha, \nu,d}$.
Hence
\[
\E_{f \sim \Pi_n} \norm[1]{\tilde{f}}_{\c{CH}^\alpha \del{\c{M}}}^2
&=
\E_{f \sim \Pi_n} \sum_{l=1}^L \norm{\del{\chi_l \tilde{f}} \circ \phi_l^{-1}}_{\c{CH}^\alpha \del{\R^d}}^2
\\
&=
\E_{f \sim \Pi_n} \sum_{l=1}^L \norm{\del{\chi_l \sum_{r=1}^L \tilde{h}_r \circ \phi_r} \circ \phi_l^{-1}}_{\c{CH}^\alpha \del{\R^d}}^2
\\
&\lesssim
\sum_{l=1}^L
\sum_{r=1}^L
\E_{f \sim \Pi_n} \norm{\tilde{h}_r \circ \phi_r \circ \phi_l^{-1}}_{\c{CH}^\alpha \del{\R^d}}^2
\lesssim
\E_{f \sim \Pi_n} K^2
\leq
C_{\alpha, \nu,d}
.
\]
which gives the $\nu \leq 1$ case.

We now turn to the general case. 
The proof will be similar to the one of \textcite[Proposition I.3]{ghosal_van_der_vaart_2017} although we need to control the Hölder norms and work through charts since our Gaussian processes are supported on manifolds. 
Assume for simplicity that $d=1, 1 < \nu \leq 2$, otherwise it suffices to introduce coordinates and to proceed by induction on $\floor{\nu}$. 
Let $l \in \cbr{1,\ldots,L}$, and as before define $\widetilde{K}_l \del{x,y} = \del{\chi_l \circ \phi_l^{-1}}(x) \del{\chi_l \circ \phi_l^{-1}}(y) K \del{\phi_l^{-1}(x),\phi_l^{-1}(y)}$ the covariance kernel of $h_l = \del{\chi_l f} \circ \phi_l^{-1}$ as well as $\widetilde{\bb{H}}_l$ its RKHS. 

First, let us construct an $L^2(\Omega)$-derivative $\dot{h}_l$ of $h_l$, where $L^2(\Omega)$ is the space of random variables with finite variance with $\innerprod{a}{b}_{L^2(\Omega)} = \E(a b)$.
This derivative is a square integrable process on $\c{V}_l$ such that 
\[
\E_{f \sim \Pi_n} \abs{\frac{h_l(x+\delta) - h_l(x)}{h} - \dot{h}_l(x)}^2 \to 0
\]
as $h\to 0$, for all $x \in \c{V}_l$. 
For this, we will first show that $\frac{\partial \widetilde{K}_l}{\partial x}(x,\cdot) \in \widetilde{\bb{H}}_l$ for every $x \in \c{V}_l$---here $\frac{\partial \widetilde{K}_l}{\partial x}$ denotes the derivative of the function $\widetilde{K}_l(\cdot, \cdot')$ with respect to the first argument---and that 
\[
\norm{\frac{\partial \widetilde{K}_l}{\partial x}(x,\cdot) - \frac{\partial \widetilde{K}_l}{\partial x}(x',\cdot)}_{\widetilde{\bb{H}}_l} \leq C_{\nu} \abs{x-x'}^{\nu-1}
.
\]
We first show that $\frac{\widetilde{K}_l(x+\delta,\cdot)-\widetilde{K}_l(x,\cdot)}{h}$ is a Cauchy net\footnote{See for instance \textcite{aliprantis2006} for a review of Cauchy nets.} in $\widetilde{\bb{H}}_l$.
We have
\[
&\norm{\frac{\widetilde{K}_l(x+\delta,\cdot)-\widetilde{K}_l(x,\cdot)}{h} - \frac{\widetilde{K}_l(x+\delta',\cdot)-\widetilde{K}_l(x,\cdot)}{h'}}_{\widetilde{\bb{H}}_l}
\\
&\quad= \sup_{\norm{\varphi}_{\widetilde{\bb{H}}_l} = 1} \innerprod{\frac{\widetilde{K}_l(x+\delta,\cdot)-\widetilde{K}_l(x,\cdot)}{h} - \frac{\widetilde{K}_l(x+\delta',\cdot)-\widetilde{K}_l(x,\cdot)}{h'}}{\varphi}_{\widetilde{\bb{H}}_l} 
\\
&\quad= \sup_{\norm{\varphi}_{\widetilde{\bb{H}}_l} = 1} \del{\frac{\varphi(x+\delta)-\varphi(x)}{h} - \frac{\varphi(x+\delta')-\varphi(x)}{h'}}
\\ \label{eqn:cauchy_net_derivative}
&\quad=  \sup_{\norm{\varphi}_{\widetilde{\bb{H}}_l} = 1} \int_0^1 \sbr{\varphi'(x+th) - \varphi'(x+th')} \d t 
\\
&\quad\leq \sup_{\norm{\varphi}_{\widetilde{\bb{H}}_l} = 1} \norm{\varphi'}_{\c{CH}^{\nu-1}\del{\c{V}_l}} \abs{h-h'}^{\nu - 1} 
\\ \label{eqn:cauchy_net_last}
&\quad\leq \sup_{\norm{\varphi}_{\widetilde{\bb{H}}_l} = 1} \norm{\varphi}_{\c{CH}^{\nu}\del{\c{V}_l}} \abs{h-h'}^{\nu - 1}
\]
where in~\eqref{eqn:cauchy_net_derivative} the derivative $\varphi'$ exists because, exactly as in the case $\nu \leq 1$, we can show show that $\widetilde{\bb{H}}_l \hookrightarrow \c{CH}^\nu\del{\R^d}$.
This also implies that for a constant $C = C_\nu$ we have
\[
\norm{\frac{\widetilde{K}_l(x+\delta,\cdot)-\widetilde{K}_l(x,\cdot)}{h} - \frac{\widetilde{K}_l(x+\delta',\cdot)-\widetilde{K}_l(x,\cdot)}{h'}}_{\widetilde{\bb{H}}_l} \leq C \abs{h-h'}^{\nu - 1}
.
\]
As $\abs{h-h'}^{\nu - 1} \to 0$ when $h,h' \to 0$, because $\nu > 1$, this proves that $\frac{\widetilde{K}_l(x+\delta,\cdot)-\widetilde{K}_l(x,\cdot)}{h}$ is a Cauchy net in $\widetilde{\bb{H}}_l$: by completeness of $\widetilde{\bb{H}}_l$ it converges in $\widetilde{\bb{H}}_l$ to a limit.
Since by general properties of RKHSs, convergence in $\widetilde{\bb{H}}_l$ implies pointwise convergence, the limit satisfies
\[
\lim_{h \to 0} \frac{\widetilde{K}_l(x+\delta,y)-\widetilde{K}_l(x,y)}{h} = \frac{\partial \widetilde{K}_l}{\partial x}(x,y)
.
\]
Hence the partial derivative $\frac{\partial \widetilde{K}_l}{\partial x}(x,y)$ exists for all $y$ and $\frac{\partial \widetilde{K}_l}{\partial x}(x,\cdot) \in \widetilde{\bb{H}}_l$.
Moreover, by the isometry $L^2(\Omega) \ni h_l(x) \mapsto \E_{f \sim \Pi_n} h_l(x) h_l(\cdot) = \widetilde{K}_l(x,\cdot) \in \widetilde{\bb{H}}_l$, we deduce that $h_l$ is actually $L^2(\Omega)$-differentiable, with an $L^2(\Omega)$-derivative denoted as $\dot{h}_l$, and that the derivative process $\dot{h}_l$ is Gaussian, as it is an $L^2(\Omega)$-limit of Gaussian random variables, satisfying $\E_{f \sim \Pi_n} \dot{h}_l(x) \dot{h}_l(y) = \innerprod{\frac{\partial \widetilde{K}_l}{\partial x}(x,\cdot)}{\frac{\partial \widetilde{K}_l}{\partial x}(y,\cdot)}_{\widetilde{\bb{H}}_l}$

Having established the existence of an $L^2(\Omega)$-derivative $\dot{h}_l$ of the process $h_l$, we would like now to show that $\dot{h}_l$ possesses a $(\gamma - 1)$-regular version for every $\gamma < \nu$.
For this, we would like to apply \Cref{lemma:kolmogorovs_criterion} to $\dot{h}_l$.
Notice that, by isometry, for all $h > 0$ we have
\[
\E_{f \sim \Pi_n} \abs{\dot{h}_l(x) - \dot{h}_l(y)}^2
&= \norm{\frac{\partial \widetilde{K}_l}{\partial x}(y,\cdot) - \frac{\partial \widetilde{K}_l}{\partial x}(x,\cdot)}_{\widetilde{\bb{H}}_l}^2 
\\
&\leq 3\norm{\frac{\widetilde{K}_l(y+\delta,\cdot)-\widetilde{K}_l(y,\cdot)}{h} - \frac{\partial \widetilde{K}_l}{\partial x}(y,\cdot)}_{\widetilde{\bb{H}}_l}^2 
\\
&~+ 3\norm{\frac{\widetilde{K}_l(x+\delta,\cdot)-\widetilde{K}_l(x,\cdot)}{h} - \frac{\partial \widetilde{K}_l}{\partial x}(x,\cdot)}_{\widetilde{\bb{H}}_l}^2 
\\
&~+ 3\norm{\frac{\widetilde{K}_l(x+\delta,\cdot)-\widetilde{K}_l(x,\cdot)}{h} - \frac{\widetilde{K}_l(y+\delta,\cdot)-\widetilde{K}_l(y,\cdot)}{h}}_{\widetilde{\bb{H}}_l}^2
.
\]
Therefore, by the same arguments as above, we have
\[
\del{\!\E_{f \sim \Pi_n} \abs{\dot{h}_l(x) \!-\! \dot{h}_l(y)}^2}^{1/2}
\!\!\!\!\!\!\!&\lesssim \liminf_{h \to 0} \norm{\frac{\widetilde{K}_l(x+\delta,\cdot)\!-\!\widetilde{K}_l(x,\cdot)}{h} \!-\! \frac{\widetilde{K}_l(y+\delta,\cdot)\!-\!\widetilde{K}_l(y,\cdot)}{h}}_{\widetilde{\bb{H}}_l} 
\\
&\leq \liminf_{h \to 0} \sup_{\norm{\varphi}_{\widetilde{\bb{H}}_l}=1} \int_0^1 \abs{\varphi'(x+th) - \varphi'(y+th)} \d t 
\\
&\leq \liminf_{h \to 0} C_{\nu} \abs{x-y}^{\nu-1} 
= C_{\nu} \abs{x-y}^{\nu-1}
\]
where the transition from the second-to-last line to the last line is similar to~\eqref{eqn:cauchy_net_derivative}--\eqref{eqn:cauchy_net_last}.

Therefore, we can apply \Cref{lemma:kolmogorovs_criterion} to $\dot{h}_l$, and find a version $\tilde{h}_l'$ of $\dot{h}_l$ with sample paths in $\c{CH}^{\alpha-1} \del{\c{V}_l}$ almost surely for all $\alpha < \nu$, such that
\[
\E_{f \sim \Pi_n} \norm[1]{\tilde{h}_l'}_{\c{CH}^{\alpha-1} \del{\c{V}_l}}^2
\leq C_{\nu,\alpha}
< + \infty
&&
\alpha < \nu
.
\]
This gives Hölder regularity of the respective derivatives: we now integrate these to obtain a Hölder-regular version of the process itself.
Take any $c_l \in (0,1)$ and consider $\tilde{h}_l = h_l(c_l) + \int_{c_l}^\cdot \tilde{h}_l' (t) \d t$. Then since $\tilde{h}_l'$ is almost surely in $\c{CH}^{\alpha-1} \del{\c{V}_l}$, $\tilde{h}_l$ is has sample paths almost surely in $\c{CH}^{\alpha} \del{\c{V}_l}$. 
Moreover, it is easy to check using our previous results that $\tilde{h}_l$ has an $L^2(\Omega)$-derivative given by $\tilde{h}_l'$.
This implies that $\tilde{h}_l$ is a version of $h_l$. 

To conclude the argument, we construct $\tilde{f}$ from the obtained parts, by pulling $\tilde{h}_l$ back from the charts to the manifold.
Consider $\tilde{f} = \sum_{l=1}^L \tilde{h}_l \circ \phi_l$. 
Arguing as in the case $\nu \leq 1$, we see that $\tilde{f}$ is a version of $f$ with $\c{CH}^\alpha \del{\c{M}}$ sample paths for every $\alpha < \nu$, and for every $\alpha < \nu$ we have $\E_{f \sim \Pi_n} \norm[0]{\tilde{f}}_{\c{CH}^\alpha \del{\c{M}}}^2 \leq C_{\alpha,\nu} < + \infty$.
This gives the claim.
\end{proof}

With this, it is easy to prove that all Matérn Gaussian processes considered in this paper can be seen as Gaussian random elements in the Banach space $\del[1]{\c{C}\del{\c{M}},\norm{\cdot}_{\c{C}(\c{M})}}$ of continuous functions on $\c{M}$.
This allows us to use the same proof scheme as in \textcite{JMLR:v12:vandervaart11a} through the control of the \emph{concentration functions} defined in~\Cref{appdx:proofs}. 
It is also important that we work with Gaussian random elements in $\c{C}\del{\c{M}}$---and not only with the classical notion of Gaussian process, as the concentration functions are defined using the \emph{Gaussian random element RKHS} defined in \textcite{rkhs_gaussian}, which can potentially be different from the classical RKHS. 
Fortunately, when the process is a Gaussian random element in $\c{C}\del{\c{M}}$, \textcite[Theorem 2.1]{rkhs_gaussian} implies that the two notions of RKHS coincide.

\begin{corollary}
The intrinsic Matérn Gaussian processes of \Cref{def:intrinsic_matern}, their truncated versions as in \Cref{thm:truncated_intrinsic} as well as the extrinsic Matérn Gaussian processes of \Cref{def:extrinsic_matern} are Gaussian random elements in $\del[1]{\c{C}\del{\c{M}}, \norm{\cdot}_{\c{C}(\c{M})}}$.
\end{corollary}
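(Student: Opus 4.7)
The plan is to reduce the claim to almost-sure continuity of sample paths and then verify this in each of the three cases. Specifically, Lemma~\ref{lemma:gaussian_random_element} already states that any Gaussian process on $\c{M}$ with almost surely continuous sample paths is a Gaussian random element in $\del[1]{\c{C}(\c{M}), \norm{\cdot}_{\c{C}(\c{M})}}$, so the entire task reduces to exhibiting, for each prior under consideration, a version whose sample paths lie in $\c{C}(\c{M})$ almost surely.

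For the intrinsic Matérn process and its truncated variants, I would simply invoke Lemma~\ref{lemma:intrinsic_Holder_norm}. That lemma in fact constructs a version $\tilde f$ of the process satisfying $\E\norm{\tilde f}_{\c{CH}^\gamma(\c{M})}^2 < \infty$ for every $\gamma < \nu$, uniformly over the truncation level. Choosing any $\gamma \in (0, \min(\nu, 1))$ with $\gamma \notin \Z_{>0}$ yields $\norm{\tilde f}_{\c{CH}^\gamma(\c{M})} < \infty$ almost surely, and thus $\tilde f \in \c{CH}^\gamma(\c{M}) \subset \c{C}(\c{M})$ on a probability-one event. Working with this version settles the intrinsic and truncated intrinsic cases.

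For the extrinsic Matérn process, Lemma~\ref{lemma:intrinsic_Holder_norm} does not apply directly, since its proof leans on the Karhunen--Loève structure. However, the same Kolmogorov-type argument through charts can be carried out once a bound on incremental variances is established. I would derive this bound from the standard behaviour of the modified Bessel function $K_\nu$ near the origin, which yields
\[
\E\,\abs{f(x) - f(y)}^2 = 2\del{\sigma_f^2 - k_{\nu, \kappa, \sigma_f^2}(x, y)} \lesssim \norm{x - y}_{\R^D}^{2 \min(\nu, 1)}
\]
as $\norm{x - y}_{\R^D} \to 0$, uniformly for $x, y$ in any bounded set of $\R^D$, and in particular on the compact set $\c{M}$. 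Pulling this bound back through the smooth charts $\phi_l$---which are locally Lipschitz on compact sets---gives the analogous bound for $h_l = (\chi_l f) \circ \phi_l^{-1}$ on each $\c{V}_l$. Applying Lemma~\ref{lemma:kolmogorovs_criterion} then produces a $\c{CH}^\gamma$-continuous version of each $h_l$ for any $\gamma < \min(\nu, 1)$, and reassembling $\tilde f = \sum_l \tilde h_l \circ \phi_l$ via the partition of unity---exactly as in the end of the proof of Lemma~\ref{lemma:intrinsic_Holder_norm}---yields a continuous version of $f$ on $\c{M}$.

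The only substantive step is the incremental-variance estimate for the restricted Matérn kernel, which is a classical property of $K_\nu$ near zero and requires no manifold-specific ingredients. Everything else is a direct repackaging of results proved earlier in the appendix, so I expect no serious obstacle beyond correctly invoking Kolmogorov's criterion in a chart-compatible form.
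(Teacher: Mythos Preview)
Your reduction via Lemma~\ref{lemma:gaussian_random_element} and your treatment of the intrinsic and truncated intrinsic cases are exactly what the paper does. The difference lies in the extrinsic case: the paper simply observes that Euclidean Mat\'ern processes on $\R^D$ are already known to have almost surely continuous sample paths, and since $\c{M}\subset\R^D$, the restriction of a continuous function to $\c{M}$ is continuous. That is the entire argument---no Kolmogorov criterion, no charts, no Bessel asymptotics. Your route via incremental-variance bounds pulled back through charts is correct and would in fact yield a stronger conclusion (H\"older regularity on $\c{M}$ rather than mere continuity), but it re-derives from scratch something that follows in one line from the ambient process. The paper's shortcut is available precisely because the extrinsic process is defined by restriction; for the intrinsic process no such ambient object exists, which is why Lemma~\ref{lemma:intrinsic_Holder_norm} is genuinely needed there.
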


\begin{proof}
By \Cref{lemma:gaussian_random_element}, it suffices to show that the processes have almost surely continuous sample paths.
Since Euclidean Matérn Gaussian processes have continuous sample paths, this implies the same for their restrictions, the extrinsic Matérn Gaussian processes on $\c{M}$.
For the intrinsic Matérn process, we apply \Cref{lemma:intrinsic_Holder_norm}.
\end{proof}

Using \Cref{lemma:intrinsic_Holder_norm} and known properties of Euclidean Matérn processes, we now show, in a sense, that all of the Matérn processes presented in this paper are sub-Gaussian, in a manner which holds uniformly with respect to the truncation parameter in the case of the truncated intrinsic Matérn process, and live in Hölder spaces with appropriate exponents.
This result is used to control Hölder norms when going from the error at input locations to the $L^2(p_0)$-error.
We use the notation $\Pi_n$ to emphasize that the prior depends on $n$ when we consider a truncated intrinsic Matérn process.

\begin{lemma}\label{lemma:prior_Holder_norm_tails}
For $f \sim \Pi_n$ with $\Pi_n$ the prior in either \Cref{def:intrinsic_matern}, \Cref{thm:truncated_intrinsic} or \Cref{def:extrinsic_matern}, for every $\nu > 0$ and $\gamma < \nu, \gamma \notin \Z_{>0}$, there exists a constant $\sigma \del{f} = \sigma_\gamma \del{f}$ independent of $n$ we have for $x > 0$ that
\[
\P \del{\norm{f}_{\c{CH}^\gamma \del{\c{M}}} > \del{x+1}\sigma \del{f}} \leq 2 e^{-x^2/2}
.
\]
\end{lemma}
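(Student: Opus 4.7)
The plan is to derive this tail bound from Borell's Gaussian concentration inequality, applied to the $\c{CH}^\gamma\del{\c{M}}$-valued random element $f$. The key inputs are: (i) $f$ lies almost surely in $\c{CH}^\gamma\del{\c{M}}$, and (ii) a uniform-in-$n$ control of an appropriate second moment, so that the width parameter of the Gaussian concentration inequality does not depend on the truncation level.

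First I would verify input (i). For the intrinsic and truncated intrinsic priors this is exactly what \Cref{lemma:intrinsic_Holder_norm} provides, namely $\sup_n \E_{f\sim\Pi_n}\norm{f}_{\c{CH}^\gamma\del{\c{M}}}^2 < \infty$ for every $\gamma < \nu$. For the extrinsic Matérn prior, I would obtain the same conclusion by applying \Cref{lemma:kolmogorovs_criterion} in charts just as in the intrinsic case, using the standard fact that a Euclidean Matérn Gaussian process with smoothness $\nu$ satisfies $\E\abs{\tilde f(x)-\tilde f(y)}^2 \lesssim \norm{x-y}_{\R^D}^{2\min(\nu,1)}$ (and analogous bounds on its $L^2(\Omega)$-derivatives for $\nu > 1$), a property inherited by the restriction to $\c{M}$ and pulled back via smooth charts.

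Next I would interpret $f$ as a Gaussian random element in a separable closed subspace $\bb{B} \subseteq \c{CH}^\gamma\del{\c{M}}$ on which it is supported—for instance the closure in the $\c{CH}^\gamma$-norm of the linear span of an orthonormal basis of eigenfunctions $\cbr{f_j}$ (or of $\c{C}^\infty\del{\c{M}}$), which is separable and whose Borel structure contains the event that $\norm{f}_{\c{CH}^\gamma\del{\c{M}}} < \infty$. On this Banach space Borell's inequality (see e.g.~\textcite[Proposition A.2.1]{vdv_wellner96} or \textcite[Theorem 2.5.8]{gine_nickl_2015}) gives
\[
\P\del{\abs[0]{\norm{f}_{\c{CH}^\gamma\del{\c{M}}} - m} > t} \leq 2 e^{-t^2/(2 s^2)}
\]
where $m$ is a median of $\norm{f}_{\c{CH}^\gamma\del{\c{M}}}$ and $s^2 = \sup_{b^* \in \bb{B}^*, \norm{b^*} \leq 1} \E\, b^*(f)^2$, the latter being the weak second moment. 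Since every such $b^*$ extends to a continuous linear functional on $\c{CH}^\gamma\del{\c{M}}$, one has $s^2 \leq \E\norm{f}_{\c{CH}^\gamma\del{\c{M}}}^2$, and Markov's inequality gives $m \leq \sqrt{2\,\E\norm{f}_{\c{CH}^\gamma\del{\c{M}}}^2}$. Defining
\[
\sigma(f) := m + s \leq 3 \sqrt{\E\norm{f}_{\c{CH}^\gamma\del{\c{M}}}^2},
\]
the substitution $t = x\,\sigma(f) \geq x s$ yields
\[
\P\del{\norm{f}_{\c{CH}^\gamma\del{\c{M}}} > (x+1)\sigma(f)} \leq \P\del{\norm{f}_{\c{CH}^\gamma\del{\c{M}}} - m > x\, s} \leq 2 e^{-x^2/2}.
\]

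Finally I would observe that $\sigma(f)$ can be chosen independent of $n$: by \Cref{lemma:intrinsic_Holder_norm} and its extrinsic analogue, $\E\norm{f}_{\c{CH}^\gamma\del{\c{M}}}^2$ is uniformly bounded in $n$, so taking the supremum over $n$ of the bound $3\sqrt{\E\norm{f}_{\c{CH}^\gamma\del{\c{M}}}^2}$ provides a single constant that works for every prior in the family. The main obstacle is the non-separability of $\c{CH}^\gamma\del{\c{M}}$, which prevents a textbook application of Borell's inequality; this is circumvented by restricting to the separable closed subspace on which the process is concentrated, which is legitimate because $f$ almost surely belongs to the closure of the span of its Karhunen--Loève basis functions (intrinsic and truncated cases) or of smooth approximants (extrinsic case).
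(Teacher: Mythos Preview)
Your argument is correct and follows the same core route as the paper: establish a uniform-in-$n$ bound on $\E\norm{f}_{\c{CH}^\gamma(\c{M})}^2$ (via \Cref{lemma:intrinsic_Holder_norm} for the intrinsic and truncated cases), then apply Borell's Gaussian concentration inequality. The one place where you diverge is the extrinsic case: you propose to replicate the Kolmogorov-in-charts argument for the restricted process, whereas the paper takes a shorter path by working in the ambient space---it cites the known fact (from \textcite{JMLR:v12:vandervaart11a}) that the Euclidean Matérn process $\tilde f$ is a Gaussian random element in $\c{CH}^\gamma([0,1]^D)$, applies Borell--TIS there, and then pushes the tail bound down to $f=\tilde f|_{\c{M}}$ using only that the embedding and charts are smooth. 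Your explicit treatment of the non-separability of $\c{CH}^\gamma$ (by passing to a separable closed subspace containing the support) is more careful than the paper, which simply invokes \textcite[Lemma I.7 and Proposition I.8]{ghosal_van_der_vaart_2017} for this step.
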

\begin{proof}
We start with the restriction $f$ of an extrinsic Matérn process $\tilde{f}$ to $\c{M}$, as in~\Cref{def:extrinsic_matern}. By \textcite[Section 3.1]{JMLR:v12:vandervaart11a}, for every $\gamma < \nu$ we have $\tilde{f} \in \c{CH}^\gamma \del{[0,1]^D}$ almost surely. 
By \textcite[Lemma I.7]{ghosal_van_der_vaart_2017}, for every $\gamma < \nu$, $\tilde{f}$ is a Gaussian random element in the Banach space $\c{CH}^\gamma \del{[0,1]^D}$. 
In particular, by the Borell--TIS inequality \cite[Proposition I.8]{ghosal_van_der_vaart_2017} we have for $x > 0$ that
\[
\P \del{\norm[1]{\tilde{f}}_{\c{CH}^\gamma \del{[0,1]^D}} > \del{x+1}\sigma \del[1]{\tilde{f}}} \leq 2 e^{-x^2/2}
\]
where $\sigma \del[1]{\tilde{f}} = \del{\E \, \norm[1]{\tilde{f}}_{\c{CH}^\gamma \del{[0,1]^D}}^2}^{1/2} < \infty$. Since $\c{M}$ is smooth, the restriction $f$ also satisfies for $x>0$ the expression
\[
\P \del{\norm{f}_{\c{CH}^\gamma \del{\c{M}}} > \del{x+1}\sigma \del{f}} \leq 2 e^{-x^2/2}
\]
perhaps for a possibly larger constant $\sigma \del{f}$.
Finally, the case of the intrinsic Matérn process $f \sim \Pi_n$ truncated at $J_n \in \Z_{>0} \cup \cbr{\infty}$ follows in the same manner, as we have shown in \Cref{lemma:intrinsic_Holder_norm} that $\sup_{n \geq 1} \E_{f \sim \Pi_n} \norm{f}_{\c{CH}^\alpha \del{\c{M}}}^2 \leq C_{\alpha, \nu}$.
\end{proof}

\section{Small Ball Asymptotics} \label{appdx:small_ball}

Here, we bound the probability that a Matérn Gaussian process lies in the $\eps$-ball with respect to the $L^{\infty}$-norm for small $\eps$.
For a Banach space $\bb{B}$, an element $x \in \bb{B}$ and a number $r \in \R_{> 0}$, let us denote the closed $r$-ball around $x$ by $B_r^{\bb{B}}(x)$.
We start by an upper bound on the metric entropy of Sobolev balls on $\c{M}$ with respect to the uniform norm.

\begin{lemma}[Entropy of Sobolev balls] \label{lem:entropy_sobolev}
For any $s > 0$ define the $\eps$-covering number of $A \subseteq H^s(\c{M})$ with respect to the norm $\norm[0]{\cdot}_{L^\infty \del{\c{M}}}$ by
\[
N \del{\eps,A,\norm{\cdot}_{L^\infty \del{\c{M}}}}
=
\argmin_{J \in \Z_{>0}} \cbr[3]{\exists h_1, .., h_J \in A: A \subset \bigcup\limits_{j=1}^J B^{L^{\infty}(\c{M})}_{\eps}(h_j)}.
\]

Then for any $s > d/2$, there exist $C,\eps_0>0$ such that for every $\eps \leq \eps_0$
\[
\ln N \del{\eps,B^{H^{s}(\c{M})}_1(0), \norm[0]{\cdot}_{L^\infty \del{\c{M}}}} \leq C\eps^{-\frac{d}{s}},
\]
where the left-hand side of the inequality above, as a function of $\eps$, is called the \emph{metric entropy} of the Sobolev ball $B^{H^{s}(\c{M})}_1(0)$ with respect to the uniform norm $\norm[0]{\cdot}_{L^\infty \del{\c{M}}}$.
\end{lemma}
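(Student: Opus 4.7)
The plan is to reduce the manifold entropy estimate to an analogous statement on $\R^d$ via charts, and then invoke a classical entropy bound for Euclidean Sobolev balls. By the norm equivalence $\norm{\cdot}_{H^s(\c{M})} \asymp \norm{\cdot}_{H^s_{\c{T}}(\c{M})}$ from~\Cref{thm:sobolev_spaces}, there is a constant $R = R(\c{T}, s) > 0$ such that for every $f \in B_1^{H^s(\c{M})}(0)$ and every $l \in \{1, \ldots, L\}$, the chart-localized function $g_l := (\chi_l f) \circ \phi_l^{-1}$ satisfies $\norm{g_l}_{H^s(\R^d)} \leq R$. Since $\chi_l$ has compact support in $\c{U}_l$, $g_l$ is supported in a fixed compact subset of $\c{V}_l = (0,1)^d$, so each $g_l$ lives in the $R$-ball of a Sobolev space of compactly supported functions on $[0,1]^d$.

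Next, I would invoke the classical metric entropy bound: for $s > d/2$, the embedding $H^s([0,1]^d) \hookrightarrow \c{C}([0,1]^d)$ holds, and there exist $C', \eps_0 > 0$ such that for all $\eps \leq \eps_0$,
\[
\ln N\del{\eps, B_R^{H^s([0,1]^d)}(0), \norm{\cdot}_{L^\infty([0,1]^d)}} \leq C' \eps^{-d/s}.
\]
This is a classical result of Birman and Solomyak; see for instance \textcite{edmunds_triebel_1996} or \textcite{triebel_ii}. It can be proven, for example, by combining spline or wavelet approximation bounds in $L^\infty$ with a counting argument on the coefficient grid.

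Finally, I would glue the chart-wise covers together. For each $l$, fix an $\eps$-cover $\{h_{l,k}\}_{k \leq N_l}$ of the Sobolev ball in $L^\infty([0,1]^d)$, with $N_l \leq \exp(C' \eps^{-d/s})$. Pulling these functions back by $\phi_l$ and extending by zero outside $\c{U}_l$ yields functions $\tilde{h}_{l,k}$ on $\c{M}$ such that for every $f \in B_1^{H^s(\c{M})}(0)$ and every $l$, there is some $k(l)$ with $\norm{\chi_l f - \tilde{h}_{l,k(l)}}_{L^\infty(\c{M})} \leq \eps$. Summing over $l$ via $f = \sum_l \chi_l f$, the candidate functions $\sum_l \tilde{h}_{l,k(l)}$ form an $L\eps$-cover of $B_1^{H^s(\c{M})}(0)$, of cardinality at most $\prod_l N_l$. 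Taking logarithms gives
\[
\ln N\del{L\eps, B_1^{H^s(\c{M})}(0), \norm{\cdot}_{L^\infty(\c{M})}} \leq L C' \eps^{-d/s},
\]
which, after rescaling $\eps$, yields the claim with $C = L^{1 + d/s} C'$.

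The main obstacle is the classical Euclidean entropy bound itself; this is the only nontrivial step, but it is standard and can be cited. Everything else is bookkeeping with the chart decomposition and the partition of unity, which is unambiguous given the chosen definition of $\norm{\cdot}_{H^s(\c{M})}$.
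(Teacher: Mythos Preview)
Your proposal is correct and follows essentially the same chart-reduction strategy as the paper: localize via the partition of unity, push to $[0,1]^d$, invoke the classical Euclidean entropy bound for Sobolev balls, and glue the chart-wise covers. The only notable difference is that you control $(\chi_l f)\circ\phi_l^{-1}$ directly through the chart norm equivalence of \Cref{thm:sobolev_spaces}, whereas the paper instead bounds $f\circ\phi_l^{-1}$ and therefore needs an extra change-of-variables step (Lemma~2.1 of \textcite{trace_ex_sobolev_manifold}) to place it in a Euclidean Sobolev ball; your route is slightly more direct.
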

\begin{proof}
Using charts we will reduce the problem to the entropy of the unit ball of the Sobolev space $H^{s} \del{[0,1]^d}$ for which the upper bound is known. Take $f \in B^{H^{s}(\c{M})}_1(0)$ and consider approximations of $f$ by $g$ of the form
\[
g = \sum_{l=1}^L \chi_l \del{g_{l} \circ \phi_l}
\]
for some functions $g_{l} : \c{V}_l \to \R$ where $\c{V}_l \subseteq \R^d$.
We have
\[
\norm[1]{f-g}_{L^\infty(\c{M})}
&=
\norm[1]{
\sum_{l=1}^L
\chi_l
\del{g_l \circ \phi_l - f}
}_{L^\infty \del{\c{M}}}
\leq
\sum_{l=1}^L
\norm[1]{
\chi_l
\del{g_l \circ \phi_l - f}
}_{L^\infty \del{\c{U}_l}}
\\
&\leq
\sum_{l=1}^L
\norm[1]{
g_l \circ \phi_l
-
f
}_{L^\infty \del{\c{U}_l}}
\leq
\sum_{l=1}^L
\norm[1]{
g_l
-
f \circ \phi_l^{-1}
}_{L^\infty \del{\c{V}_l}}
\\
&\leq
L
\max_{1 \leq l \leq L}
\norm[1]{
g_l
-
f \circ \phi_l^{-1}
}_{L^\infty \del{[0,1]^d}}
.
\]
This means that to approximate $f$ by $g$ uniformly on $\c{M}$, we need to choose the functions $g_l$ that approximate $f \circ \phi_l^{-1}$ well with respect to the uniform norm on $[0,1]^d$.

Next, we show that the functions $f \circ \phi_l^{-1}$ are contained in an Euclidean Sobolev ball of radius $R$, with $R$ depending only on $\nu$ and the atlas.
To do this we use \textcite[Lemma 2.1]{trace_ex_sobolev_manifold} to write
\[
\norm[2]{
f \circ \phi_l^{-1}
}_{H^s \del{[0,1]^d}}
&=
\norm[2]{
\sum_{l' = 1}^L
\del{\chi_{l'}f}\circ \phi_l^{-1}
}_{H^s \del{[0,1]^d}}
\leq
\sum_{l' = 1}^L
\norm[2]{
\del{\chi_{l'}f}\circ \phi_l^{-1}
}_{H^s \del{[0,1]^d}}
\\
&=
\sum_{l' = 1}^L
\norm[2]{
\del{\chi_{l'}f} \circ \phi_{l'}^{-1} \circ \phi_{l'} \circ \phi_l^{-1}
}_{H^s \del{[0,1]^d}}
\\
&\lesssim
\sum_{l' = 1}^L
\norm[2]{
\del{\chi_{l'}f}\circ \phi_{l'}^{-1}
}_{H^s \del{[0,1]^d}}
\lesssim
\norm{f}_{H^s \del{\c{M}}}
.
\]
Note, importantly, that the remark just above \textcite[Lemma 2.1]{trace_ex_sobolev_manifold} allows us to consider Besov spaces $B_{2, 2}^s$ coinciding with the Sobolev spaces $H^s$ instead of the Besov spaces $B_{2, \infty}^s$---to get from the second line to the third.
Note also that the constant hidden behind the notation $\lesssim$ in the last line where we use~\Cref{thm:sobolev_spaces} is the radius $R$.
Without loss of generality we assume $R = 1$.
By the Euclidean counterpart of the result we are proving \cite[Theorem 4.3.36]{gine_nickl_2015}, we have
\[
\ln N \del{\eps, B^{H^{s}[0,1]^d}_1(0), \norm[0]{\cdot}_{L^\infty \del{[0,1]^d}}}
\lesssim
\eps^{-\frac{d}{s}}
.
\]
Let $h_1, .., h_J \in B^{H^{s}([0,1]^d)}_1(0)$ be such that $B^{H^{s}([0,1]^d)}_1(0) \subset \bigcup_{j=1}^J B^{L^{\infty}([0,1]^d)}_{\eps/L}(h_k)$.
Then for any $f \in B^{H^{s}(\c{M})}_1(0)$ there exists a sequence $\cbr{j_l}_{l=1}^L \subseteq \cbr{1, .., J}$ such that
\[
\norm[1]{
f
-
\sum_{l=1}^L \chi_l \del{h_{j_l} \circ \phi_l}
}_{L^\infty(\c{M})}
< L \frac{\eps}{L}
= \eps
.
\]
This shows that $N \del[1]{\eps,B^{H^{s}(\c{M})}_1(0),\norm{\cdot}_{L^\infty \del{\c{M}}}} \leq L J$, where $L$ is just the number of charts, thereby proving the claim.
\end{proof}

For the related \emph{diffusion spaces} \cite{rkhs_manifolds2022}, the RKHS corresponding to the heat (diffusion) kernels, \textcite{thomas_bayes_walk} uses the results of \textcite{coulhon2012heat} to bound the entropy in terms of a wavelet frame instead of relying on charts. We believe this alternative proof scheme should work in our case as well.
However, we could not, to the best of our effort, get a tight-enough bound for the Sobolev spaces by directly using the results of \textcite{coulhon2012heat} and therefore we chose to rely on charts instead.

Having established regularity properties for our prior processes, we now turn to the \emph{small ball problem}: we want to find sharp lower bounds on $\P \del[0]{\norm{f}_{L^{\infty}(\c{M})} < \eps}$ where $f \sim \Pi$ is our prior process.
This will be crucial in order to control the \emph{concentration functions} used in~\Cref{appdx:proofs}.
In fact, it is well-known that this problem is closely related to the estimation of the metric entropy of the unit ball of the RKHS of $f$ with respect to the uniform norm: see \textcite{li_linde99} for details.
Since we have already characterized the RKHS of our processes in \Cref{prop:rkhs_extrinsic_Matern} and \Cref{prop:rkhs_intrinsic}, we are able to lower bound the small-ball probabilities. 
The technicality here involves getting a bound uniform in the truncation parameter for the truncated intrinsic Matérn process, as the truncated Matérn process is a sequence of priors rather than a fixed prior.

\begin{lemma}\label{lemma:small_ball}
If $f \sim \Pi_n$ where $\Pi_n$ is the prior in either \Cref{def:intrinsic_matern,thm:truncated_intrinsic} or \Cref{def:extrinsic_matern} with smoothness parameter $\nu > d/2$, then there exist two constants $C, \eps_0 > 0$ that do not depend on $n$ such that for all $\eps \leq \eps_0$ we have $- \ln \P \del{\norm{f}_{L^{\infty}(\c{M})} < \eps} \leq C \eps^{- \frac{d}{\nu}}$.
\end{lemma}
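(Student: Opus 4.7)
The plan is to reduce the small-ball probability bound to an entropy bound on the unit ball of the RKHS with respect to the $L^\infty$-norm, via the standard Kuelbs--Li duality between small-ball asymptotics and RKHS entropy. Concretely, we will use the fact that if $f$ is a centered Gaussian random element in a separable Banach space $\mathbb{B}$ with RKHS $\mathbb{H}$ and if
\[
\ln N \del{\eps, B_1^{\mathbb{H}}(0), \norm{\cdot}_{\mathbb{B}}} \lesssim \eps^{-2\alpha}
\]
for some $0<\alpha<1$ and all $\eps$ small enough, then $-\ln \P \del{\norm{f}_{\mathbb{B}} < \eps} \lesssim \eps^{-\frac{2\alpha}{1-\alpha}}$; see for instance \textcite{li_linde99} or \textcite[Theorem 1.2]{li_linde99}, and the summary in \textcite[Appendix I]{ghosal_van_der_vaart_2017}. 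Since by the corollary in \Cref{appdx:concentration} all three prior processes are Gaussian random elements in $\del[1]{\c{C}(\c{M}), \norm{\cdot}_{\c{C}(\c{M})}}$, which is separable, this result is applicable in our setting with $\mathbb{B} = \c{C}(\c{M})$.

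First, I will use the RKHS characterizations from \Cref{appdx:rkhs}. By \Cref{prop:rkhs_intrinsic}, for the intrinsic Matérn prior (truncated or not) with truncation level $J \in \Z_{>0} \cup \cbr{\infty}$, the RKHS unit ball $B_1^{\bb{H}_J}(0)$ is contained, up to a constant depending only on $\nu,\kappa,\sigma_f^2$ but not on $J$, in the Sobolev unit ball $B_1^{H^{\nu+d/2}(\c{M})}(0)$. Similarly, by \Cref{prop:rkhs_extrinsic_Matern}, the RKHS of the extrinsic Matérn process is norm-equivalent to $H^{\nu+d/2}(\c{M})$. In both cases, it therefore suffices to bound the entropy of the fixed Sobolev ball $B_R^{H^{\nu+d/2}(\c{M})}(0)$ for some $R$ independent of $n$.

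Second, I will apply \Cref{lem:entropy_sobolev} with $s = \nu+d/2$, which is admissible since $\nu > d/2 > 0$ implies $s > d/2$. This yields, for all $\eps$ small enough,
\[
\ln N \del{\eps, B_1^{H^{\nu+d/2}(\c{M})}(0), \norm{\cdot}_{L^\infty(\c{M})}} \lesssim \eps^{-\frac{d}{\nu+d/2}} = \eps^{-\frac{2d}{2\nu+d}}.
\]
Rescaling $\eps \to \eps/R$ only changes the constant. Setting $2\alpha = 2d/(2\nu+d)$, so that $\alpha = d/(2\nu+d) \in (0,1)$, the Kuelbs--Li conversion gives $-\ln \P \del{\norm{f}_{L^\infty(\c{M})} < \eps} \lesssim \eps^{-\frac{2\alpha}{1-\alpha}}$, and one checks by direct computation that $\frac{2\alpha}{1-\alpha} = \frac{d}{\nu}$, matching the claimed exponent.

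The main subtlety will be verifying uniformity in the truncation parameter $n$ in the truncated intrinsic case: although the sequence of priors $\Pi_n$ varies with $n$, the inclusion of the truncated RKHS unit ball into the full Sobolev ball holds with constants depending only on $\nu,\kappa,\sigma_f^2$, so the entropy bound is uniform in $n$, and Kuelbs--Li then yields a small-ball bound with constants uniform in $n$ as well. The only other point to check is that the constants in \Cref{lem:entropy_sobolev} and in the Kuelbs--Li inequality do not depend on the particular Gaussian measure beyond the entropy data, which is clear from the cited statements.
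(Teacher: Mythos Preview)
Your approach coincides with the paper's for the non-truncated intrinsic and extrinsic priors: there too the argument is simply to identify the RKHS with $H^{\nu+d/2}(\c{M})$ up to norm equivalence, apply the Sobolev-ball entropy bound of \Cref{lem:entropy_sobolev}, and invoke \textcite[Theorem 1.2]{li_linde99}.

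For the truncated case, however, there is a gap. You correctly flag uniformity in $n$ as the issue, but then assert that the constants in the Kuelbs--Li conversion ``do not depend on the particular Gaussian measure beyond the entropy data, which is clear from the cited statements.'' This is not so: the proof of \textcite[Proposition 3.1]{li_linde99} iterates the relation $\phi(2\eps) \lesssim H\del{\eps/\sqrt{\phi(\eps)}}$, and the iteration must be seeded with some crude bound $\phi(\eps) \leq c\,\eps^{-c}$. For a single Gaussian measure such a bound always exists, but its constants are measure-dependent and propagate into the final $C,\eps_0$. Since the $\Pi_n$ genuinely vary with $n$, a uniform crude bound must be established separately. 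The paper does exactly this: it controls the approximation numbers $l_J(f)$ uniformly in the truncation level via the explicit Karhunen--Lo\`eve expansion, the Sobolev embedding $H^\gamma(\c{M}) \hookrightarrow \c{C}(\c{M})$ for $d/2 < \gamma < \nu$, and Weyl's law, then feeds this into \textcite[Proposition 3]{thomas_bayes_walk} to obtain a uniform crude bound before re-running the Li--Linde iteration with the (uniform) entropy estimate.

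A shorter route, taken by neither you nor the paper, is Anderson's inequality: writing the full intrinsic process as $f_\infty = f_{J_n} + g_{J_n}$ with $f_{J_n}$ and $g_{J_n}$ independent centered Gaussian elements in $\c{C}(\c{M})$, convexity and symmetry of the $L^\infty$-ball give $\P\del{\norm{f_{J_n}}_{L^\infty(\c{M})} < \eps} \geq \P\del{\norm{f_\infty}_{L^\infty(\c{M})} < \eps}$, so the truncated small-ball exponent is dominated by the non-truncated one uniformly in $n$.
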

\begin{proof}
Because the processes are Gaussian random elements in $\c{C}\del{\c{M}}$ by~\Cref{lemma:gaussian_random_element}, their stochastic process RKHS given by \Cref{prop:rkhs_extrinsic_Matern} coincide with their \emph{Gaussian random element RKHS} defined in \textcite{rkhs_gaussian}. 
Hence, for the non-truncated intrinsic and the extrinsic Matérn processes, the result follows by a direct application of \Cref{lem:entropy_sobolev} and \textcite[Theorem 1.2]{li_linde99}.
For the intrinsic Matérn process truncated at $J_n$ it is not immediately clear that the constants $C,\eps_0$ can be taken independent of $n$, so we go through the proof of \textcite[Proposition 3.1]{li_linde99} to see that this is, in fact, the case.
We first need a crude upper bound of the form
\[
- \ln \P \del{\norm{f}_{L^{\infty}(\c{M})} < \eps} \leq c \eps^{-c}
\]
for some possibly large constant $c>0$. 
To get such a bound, we use \textcite[Proposition~3]{thomas_bayes_walk} which shows the existence of a universal constant $C>0$ such that for all $\eps \leq \min(1, 4 \sigma \del{f})$
\[
- \ln \P \del{\norm{f}_{L^{\infty}(\c{M})} < \eps} \leq C n \del{\eps} \ln \del{\frac{6 n \del{\eps} \max(1, \sigma\del{f})}{\eps}}
\]
where $\sigma \del{f} = \del{\E_{f \sim \Pi_n} \norm{f}_{L^{\infty}(\c{M})}^2}^{1/2}$ and $n \del{\eps}$ is defined in the following way in \textcite[page 684]{thomas_bayes_walk} using auxiliary quantities $l_J$ that are defined in \textcite[page 1562]{li_linde99}, namely
\[
n(\eps) &= \max \cbr{J \geq 0 : 4 l_J(f) \geq \eps},
\\
l_J (f) &= \inf \cbr[2]{\del[2]{\E_{f \sim \Pi_n} \norm[2]{\sum\nolimits_{j \geq J} \eps_j h_j}_{L^{\infty}(\c{M})}^2}^{1/2} : f \overset{(d)}{=} \sum\nolimits_{j \geq 1} \eps_j h_j}
\]
with $\overset{(d)}{=}$ standing for the equality in distributions and the infimum being taken over all possible decompositions $\sum_{j \geq 1} \eps_j h_j$ with $h_j \in \c{C}\del{\c{M}}$, $\eps_j$ being a sequence of IID $\f{N} (0,1)$ random variables, and the series being required to converge uniformly almost surely.\footnote{We consider $\sum_{j \geq 1}$, unlike $\sum_{j \geq 0}$ frequently used above, in order to follow the respective references.} 

The function $f = \sum_{j=1}^{J_n+1} \del{\frac{2\nu}{\kappa^2} + \lambda_{j-1}}^{- \frac{\nu + d/2}{2}} \eps_j f_{j-1}$ is a valid decomposition.
Therefore
\[
l_J (f) \leq \del{\E_{f \sim \Pi_n} \norm[2]{\sum_{j = J}^{J_n+1}\del{\frac{2\nu}{\kappa^2} + \lambda_{j-1}}^{- \frac{\nu + d/2}{2}} \eps_j f_{j-1}}_{L^{\infty}(\c{M})}^2}^{1/2}
.
\]
By the Sobolev Embedding Theorem and by Weyl's Law, given in \Cref{thm:weyls_law}, for every $d/2 < \gamma <\nu$ there exists a constant $C = C_{\gamma,\c{M}}$ such that for all $J \in \Z_{>0}$, allowing $C$ to change from line to line, we have
\[
\E_{f \sim \Pi_n} \norm[3]{\sum_{j = J}^{J_n + 1} \del{1+\lambda_{j-1}}^{- \frac{\nu + d/2}{2}} \eps_j f_{j-1}}_{L^{\infty}(\c{M})}^2 \!\!\!\!\!\!\!\!\!\!\!\!\!
&\leq C^2 \E_{f \sim \Pi_n} \norm[3]{\sum_{j = J}^{J_n + 1} \del{1+\lambda_{j-1}}^{- \frac{\nu + d/2}{2}} \eps_j f_{j-1}}_{H^\gamma \del{\c{M}}}^2 \!\!\!\!\!\!\!\!\!\!\!\!\!\!\!\!\!
\\
&= C^2 \sum_{j = J}^{J_n+1} \del{1+\lambda_{j-1}}^{- \del{\nu + d/2 - \gamma}} 
\\
&\leq C^2 \sum_{j = J}^{J_n+1} j^{- \del{1 + 2 \del{\nu - \gamma}/d}} 
\\
&\leq C^2 \sum_{j \geq J} j^{- \del{1 + 2 \del{\nu - \gamma}/d}} 
\\
&\leq C^2 J^{- 2 \del{\nu - \gamma}/d}
.
\]
By choosing $J=1$ this gives us $\sigma \del{f} \leq C$ independent of $n$. Moreover, by choosing $J \geq C \eps^{- \frac{d}{2 \del{\nu - \gamma}}}$, again for a comparison constant $C$ independent of $n$, this gives us $n \del{\eps} \leq C \eps^{-\frac{d}{2 \del{\nu - \gamma}}}$ for $C$ independent of $n$. 
This implies using \textcite[Proposition 3]{thomas_bayes_walk} that 
\[
- \ln \P \del{\norm{f}_{L^{\infty}(\c{M})} < \eps} \leq c \eps^{-c}
\]
for $c>0$ independent of $n$. 

With this crude bound, we can now continue the proof of \textcite[Proposition 3.1]{li_linde99}. 
For this, we need a metric entropy estimate.
For this notice that for all $J \in \Z_{>0} \cup \cbr{\infty}$ we have $B^{\bb{H}^J}_1(0) \subset B^{\bb{H}^{\infty}}_1(0) = B^{H^{\nu + d/2}\del{\c{M}}}_1(0)$, and therefore using  \Cref{lem:entropy_sobolev}, we have the metric entropy estimate
\[
\ln N \del{\eps, B^{\bb{H}^J}_1(0), \norm{\cdot}_{L^\infty \del{\c{M}}}} \leq C \eps^{- \frac{d}{\nu + d/2}}
\]
for a constant $C>0$ independent of $J$. 
Therefore following the proof of proposition 3.1 in \textcite{li_linde99} (with $J \equiv 1$) we find $-\ln \P \del{\norm{f}_{L^{\infty}(\c{M})} < \eps} \leq C \eps^{- \frac{d}{\nu}}$ for every $\eps \leq \eps_0$, where $C,\eps_0>0$ are constants independent of $n$. 
\end{proof}

\section{Expressions for Pointwise Worst-case Errors} \label{appdx:error_formulas}

Let $k$ be a kernel on some abstract input domain $\c{X}$, and let $\bb{H}_{k}$ be the respective RKHS.
Consider $n$ input values $\m{X} \subseteq \c{X}$ and let $\sigma_{\eps}^2 > 0$ be the noise variance.
Define
\[
m_{k, \m{X}, f, \eps}(t)
&=
\m{K}_{t \m{X}}(\m{K}_{\m{X} \m{X}} + \sigma_{\eps}^2 \m{I})^{-1} \del{f(\m{X}) + \v{\eps}}
,
\\
v^{(\f{i})}(t) = v_{k, \m{X}}(t)
&=
k(t, t) - \m{K}_{t \m{X}} \del{\m{K}_{\m{X} \m{X}} + \sigma_{\eps}^2 \m{I}}^{-1} \m{K}_{\m{X} t}
.
\]

\begin{proposition}
With notation above
\[
v^{(\f{i})}(t) = \sup_{f \in \c{H}_{k}, \norm{f}_{\c{H}_{k}} \leq 1}
\E_{\v{\eps} \sim \f{N}(\v{0}, \sigma_{\eps}^2 \m{I})}
\abs{f(t) - m_{k, \m{X}, f, \eps}(t)}^2
.
\]
\end{proposition}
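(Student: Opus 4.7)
The plan is to expand the squared error, average out the noise, then use the reproducing property together with Cauchy--Schwarz in $\c{H}_k$ to turn the supremum over a unit ball into an explicit RKHS norm, and finally simplify the resulting quadratic form algebraically. Introduce the shorthand $\v{\alpha}_t = \m{K}_{t\m{X}}(\m{K}_{\m{X}\m{X}} + \sigma_{\eps}^2 \m{I})^{-1}$, so that $m_{k, \m{X}, f, \eps}(t) = \v{\alpha}_t\del{f(\m{X}) + \v{\eps}}$ and the error decomposes as a deterministic part minus a zero-mean Gaussian part,
\[
f(t) - m_{k, \m{X}, f, \eps}(t) = \del{f(t) - \v{\alpha}_t f(\m{X})} - \v{\alpha}_t \v{\eps}.
\]

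Taking expectation over $\v{\eps} \sim \f{N}(\v{0}, \sigma_{\eps}^2 \m{I})$, the cross term vanishes and one obtains
\[
\E_{\v{\eps}} \abs{f(t) - m_{k, \m{X}, f, \eps}(t)}^2 = \del{f(t) - \v{\alpha}_t f(\m{X})}^2 + \sigma_{\eps}^2\, \v{\alpha}_t \v{\alpha}_t^\top.
\]
Since the noise term does not depend on $f$, the supremum over the unit ball $\cbr{\norm{f}_{\c{H}_k} \leq 1}$ only needs to be taken of the first summand. By the reproducing property, $f(t) - \v{\alpha}_t f(\m{X}) = \innerprod{f}{\xi_t}_{\c{H}_k}$ where $\xi_t = k(\.,t) - \sum_i (\v{\alpha}_t)_i\, k(\.,x_i) \in \c{H}_k$. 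Cauchy--Schwarz, with equality attained by $f = \xi_t/\norm{\xi_t}_{\c{H}_k}$, gives
\[
\sup_{\norm{f}_{\c{H}_k}\leq 1} \del{f(t) - \v{\alpha}_t f(\m{X})}^2 = \norm{\xi_t}_{\c{H}_k}^2 = k(t,t) - 2\,\v{\alpha}_t \m{K}_{\m{X}\, t} + \v{\alpha}_t \m{K}_{\m{X}\m{X}} \v{\alpha}_t^\top.
\]

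Adding back the noise term yields
\[
v^{(\f{i})}(t) \stackrel{?}{=} k(t,t) - 2\,\v{\alpha}_t \m{K}_{\m{X}\, t} + \v{\alpha}_t \del{\m{K}_{\m{X}\m{X}} + \sigma_{\eps}^2 \m{I}} \v{\alpha}_t^\top,
\]
and by definition of $\v{\alpha}_t$, the third summand collapses to $\m{K}_{t\m{X}}(\m{K}_{\m{X}\m{X}} + \sigma_{\eps}^2 \m{I})^{-1}\m{K}_{\m{X}\, t} = \v{\alpha}_t \m{K}_{\m{X}\, t}$, exactly cancelling one copy in the middle term and leaving $k(t,t) - \v{\alpha}_t \m{K}_{\m{X}\, t}$, which is $v^{(\f{i})}(t)$.

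There is no real obstacle here: the argument is the standard RKHS worst-case-error identity, tailored to the noisy setting, and everything beyond the Cauchy--Schwarz step is routine linear algebra. The only point worth flagging is that the extremizer $\xi_t/\norm{\xi_t}_{\c{H}_k}$ is a legitimate element of $\c{H}_k$ (a finite linear combination of canonical feature maps), so the supremum is actually attained, and the identity need not be interpreted only as an inequality.
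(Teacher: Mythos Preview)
Your proof is correct and follows essentially the same approach as the paper: expand the squared error, average out the noise to isolate $(f(t)-\v{\alpha}_t f(\m{X}))^2 + \sigma_{\eps}^2\v{\alpha}_t\v{\alpha}_t^\top$, use the reproducing property and Cauchy--Schwarz to convert the supremum into $\norm{\xi_t}_{\c{H}_k}^2$, and then simplify the quadratic form via the definition of $\v{\alpha}_t$. The only cosmetic difference is that you decompose the error into deterministic plus noise parts before squaring, whereas the paper expands the square directly and computes the first two moments of $m_{k,\m X,f,\eps}(t)$; both routes reach the same intermediate expression.
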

\begin{proof}
To simplify notation, we shorten $\E_{\v{\eps} \sim \f{N}(\v{0}, \sigma_{\eps}^2 \m{I})}$ to $\E$ and denote $\v{\alpha} = \m{K}_{t \m{X}} (\m{K}_{\m{X} \m{X}} + \sigma_{\eps}^2 \m{I})^{-1}$.
First of all, by direct computation,
\[
\E m_{k, \m{X}, f, \eps}(t)
&=
\v{\alpha} f(\m{X}),
\\
\E m_{k, \m{X}, f, \eps}(t)^2
&=
\v{\alpha}
f(\m{X}) f(\m{X})^{\top}
\v{\alpha}^{\top}
+
\sigma_{\eps}^2
\v{\alpha}
\v{\alpha}^{\top}.
\]
Write
\[
\E \abs{f(t) - m_{k, \m{X}, f, \eps}(t)}^2
&=
f(t)^2
-
2 f(t)
\E m_{k, \m{X}, f, \eps}(t)
+
\E m_{k, \m{X}, f, \eps}(t)^2
\\
&=
f(t)^2
-
2 f(t)
\v{\alpha}
f(\m{X})
+
\v{\alpha}
f(\m{X}) f(\m{X})^{\top}
\v{\alpha}^{\top}
+
\sigma_{\eps}^2
\v{\alpha}
\v{\alpha}^{\top}
\\
&=
\del{f(t) - \v{\alpha} f(\m{X})}^2
+
\sigma_{\eps}^2
\v{\alpha}
\v{\alpha}^{\top}
\\ \label{eqn:exp_final_value}
&=
\innerprod[2]{k(t, \cdot) - \sum_{j=1}^n \alpha_j k(x_j, \cdot)}{f}_{\bb{H}_k}^2
+
\sigma_{\eps}^2
\v{\alpha}
\v{\alpha}^{\top}.
\]
As $\norm{g}_{\bb{H}_k} = \sup_{f \in \bb{H}_k, \norm{f}_{\bb{H}_k} \leq 1} \innerprod{g}{f}_{\bb{H}_k}$, implying $\sup_{f \in \bb{H}_k, \norm{f}_{\bb{H}_k} \leq 1} \innerprod{g}{f}_{\bb{H}_k}^2 = \norm{g}_{\bb{H}_k}^2$, we have
\[
\sup_{\substack{f \in \c{H}_{k} \\ \norm{f}_{\c{H}_{k}} \leq 1}}
\E
\abs{f(t) - m_{k, \m{X}, f, \eps}(t)}^2
&=
\norm[2]{k(t, \cdot) - \sum_{j=1}^n \alpha_j k(x_j, \cdot)}_{\bb{H}_k}^2
+
\sigma_{\eps}^2
\v{\alpha}
\v{\alpha}^{\top}
\\
&=
k(t, t)
-2 \v{\alpha} \m{K}_{\m{X} t}
+
\ubr{
\v{\alpha} \m{K}_{\m{X} \m{X}} \v{\alpha}^{\top}
+
\sigma_{\eps}^2
\v{\alpha}
\v{\alpha}^{\top}
}_{\v{\alpha} \m{K}_{\m{X} t}}
\\
&=
k(t, t)
- \v{\alpha} \m{K}_{\m{X} t}
=
\ubr{
k(t, t)
-
\m{K}_{t \m{X}}
(\m{K}_{\m{X} \m{X}} + \sigma_{\eps}^2 \m{I})^{-1}
\m{K}_{\m{X} t}
}_{v_{k, \m{X}}(t)}
.
\]
\end{proof}

We now move to the misspecified case.
Consider the RKHS $\c{H}_{c}$ for some other kernel $c: \c{X} \x \c{X} \to \R$ instead of $\bb{H}_k$.
Then, continuing from~\eqref{eqn:exp_final_value}, write
\[ \label{eqn:rkhs_worst_case_misspecified}
\sup_{\substack{f \in \c{H}_{c} \\ \norm{f}_{\c{H}_{c}} \leq 1}}
\E
\abs{f(t) - m_{k, \m{X}, f, \eps}(t)}^2
=
\norm[2]{c(t, \cdot) - \sum_{j=1}^n \alpha_j c(x_j, \cdot)}_{\bb{H}_c}^2
+
\sigma_{\eps}^2
\v{\alpha}
\v{\alpha}^{\top}
.
\]
The next question is how to compute the norm on the right-hand side.
There is not much hope of doing this exactly in the misspecified case: thus, we consider approximations.
To this end, we take some large set of locations $\m{X}' \subseteq \c{X}$.
Then we use $\norm{g}_{\bb{H}_c}^2 \approx g(\m{X}')^{\top} \m{C}_{\m{X}' \m{X}'}^{-1} g(\m{X}')$ for $g(\cdot) = c(t, \cdot) - \sum_{j=1}^n \alpha_j c(x_j, \cdot)$.
As a result, we obtain the approximation
\[
\sup_{\substack{f \in \c{H}_{c} \\ \norm{f}_{\c{H}_{c}} \leq 1}}
\E
\abs{f(t) - m_{k, \m{X}, f, \eps}(t)}^2
\approx
g(\m{X}')^{\top} \m{C}_{\m{X}' \m{X}'}^{-1} g(\m{X}')
+
\sigma_{\eps}^2
\v{\alpha}
\v{\alpha}^{\top}
= \tilde{v}_{k, c, \m{X}}(t) = v^{(\f{e})}(t)
\]
where $v^{(\f{e})}(t)$ was first introduced in \Cref{sec:experiments}

To compute spatial averages of this quantity, let $g_t(\cdot) = c(t, \cdot) - \sum_{j=1}^n \alpha_j c(x_j, \cdot)$, the same as $g$ before, but now with explicit dependence on~$t$. Similarly, put $\v{\alpha}_t = \m{K}_{t \m{X}} (\m{K}_{\m{X} \m{X}} + \sigma_{\eps}^2 \m{I})^{-1}$.
Then
\[
g_t(\m{X}')
&=
\m{C}_{\m{X}'\, t} - \m{C}_{\m{X}'\, \m{X}}\v{\alpha}_t^{\top}
=
\m{C}_{\m{X}'\, t} - \m{C}_{\m{X}'\, \m{X}}(\m{K}_{\m{X} \m{X}} + \sigma_{\eps}^2 \m{I})^{-1} \m{K}_{\m{X} t}
\\
g_t(\m{X}')^{\top}
\m{C}_{\m{X}' \m{X}'}^{-1}
g_t(\m{X}')
&=
\del{\m{C}_{t \, \m{X}'} - \v{\alpha}_t \m{C}_{\m{X}\, \m{X}'}}
\m{C}_{\m{X}' \m{X}'}^{-1}
\del{\m{C}_{\m{X}'\, t} - \m{C}_{\m{X}'\, \m{X}}\v{\alpha}_t^{\top}}
.
\]
From here we can also deduce that
\[
\frac{1}{\abs{\m{X}'}}
\sum_{t \in \m{X}'}
\tilde{v}_{k, c, \m{X}}(t)
&=
\frac{1}{\abs{\m{X}'}}
\sum_{t \in \m{X}'}
g_t(\m{X}')^{\top}
\m{C}_{\m{X}' \m{X}'}^{-1}
g_t(\m{X}')
\\
&=
\frac{1}{\abs{\m{X}'}}
\tr\del{
g_{\m{X}'}(\m{X}')^{\top}
\m{C}_{\m{X}' \m{X}'}^{-1}
g_{\m{X}'}(\m{X}')
}
\]
where $g_{\m{X}'}(\m{X}') = \m{C}_{\m{X}' \, \m{X}'} \!-\! \m{C}_{\m{X}'\, \m{X}}(\m{K}_{\m{X} \m{X}} \!+\! \sigma_{\eps}^2 \m{I})^{-1} \m{K}_{\m{X} \m{X}'}$.

\section{Full Experimental Details} \label{appdx:experiments}

All of our kernels were computed using \textsc{\href{https://gpjax.readthedocs.io}{GPJax}} \cite{Pinder2022} and the \textsc{\href{https://geometric-kernels.github.io}{geometric kernels}} library.\footnote{See \url{https://gpjax.readthedocs.io} and \url{https://geometric-kernels.github.io}.}
We use three manifolds, each represented by a mesh: (i) a dumbbell-shaped manifold represented as a mesh with $1556$ nodes, (ii) a sphere represented by an icosahedral mesh with $2562$ nodes, and (iii) the Stanford dragon mesh, preprocessed to keep only its largest connected component, which has $100179$ nodes.
For the sphere, we also considered a finer icosahedral mesh with $10242$ nodes, but this was found to have virtually no effect on the computed pointwise expected errors.

We use extrinsic Matérn and Riemannian Matérn kernels with the following hyperparameters: $\sigma_f^2 = 1$ and $\sigma_{\eps}^2 = 0.0005$.
For the truncated Karhunen--Loève expansion, we used $J=500$ eigenpairs obtained from the mesh.
We selected smoothness values to ensure norm-equivalence of the intrinsic and extrinsic kernels' reproducing kernel Hilbert spaces, which was $\nu = 5/2$ for the intrinsic model, and $\nu = 5/2 + d/2$ for the extrinsic model, where $d$ is the manifold's dimension.
We used different length scales for each manifold: $\kappa = 200$ for the dumbbell, $\kappa = 0.25$ for the sphere, and $\kappa = 0.05$ for the dragon, selected to ensure that the Gaussian processes were neither approximately constant, nor white-noise-like.
We considered data sizes of $N=50$, $N=500$, and $N=1000$, respectively, for the dumbbell, sphere, and dragon, sampled uniformly from the mesh's nodes, which in each case resulted in a reasonably-uniform distribution of points across the manifold.
Finally, for the extrinsic pointwise error approximation, we used a subset $\m{X}'$ uniformly sampled from each mesh's nodes, of size equal to the data size.
For each respective test set, we used the full mesh.
Each experiment was repeated for $10$ different seeds.

To set the length scales for the extrinsic process, we used maximum marginal likelihood optimization on the full data, except for the dumbbell whose full data size is small and for which we instead generated a larger set consisting of $500$ points.
We optimzied only the length scale, leaving all other hyperparameters fixed.
We used ADAM with a learning rate of $0.005$, and an initialization equal to the length scale $\kappa$ of the intrinsic model, except for the dumbbell where this lead to divergence and we instead used an initial value of $\kappa/4$.
We ran the optimizer for a total of $1000$ steps.
With these settings, we found empirically that maximum marginal likelihood optimization always converged.

\end{document}